\def\eqref#1{equation~\ref{#1}}
\def\1{\bm{1}}
\def\rve{{\mathbf{e}}}
\def\rvr{{\mathbf{r}}}
\def\vb{{\bm{b}}}
\def\vc{{\bm{c}}}
\def\vk{{\bm{k}}}
\def\vo{{\bm{o}}}
\def\vp{{\bm{p}}}
\def\vq{{\bm{q}}}
\def\vs{{\bm{s}}}
\def\vu{{\bm{u}}}
\def\vv{{\bm{v}}}
\def\vx{{\bm{x}}}
\def\vz{{\bm{z}}}
\def\mA{{\bm{A}}}
\def\mH{{\bm{H}}}
\def\mI{{\bm{I}}}
\def\mK{{\bm{K}}}
\def\mM{{\bm{M}}}
\def\mP{{\bm{P}}}
\def\mQ{{\bm{Q}}}
\def\mV{{\bm{V}}}
\def\mW{{\bm{W}}}
\def\mX{{\bm{X}}}
\DeclareMathAlphabet{\mathsfit}{\encodingdefault}{\sfdefault}{m}{sl}
\SetMathAlphabet{\mathsfit}{bold}{\encodingdefault}{\sfdefault}{bx}{n}
\def\gB{{\mathcal{B}}}
\def\gE{{\mathcal{E}}}
\def\gL{{\mathcal{L}}}
\def\gM{{\mathcal{M}}}
\def\gS{{\mathcal{S}}}
\def\gV{{\mathcal{V}}}
\def\sN{{\mathbb{N}}}
\def\sR{{\mathbb{R}}}
\def\sS{{\mathbb{S}}}
\newcommand{\ind}{\boldsymbol{1}}
\newcommand{\tok}[1]{\texttt{#1}}
\newcommand{\op}[1]{\(\operatorname{#1}\)}
\newtheorem{theorem}{Theorem}[section]
\newtheorem{proposition}[theorem]{Proposition}
\newtheorem{lemma}[theorem]{Lemma}
\newtheorem{corollary}[theorem]{Corollary}
\theoremstyle{definition}
\newtheorem{definition}[theorem]{Definition}
\newtheorem{assumption}[theorem]{Assumption}
  \theoremstyle{remark}
\definecolor{bg}{rgb}{0.95, 0.95, 0.95}
\let\cref\Cref
\crefname{theorem}{theorem}{theorems}
\Crefname{theorem}{Theorem}{Theorems}
\crefname{proposition}{proposition}{propositions}
\Crefname{proposition}{Proposition}{Propositions}
\crefname{lemma}{lemma}{lemmas}
\Crefname{lemma}{Lemma}{Lemmas}
\crefname{corollary}{corollary}{corollaries}
\Crefname{corollary}{Corollary}{Corollaries}
\crefname{fact}{fact}{facts}
\Crefname{fact}{Fact}{Facts}
\crefname{definition}{definition}{definitions}
\Crefname{definition}{Definition}{Definitions}
\crefname{assumption}{assumption}{assumptions}
\Crefname{assumption}{Assumption}{Assumptions}
\crefname{example}{example}{examples}
\Crefname{example}{Example}{Examples}
\def\enf{\ensuremath{E_\textnormal{not-false}}}
\def\eass{\ensuremath{E_\textnormal{assigned}}}
\def\demb{\ensuremath{d_\textnormal{emb}}}
\def\dh{\ensuremath{d_\textnormal{h}}}
\newcommand{\cname}{\texttt{PARAT}\xspace}
\newcommand{\cd}[1]{\texttt{#1}}
\def\attn{\operatorname{Att}}
\def\mha{\operatorname{MHA}}
\def\mlp{\operatorname{MLP}}
\def\emb{\operatorname{Emb}}
\Crefname{algocf}{Algorithm}{Algorithms}
\title{Can Transformers Reason Logically? A Study in SAT Solving}
\author{
  Leyan Pan$^{1}$,
  Vijay Ganesh$^{1,}$\thanks{Equal contribution.},
  Jacob Abernethy$^{1,2,}$\footnotemark[1],
  Chris Esposo$^{1}$,
  Wenke Lee$^{1}$ \\
  $^{1}$Georgia Institute of Technology, Atlanta, GA 30332, USA \quad
  $^{2}$Google Research \\
  \texttt{\{leyanpan, vganesh, prof, cesposo3\}@gatech.edu} \\
  \texttt{wenke@cc.gatech.edu}
}
\begin{document}

\maketitle

\begin{abstract}

% \TODO{Trace simulation of algorithms on a high level: Trace equivalence between run of DPLL algorithm and CoT. For "certain types of" algorithms, we can simulate the trace of the algorithm at a more abstract level much more efficiently.}

% \TODO{Trace simulation of algorithms on a high level: Trace equivalence between run of DPLL algorithm and CoT. For "certain types of" algorithms, we can simulate the trace of the algorithm at a more abstract level much more efficiently.}

% The empirical success of the latest Transformer-based Large Language Models (LLMs) with Chain-of-Thought (CoT) techniques in tasks that involve logical, mathematical, and commonsense reasoning demonstrates the strong potential of such methods for complex reasoning. However, LLMs can make catastrophic logical reasoning errors. The question as to whether LLMs can actually reason remains hotly contested.
% It is clear that we lack a deep understanding of the mechanisms and capabilities of CoT Reasoning.

We formally study the logical reasoning capabilities of decoder-only Transformers in the context of the boolean satisfiability (SAT) problem. 
First, we prove by construction that decoder-only Transformers can decide 3-SAT, in a non-uniform model of computation, using backtracking and deduction via Chain-of-Thought (CoT).
%We prove its correctness by showing trace equivalence to the well-known DPLL SAT-solving algorithm. 
Second, we implement our construction as a PyTorch model with a tool (\cname) that we designed to empirically demonstrate its correctness and investigate its properties.
Third, rather than \textit{programming} a transformer to reason, we evaluate empirically whether it can be \textit{trained} to do so by learning directly from algorithmic traces (``reasoning paths'') from our theoretical construction. The trained models demonstrate strong out-of-distribution generalization on problem sizes seen during training but has limited length generalization, which is consistent with the implications of our theoretical result.

\end{abstract}
\section{Introduction}

Transformer-based large language models (LLMs,~\citet{transformer}) have demonstrated strong performance on tasks that seem to demand complex reasoning, especially with Chain-of-Thought (CoT, \citet{cot, o1systemcard, deepseekai2025deepseekr1incentivizingreasoningcapability}). However, they often face challenges in reliable multi-step logical reasoning, hallucinating logically flawed or factually incorrect conclusions. Consequently, many researchers reject the idea that LLMs can reason~\cite{llmcantreason}, and researchers continue to disagree on the precise definition of ``reasoning" in the context of LLMs. Furthermore, there is little understanding of the fundamental limitations of the reasoning capabilities of Transformer models.

This paper focuses on the {\it deductive logical} reasoning capability of the Transformer model in a restricted but simple and mathematically precise setting, namely, the Boolean satisfiability problem (SAT,~\citet{cook71}). We view deductive reasoning as the process of systematically drawing valid inferences from existing premises and assumptions. Boolean SAT solving captures the essence of deductive logical reasoning because: 1) Boolean logic lies as the foundation of all logical reasoning, and 2) many modern SAT solvers are inherently formal deductive systems that implement the resolution proof system. Its NP-Completeness also necessitates multiple rounds of trial and error, which is critical for solving complex problems.
% , 3) its NP-Hardness requires multiple rounds of trial and error.

% SAT problems provide an excellent starting point for studying the reasoning ability of LLMs given that (a) natural language often encodes Boolean logic, and (b) we already have many useful algorithms that implement logical deduction to solve SAT problems~\citep{satbook}. Notably, notwithstanding the NP-completeness of SAT, humans implicitly solve simple boolean satisfaction problems in their daily lives; scheduling a multi-person meeting across time zones, for example.

% In this work we aim to rigorously investigate Transformers' multi-step reasoning and backtracking capability in solving formal logical reasoning problems, and 
We prove by construction that decoder-only Transformers can decide 3-SAT instances with CoT (in a non-uniform computational setting):
\begin{theorem}[Informal version of \cref{thm:sat_search}]
    \label{thm:sat_search_informal}
    For any $p, c \in \sN^+$, there exists a decoder-only Transformer with $O(p^2)$ parameters that can decide all 3-SAT instances of at most $p$ variables and $c$ clauses using Chain-of-Thought.
\end{theorem}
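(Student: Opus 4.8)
The plan is to exhibit, for each pair $(p,c)$, a concrete decoder-only Transformer that simulates a backtracking search procedure (a DPLL-style algorithm with unit propagation) one inference step at a time, emitting its intermediate state as Chain-of-Thought tokens. First I would fix the \emph{trace format}: the prompt encodes the clause set (each clause as a fixed-width block of literal tokens, padded to arity $3$), and the CoT continuation is a sequence of \emph{events} --- a decision literal, an implied literal produced by unit propagation, a conflict marker, or a backtrack marker followed by the flipped literal. The key invariant is that the current partial assignment is recoverable from the CoT prefix by, for each variable, attending to its most recent occurrence and reading whether it is currently ``active'' (i.e.\ not yet undone by an intervening backtrack). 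Since the construction may depend on $p$ and $c$ (non-uniform model), I can hard-code the vocabulary, the fixed prompt layout, and the positional-encoding scheme.

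Next I would implement the algorithm's primitive operations with a constant number of attention and MLP layers, each of width $O(p)$: (i) \emph{state reconstruction} --- one attention head per polarity recovers, at the current position, the active assignment vector $\va \in \{-1,0,1\}^p$ by a ``most-recent-match'' near-hard-attention pattern keyed on variable index and masked by the backtrack bookkeeping; (ii) \emph{clause evaluation and deduction} --- attending from the current position over the $c$ clause blocks, an MLP computes for each clause whether it is satisfied, falsified, or unit under $\va$, using only $O(p)$-dimensional intermediate activations (the clause data sits in the residual stream of the prompt tokens, so no per-clause parameters are needed), which detects a conflict or extracts an implied literal; (iii) \emph{decision} --- if there is no unit clause and no conflict, pick the least-index unassigned variable and branch positively; (iv) \emph{backtracking} --- on a conflict, attend to the most recent decision literal whose ``flipped'' bit is unset, emit a backtrack marker and its negation. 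Each of these is expressible because the needed quantities (most-recent match, argmin of an index over an indicator mask, a Boolean combination over a bounded-arity clause) are exactly what one layer of softmax attention plus a ReLU MLP can approximate to arbitrary precision, and the non-uniform freedom lets me take the softmax inverse temperature large enough that the simulation is \emph{exact} over the finite set of reachable configurations.

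For the parameter count, the embedding dimension is $\demb = O(p)$ (one coordinate per variable for the assignment, plus a constant number of bookkeeping and positional channels), the number of heads and layers is $O(1)$, and every weight matrix has size $O(\demb)\times O(\demb) = O(p^2)$; crucially nothing scales with $c$, since clauses enter only through the input sequence and are aggregated by attention rather than stored in parameters. Correctness then follows by induction on inference steps: assuming the CoT prefix is a faithful trace of the DPLL run so far, each forward pass appends exactly the next event the algorithm would produce; termination of DPLL (the search tree has depth $\le p$) yields a finite trace ending in an explicit \texttt{SAT}/\texttt{UNSAT} token, which greedy decoding reproduces. The context length is bounded by the (finite, for fixed $p,c$) trace length, so the statement is a clean non-uniform one.

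The step I expect to be the main obstacle is (iv) together with the state-reconstruction invariant: faithfully emulating the backtracking \emph{stack} with attention --- in particular, after popping to some decision level $k$, ensuring that all literals assigned at levels $>k$ are seen as unassigned by the clause-evaluation step, and that a subsequent re-decision of the same variable is not ``shadowed'' by its stale earlier occurrence. Designing the positional/bookkeeping encoding so that ``most recent not-yet-undone assignment of variable $v$'' is answerable by a single near-hard-attention query --- rather than by a number of layers that grows with the backtracking depth --- is the delicate part, and is where most of the construction's care (and the value of the \cname{} implementation as an empirical sanity check) will be concentrated. A secondary technical point is making all the soft-attention approximations simultaneously exact on the reachable state space, which again exploits the finite-$p$, finite-$c$ setting.
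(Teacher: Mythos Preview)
Your high-level plan --- simulate DPLL with CoT, recover the partial assignment at each step, evaluate all clauses in parallel via attention, argue $O(p)$ embedding and $O(1)$ layers --- matches the paper. The divergence is exactly where you flag it: state reconstruction across backtracks.

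The paper does \emph{not} try to answer ``most recent not-yet-undone assignment of $v$'' with a single attention query. Instead, after every conflict the model emits a separator token \texttt{[BT]} and then \emph{re-copies} the surviving prefix of the previous state token by token before appending the negated decision literal. Concretely, the construction keeps track of $p_i^{\text{sep}-}$ (the second-to-last separator) and $d_i^{\text{sep}}$ (offset into the current segment), and an induction-style copy head reads position $p_i^{\text{sep}-}+d_i^{\text{sep}}$ to replay the previous segment up to its last \texttt{D}. The payoff is that the current partial assignment is always exactly ``the literal tokens since the last separator,'' recoverable by one \textsc{Mean} head that sums one-hot embeddings over that segment (Layer~4, Task~1). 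There is no stack to emulate: the stack is materialised in the CoT itself, at the cost of re-emitting $O(p)$ tokens per backtrack (hence the $p\cdot 2^{p+1}$ CoT bound rather than $2^p$).

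A second, smaller difference: your clause evaluation is ``attend to clause blocks, then MLP decides satisfied/falsified/unit.'' The paper's \cref{lemma:vec_ded,lemma:parallel} exploit the max structure of saturated attention more directly --- e.g.\ conflict detection is a single head whose query is $\enf(A)$, keys are $-E(C_i)$, and the head returns $1$ iff $\min_i E(C_i)\cdot\enf(A)=0$; unit propagation is one head plus one MLP. This is what lets the layer count stay at a fixed constant ($L=7$) rather than ``$O(1)$ in principle.''

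So your proposal is on the right track but leaves the hard step open; the paper's re-copy trick is precisely the missing idea that dissolves it, trading CoT length for constant depth.
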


We illustrate the CoT our construction uses to solve 3-SAT instances in \cref{fig:cot}. The Transformer model simulates logical assumption, deduction, and backtracking by generating new tokens and ultimately outputs \cd{SAT}/\cd{UNSAT} as the result of the 3-SAT decision problem. Notably, only a single pass of the model is required to perform logical deduction over {\it all} clauses of the formula based on the current variable assignments (see \cref{lemma:parallel}). Our construction also indicates that softmax attention errors prevent fixed Transformer weights from solving larger 3-SAT instances and limit length generalization from smaller training cases.

\begin{figure*}[h]
\centering
\begin{tikzpicture}[
    node distance=0.5cm and 0.5cm,
    every node/.style={font=\small},
    decision/.style={rectangle, rounded corners, draw=blue!50, fill=blue!20, very thick, minimum height=0.5cm, minimum width=2cm, align=center},
    unit/.style={rectangle, rounded corners, draw=olive!50, fill=olive!20, very thick, minimum height=0.5cm, minimum width=2cm, align=center},
    learned/.style={rectangle, rounded corners, draw=purple!50, fill=purple!20, very thick, minimum height=0.5cm, minimum width=2cm, align=center},
    backtrack/.style={rectangle, rounded corners, draw=red!50, fill=red!20, very thick, minimum height=0.5cm, minimum width=2cm, align=center},
    sat/.style={rectangle, rounded corners, draw=cyan!50, fill=cyan!20, very thick, minimum height=0.5cm, minimum width=2cm, align=center},
    clause/.style={rectangle, draw=black!50, fill=yellow!20, thick, minimum height=0.5cm, minimum width=3.1cm, align=center},
    dimacs/.style={rectangle, draw=black!50, fill=orange!20, thick, minimum height=0.5cm, minimum width=3.1cm, align=center},
    arrow/.style={->, >=stealth', thick},
    dashedarrow/.style={->, >=stealth', dashed, thick},
    deductionarrow/.style={->, very thick, olive!70, opacity=0.3},
    backtrackarrow/.style={->, very thick, red!70, opacity=0.3},
    comment/.style={font=\footnotesize, align=center, text width=3cm},
    ]

% Clauses and DIMACS encodings remain the same as before

% (Include the clauses and DIMACS nodes here)

% First row clauses
\node[clause] (clause1) at (0,0) {\tok{$(\lnot x_2 \lor \lnot x_4 \lor \lnot x_1)$}};
\node (and1) [right=0cm of clause1.east] {\tok{$\land$}};

% \node[dimacs, below=0cm of clause1] (dimacs1) {\tok{\texttt{-2\quad -4\quad -1}}};
% \node (zero1) [right=0cm of dimacs1.east] {\tok{\texttt{0}}};

\node[clause] (clause2) at ($(clause1)+(3.5cm,0)$) {\tok{$(x_3 \lor x_4 \lor \lnot x_1)$}};
\node (and2) [right=0cm of clause2.east] {\tok{$\land$}};

% \node[dimacs, below=0cm of clause2] (dimacs2) {\tok{\texttt{3\quad 4\quad -1}}};
% \node (zero2) [right=0cm of dimacs2.east] {\tok{\texttt{0}}};

\node[clause] (clause3) at ($(clause1)+(7cm,0)$) {\tok{$(\lnot x_1 \lor \lnot x_3 \lor \lnot x_2)$}};
\node (and3) [right=0cm of clause3.east] {\tok{$\land$}};

% \node[dimacs, below=0cm of clause3] (dimacs3) {\tok{\texttt{-1\quad -3\quad -2}}};
% \node (zero3) [right=0cm of dimacs3.east] {\tok{\texttt{0}}};

% Second row clauses
\node[clause] (clause4) at ($(clause1)+(0cm,-1.0cm)$) {\tok{$(x_1 \lor \lnot x_2 \lor \lnot x_4)$}};
\node (and4) [right=0cm of clause4.east] {\tok{$\land$}};

% \node[dimacs, below=0cm of clause4] (dimacs4) {\tok{\texttt{1\quad -2\quad -4}}};
% \node (zero4) [right=0cm of dimacs4.east] {\tok{\texttt{0}}};

\node[clause] (clause5) at ($(clause4)+(3.5cm,0)$) {\tok{$(\lnot x_4 \lor x_2 \lor x_1)$}};
\node (and5) [right=0cm of clause5.east] {\tok{$\land$}};

% \node[dimacs, below=0cm of clause5] (dimacs5) {\tok{\texttt{-4\quad 2\quad 1}}};
% \node (zero5) [right=0cm of dimacs5.east] {\tok{\texttt{0}}};

\node[clause] (clause6) at ($(clause4)+(7cm,0)$) {\tok{$(x_1 \lor \lnot x_2 \lor x_4)$}};
% No \land after the last clause

% Bounding box for the clauses
\node[draw=gray, thick, rounded corners, inner sep=0.5cm, fit=(clause1)(clause6)] (clauses_box) {};

% Label for the clauses box
\node[draw=gray, thick, rounded corners, fill=cyan!30, anchor=south west] at ([shift={(0.3cm,-0.3cm)}]clauses_box.north west) {Model Input (3-SAT formula)};

% \node[dimacs, below=0cm of clause6] (dimacs6) {\tok{\texttt{1\quad -2\quad 4}}};
% \node (zero6) [right=0cm of dimacs6.east] {\tok{\texttt{0}}};
% Chain of Thought
\coordinate (lineleft) at ($(clause4.south west)+(-0.5cm,-1.0cm)$);
\coordinate (lineright) at ($(clause6.south east)+(1.5cm,-1.0cm)$);
\draw[thick] (lineleft) -- (lineright);

\node at ($(lineleft)!0.5!(lineright)+(0cm,0.3cm)$) {\textbf{Transformer Chain-of-Thought from \cref{thm:sat_search}}};
% Original nodes remain the same
\node[decision] (D2) at ($(clause4)+(-1cm,-3cm)$) {\tok{Assume 2}};
\node[decision] (D1) [right=of D2] {\tok{Assume 1}};
\node[unit] (-4) [right=of D1] {\tok{-4}};
\node[unit] (3) [right=of -4] {\tok{3}};
\node[backtrack] (BT1) [right=of 3] {\tok{BackTrack}};

\node[decision] (D2b) [below=0.7cm of D2] {\tok{Assume 2}};
\node[learned] (-1) [right=of D2b] {\tok{-1}};
\node[unit] (-4b) [right=of -1] {\tok{-4}};
\node[backtrack] (BT2) [right=of -4b] {\tok{BackTrack}};
\node[learned] (-2) [below=0.7cm of D2b] {\tok{-2}};
\node[decision] (D3) [right=of -2] {\tok{Assume 3}};
\node[decision] (D4) [right=of D3] {\tok{Assume 4}};
\node[unit] (1) [right=of D4] {\tok{1}};

% Added SAT node
\node[sat] (SAT) [right=of 1] {\tok{SAT}};

% Comments above each block
\node[comment, above=0cm of D2, text=blue!50] (D2c) {Assume \( x_2 = T \)};
\node[comment, above=0cm of D1, text=blue!50] (D1c) {Assume \( x_1 = T \)};
\node[comment, above=0cm of -4, text=olive!50] (-4c) {Deduce \( x_4 = F \)};
\node[comment, above=0cm of 3, text=olive!50] (3c) {Deduce \( x_3 = T \)};
\node[comment, above=0cm of BT1, text=red!50] (BT1c) {Conflict};
\node[comment, above=0cm of D2b, text=blue!50] (D2bc) {Keep \( x_2 = T \)};
\node[comment, above=0cm of -1, text=purple!50] (-1c) {Learn \( x_1 = F \)};
\node[comment, above=0cm of -4b, text=olive!50] (-4bc) {Deduce \( x_4 = F \)};
\node[comment, above=0cm of BT2, text=red!50] (BT2c) {Conflict Again};
\node[comment, above=0cm of -2, text=purple!50] (-2c) {Learn \( x_2 = F \)};
\node[comment, above=0cm of D3, text=blue!50] (D3c) {Assume \( x_3 = T \)};
\node[comment, above=0cm of D4, text=blue!50] (D4c) {Assume \( x_4 = T \)};
\node[comment, above=0cm of 1, text=olive!50] (1c) {Deduce \( x_1 = T \)};
\node[comment, above=0cm of SAT, text=cyan!50] (SATc) {Solved!};

% Adjusted arrows to point to comments
% Arrows for the sequence
\draw[arrow] (D2) -- (D1);
\draw[arrow] (D1) -- (-4);
\draw[arrow] (-4) -- (3);
\draw[arrow] (3) -- (BT1);
\draw[arrow] (D2b) -- (-1);
\draw[arrow] (-1) -- (-4b);
\draw[arrow] (-4b) -- (BT2);
\draw[arrow] (-2) -- (D3);
\draw[arrow] (D3) -- (D4);
\draw[arrow] (D4) -- (1);
\draw[arrow] (1) -- (SAT);

% Deduction arrows remain the same but adjusted to point to comments
\draw[deductionarrow] (clause1.south) to[out=-90, in=90] (-4c);
\draw[deductionarrow] (clause2.south) to[out=-90, in=90] (3c);
\draw[deductionarrow] (clause4.south) to[out=-90, in=90] (-4bc);
\draw[deductionarrow] (clause5.south) to[out=-90, in=90] (1c);

% Backtrack arrows adjusted to point to comments
\draw[backtrackarrow] (clause3.south) to[out=-90, in=90] (BT1c);
\draw[backtrackarrow] (clause6.south) to[out=-90, in=90] (BT2c);

% Bounding box for the CoT with thicker border
\node[draw=gray, thick, rounded corners, inner sep=0.5cm, fit=(D2c.north west)(SAT.south east)] (cot_box) {};

% Label for the CoT box with thicker border
\node[draw=gray, thick, rounded corners, fill=green!50, anchor=south west] at ([shift={(0.3cm,-0.3cm)}]cot_box.north west) {Model Output in \texttt{typewriter} font};

\end{tikzpicture}
\caption{Visualization of the Chain-of-Thought (CoT) process used by our model to solve an example 3-SAT formula described in \cref{thm:sat_search}. The model autonomously performs trial-and-error reasoning, making multiple attempts and backtracking upon encountering conflicts. Here, $T$ represents \textit{True} and $F$ represents \textit{False}. Tokens in \texttt{typewriter} font denote the CoT generated by the model.}
\label{fig:cot}
\end{figure*}

To empirically verify and investigate our construction, we design a tool (\cname) that instantiates the weights of Transformer models based on NumPy code specifying the desired behavior. With \cname, we implemented the construction as a PyTorch Transformer model and empirically validated its correctness on random 3-SAT instances. 

% \TODO{Add sentence on further experiments.}

% We also investigated its empirical properties such as the number of generated CoT tokens.
% evaluated the number of CoT tokens required for solving randomly generated SAT instances and the effect of soft attention on the correctness of SAT solving.

% \jakenote{I think giving actual numbers is confusing, so I removed it: ``evaluations of the compiled model verify that it can correctly solve 3-SAT instances with at least 308 reasoning steps and 17 backtrack operations.}

Additionally, we perform training experiments to demonstrate that Transformers can effectively learn from the deductive reasoning and backtracking process encoded as CoT. We show that trained Transformer models can generalize between SAT instances generated from different distributions within the same number of variables $p$. However, LLMs trained on SAT instances with CoT still struggle to solve instances with an unseen number of variables, demonstrating limitations in learning length-generalizable reasoning. These experimental results support our theoretical predictions.
% and opportunities to incorporate compiled reasoning components in Transformer LLMs to improve reasoning capabilities.

\noindent{\bf Contributions}
We prove by construction that decoder-only Transformers can solve 3-SAT in a non-uniform model of computation by performing logical deduction and backtracking using Chain-of-Thought (CoT). We show that Transformers can perform logical deduction on all conditions (clauses) in parallel instead of checking each condition sequentially. Nevertheless, the construction requires exponentially many CoT steps in the worst case, as implied by the NP-hardness of SAT, although it requires much fewer steps in most instances.
   
We design \cname, a tool to instantiate Transformer model weights that implement specifications written in NumPy-style code. We empirically demonstrate that the instantiated Transformer corresponding to our theoretical construction can perfectly solve 3-SAT instances. 

Finally, our supporting training experiments suggest that training on CoT encoding 3-SAT reasoning traces allows Transformer models to achieve out-of-distribution generalization within the same input lengths, but fail to generalize to larger instances.

\section{Related Work}

\noindent{\bf Transformers and $\operatorname{P}$ and $\operatorname{P/poly}$ Problems.}  This line of research focuses on what types of computation can Transformer models simulated by providing theoretical constructions of Transformer models that can solve well-defined computational problems.
The seminal work of \citet{automatashortcut} showed that Transformers can simulate semiautomata using a single pass over only a logarithmic number of layers w.r.t. the number of states.
\citet{dyck} demonstrated that transformers can perform parentheses matching of at most $k$ types of parentheses and $D$ appearance of each ($\operatorname{Dyck}_{k,D}$) with $D+1$ layers. 

However, the computation power of one pass of the Transformer model is fundamentally limited \citep{Merrill2021SaturatedTA, tranformerlimit}, and the success of Chain-of-Thought (CoT) reasoning has sparked research on how CoT can improve upon the expressiveness of Transformer models.
\citet{nnTuringComplete} proved that Transformers can emulate the execution of single-tape Turing machines. \citet{loopedtransformers} showed that Transformers can recurrently simulate arbitrary programs written in a one-instruction-set language.
\citet{cotcircuit} proved that Transformers can simulate arbitrary boolean circuits using CoT by representing the circuit in the positional encoding. In particular, transformers can decide all problems in $\operatorname{P/poly}\supseteq \operatorname{P}$ with polynomial steps of CoT.
\citet{cotregular} showed that Transformers with saturated attention can decide all regular languages with a linear number of CoT tokens and decide all problems in P with a polynomial number of CoT tokens. \cite{cottheory} shows that Transformer CoT can perform integer arithmetic, solve linear equations, and perform dynamic programming for the longest increasing subsequence and edit distance problems.

%\textit{Why this work if Transformers with CoT can ``solve all problems"?}
% \TODO{Is this argument strong enough?}

\noindent{\bf How our work differs.} We focus on 3-SAT, which is an NP-complete problem. It is widely believed that  $\operatorname{P}$ is a strict subset of $\operatorname{NP}$, and it is not known whether $\operatorname{NP}$ is a subset of $\operatorname{P/poly}$. In other words, our results are not comparable to these earlier results.

\noindent{\bf Turing Completeness of Transformers.} Meanwhile, \citet{nnTuringComplete}, \citet{cotcircuit}, and \citet{cotregular} also showed that Transformers can simulate single-tape Turing Machines (TM) with CoT and can theoretically be extended to arbitrary decidable languages. However, these constructions require at least one CoT token for every step of TM execution. 

\noindent{\bf How our work differs.} By contrast, our theoretical construction demonstrates that, for certain classes of formal reasoning problems, Transformers can simulate algorithmic reasoning traces at an abstract level with \textit{drastically reduced number of CoT tokens} compared to step-wise emulation of a single-tape TM. At each CoT Step, our construction performs deductive reasoning over the formula in parallel while any single-tape TM must process each input token sequentially. Furthermore, the CoT produced by our theoretical construction abstractly represents the human reasoning process of trial and error, as demonstrated in \cref{fig:cot}.

\textbf{Formal Logical Reasoning with LLMs} Several studies also evaluate pretrained LLMs’ {\it formal} and {\it algorithmic} reasoning abilities, finding that they perform well on a few reasoning steps but struggle as the required steps increase. 
ProofWriter~\citep{tafjord-etal-2021-proofwriter}, ProntoQA~\citep{PrOntoQA, PrOntoQAOOD}, FOLIO~\cite{han-etal-2024-folio}, SimpleLogic~\citep{zhang2022paradox}, and RuleTaker~\citep{clark2020transformers} encodes formal logical reasoning as natural language problems to test general purpose LLMs on multi-step reasoning.
NPHardEval~\cite{fan2023nphardeval} compiles a benchmark of P and NP-Hard problems and tests a variety of pre-trained LLMs. \citet{liu2023code} evaluates code execution capabilities, and \citet{chen2024can} measures capabilities to solve propositional and first-order logic satisfiability as well as SMT formulas.

A related line of work uses formal symbolic logic to enhance the capabilities of LLMs with CoT. LogicLM~\cite{pan-etal-2023-logic} and SymbCoT~\citep{xu-etal-2024-faithful} integrate symbolic expressions of first-order logic with CoT prompting and invoke solvers to provide feedback the LLM reasoner. ~\citet{ryu2024dividetranslatecompositionalfirstorder} uses divide and conquer to improve upon the above works in terms of translation accuracy.
~\citet{jha2024rlsfreinforcementlearningsymbolic} uses symbolic logic solvers to provide reinforcement learning rewards to improve LLM reasoning. 
Beyond LLMs, NeuroSAT~\citep{selsam2018learning},  
MatSAT~\citep{sato2021matsat}, and SATformer~\citep{shi2022satformer} train different neural networks to learn SAT-solving.

\noindent{\bf How our work differs.} Our work focuses on the theoretical capabilities of Transformer models rather than practical pretrained LLMs and can be viewed as building a theoretical foundation for these results.

\noindent{\bf Compilation of Transformer Weights.} Further, prior work on the theoretical construction of Transformer models rarely provides practical implementations. Notably, \citet{loopedtransformers} provides an implementation of their Transformer construction and demonstrates its execution on several programs. However, the model is initialized ``manually" using prolonged sequences of array assignments. \citet{lindner_tracr_2023} released Tracr, which compiles RASP \citep{rasp} programs into decoder-only Transformer models. RASP is a human-readable representation of a subset of operations that Transformers can perform via self-attention and MLP layers. While having related functionalities, our tool has different goals than Tracr and bears multiple practical advantages for implementing complex constructions, which we detail in \cref{sec:vs_tracr}.
\section{Preliminaries}

\label{sec:transformer}

The Transformer architecture \cite{transformer} is a foundational model in deep learning for sequence modeling tasks. In our work, we focus on the autoregressive decoder-only Transformer, which generates sequences by predicting the next token based on previously generated tokens. It is a relatively complex architecture, and here we only give a precise but quite concise description, and we refer the reader \cite{transformer} among many others for additional details. Given an input sequence of tokens $\mathbf{s} = (s_1, s_2, \dots, s_n)\in \gV^n$, where $\gV$ is a \textit{vocabulary}, a Transformer model $M: \gV^* \rightarrow \gV$ maps $\mathbf{s}$ to an output token $s_{n+1} \in \gV$ by composing a sequence of parameterized intermediate operations. These begin with a token embedding layer, following by $L$ \textit{transformer blocks} (\textit{layers}), each block consisting of $H$ \textit{attention heads}, with embedding dimension $d_\text{emb}$, head dimension $d_h$, and MLP hidden dimension $d_\text{mlp}$. Let us now describe each of these maps in detail.

\noindent{\bf Token Embedding and Positional Encoding.}
Each input token $s_i$ is converted into a continuous vector representation $\operatorname{Embed}(s_i) \in \sR^d$ using a fixed embedding map $\emb(\cdot)$. To incorporate positional information, a positional encoding vector $\vp_i \in \sR^d$ is added to each token embedding. The initial input to the first Transformer block is
$$\mX^{(0)} \gets \left( \emb(s_1) + \vp_1, \dots,\ \emb(s_n) + \vp_n \right) \in \sR^{n \times d}.$$
\paragraph{Transformer Blocks.} For $l=1, \ldots, L$, each block $l$ of the transformer processes an embedded sequence $\mX^{(l-1)} \in \sR^{n \times d}$ to produce another embedded sequence $\mX^{(l)} \in \sR^{n \times d}$. Each block consists of a multi-head self-attention (MHA) mechanism and a position-wise feed-forward network (MLP). We have a set of parameter tensors that includes MLP parameters $\mW_1^{(l)} \in \sR^{d_\text{emb} \times d_{\text{mlp}}^*}$, $\vb_1^{(l)} \in \sR^{d_{\text{mlp}}^*}$, $\mW_2^{(l)} \in \sR^{d_{\text{mlp}} \times d}$, and $\vb_2^{(l)} \in \sR^{d}$, self-attention parameters  $\mW_Q^{(l,h)},\ \mW_K^{(l,h)},\ \mW_V^{(l,h)} \in \sR^{d \times d_h}$ for every $h = 1, \ldots, H$, and multi-head projection matrix $\mW_O^{(l)}\in \sR^{(Hd_h) \times d_\text{emb}}$. We will collectively refer to all such parameters at layer $l$ as $\Gamma^{(l)}$, whereas the self-attention parameters for attention head $h$ at layer $l$ will be referred to as $\Gamma^{(l,h)}$. We can now process the embedded sequence $\mX^{(l-1)}$ to obtain $\mX^{(l)}$ in two stages:
% Single-Column
% \begin{align*}
%  \mH^{(l)}  \gets \mX^{(l-1)} + \mha \left( \mX^{(l-1)} ; \Gamma^{(l)} \right), \qquad \text{ and } \qquad
% \mX^{(l)}  \gets  \mH^{(l)} + \mlp\left( \mH^{(l)} ; \Gamma^{(l)} \right),
% \end{align*}
% Double Column
\begin{align*}
 \mH^{(l)}  &\gets \mX^{(l-1)} + \mha \left( \mX^{(l-1)} ; \Gamma^{(l)} \right)\\ \mX^{(l)}  &\gets  \mH^{(l)} + \mlp\left( \mH^{(l)} ; \Gamma^{(l)} \right),
\end{align*}
where
\begingroup
\small
\begin{align*}
    &\mha \left( \mX ; \Gamma^{(l)} \right)   
      :=  \left[ \attn(\mX; \Gamma^{(l,1)}); \ldots; \attn(\mX; \Gamma^{(l,H)})\right]  \mW_O^{(l)} \\
    &\attn(\mX; \Gamma^{(l,h)}) 
      :=  \sigma\left( \frac{\mX \mW_Q^{(l,h)}  (\mW_K^{(l,h)}\mX)^\top}{\sqrt{d_h}}  + \mM \right) \mX \mW_V^{(l,h)} \\
    &\mlp \left( \mH ; \Gamma^{(l)}  \right) 
      := \operatorname{act}\left( \mH \mW_1^{(l)} + \vb_1^{(l)} \right) \mW_2^{(l)} + \vb_2^{(l)}. 
\end{align*}
\endgroup
% \TODO{Make the above fit in a double column. Prof. Jake: can you provide some ideas?}

The $n \times n$ matrix $\mM$ is used as a ``mask'' to ensure self-attention is only backward-looking, so we set $\mM[i,j] = -\infty$ for $i \geq j$ and $\mM[i,j] = 0$ otherwise. $\sigma$ represents the softmax operation. We use the $\operatorname{ReGLU}(\cdot):\sR^{2d_\text{mlp}}\rightarrow \sR^{d_\text{mlp}}$ activation function $\operatorname{act}(\cdot)$ at each position. Given input $\vu \in \sR^{n \times 2d_\text{mlp}}$, for each position $i$ we split $\vu_i$ into two halves $\vu_{i,1},\ \vu_{i,2} \in \sR^{d}$ and, using  $\otimes$ denotes element-wise multiplication, we define
\begin{equation}
    \sigma_{\text{ReGLU}}\left( \vu_i \right) = \vu_{i,1} \otimes \operatorname{ReLU}\left( \vu_{i,2} \right). \label{eq:ReGLU}
\end{equation}

\noindent{\bf Output Layer.}
After the final Transformer block, the output representations are projected onto the vocabulary space to obtain a score for each token. We assume that we're using the greedy decoding strategy, where the token with the highest score at the last input position is the model output.
\begin{equation*}
    \vo = \mX^{(L)} \mW_{\text{out}} + \vb_{\text{out}} \in \sR^{n \times V},s_{n+1} = \operatorname{arg}\max_{v}\vo_{n,v} \in \gV \label{eq:token_prob}
\end{equation*}
where $\mW_{\text{out}} \in \sR^{d \times V}$, $\vb_{\text{out}} \in \sR^{V}$, $V$ is the size of the vocabulary, $\vo_{n,v}$ is the score for token $v$ at the last input position $n$.

\noindent{\bf Autoregressive Decoding and Chain-of-Thought.}
During generation, the Transformer model is repeatedly invoked to generate the next token and appended to the input tokens, described in \cref{alg:next-token-prediction}. In this paper, we refer to the full generated sequence of tokens as the \textbf{Chain-of-Thought (CoT)}, and the number of chain-of-thought tokens in \cref{alg:next-token-prediction} is $t-n$.

\begin{algorithm}[ht!]
\caption{Greedy Decoding}
\label{alg:next-token-prediction}
\begin{algorithmic}[1]
\REQUIRE Model $M: \gV^* \rightarrow \gV$, stop tokens $\mathcal{E} \subseteq \gV$, prompt $\vs_{1:n} = (s_1, s_2, \dots, s_n)$,  $t \leftarrow n$
\WHILE{$t \leftarrow t + 1$}
    \STATE $s_t \leftarrow M(\vs_{1:t-1})$ 
    \STATE \textbf{if }{$s_t \in \mathcal{E}$} \textbf{return} $\vs_{1:t}$
\ENDWHILE
\end{algorithmic}
\end{algorithm}
% \paragraph{Training Objective}
% The model minimizes cross-entropy loss between predicted and true token distributions, training it to generate likely sequences based on the training data.

% In summary, the Decoder-only Transformer Model captures long-range dependencies and contextual information, making it suitable for various natural language processing tasks.

Finally, we refer readers to \cref{sec:DIMACS} and \cite{satbook} for details on SAT solving and 3-SAT.

\section{Transformers and SAT: logical deduction and backtracking}

% \subsection{Transformer-LLMs Can Be DPLL SAT Solvers}

% Since 3-SAT is known to be NP-Hard, no polynomially-sized Transformer can decide 3-SAT with a single inference (assuming $P\neq NP$). However, our result below shows that polynomially-sized Transformers equipped with (potentially exponential length) Chain-of-Thought can provably solve 3-SAT through a deductive search process, showcasing the ability of Transformer LLMs in using Chain-of-Though to achieve stronger reasoning capabilities:

This section presents and explains our main results on Transformers' capability in deductive reasoning and backtracking with CoT. To rigorously state our results, we first formally define decision problems, decision procedures, and what it means for a model to ``solve" a decision problem using CoT:
\begin{definition}[Decision Problem]
Let $\gV$ be a vocabulary, $\Sigma \subseteq \gV$ be an alphabet, $L \subseteq \Sigma^*$ be a set of valid input strings. We say that a mapping $f: L \rightarrow \{0,1\}$ is a \textit{decision problem} defined on $L$.
\end{definition}

\begin{definition}[Decision Procedure]
 We say that an algorithm $\mathcal{A}$ is a decision procedure for the decision problem $f$, if given any input string $x$ from $L$, $\mathcal{A}$ halts and outputs $1$ if $f(x)=1$, and halts and outputs $0$ if $f(x)=0$.
\end{definition}

\begin{definition}[Autoregressive Decision Procedure]
\label{def:autoregressive-decision}
 For any map $M: \gV^* \rightarrow \gV$, which we refer to as an \textit{auto-regressive next-token prediction model}, and $\mathcal{E}=\{\mathcal{E}_0, \mathcal{E}_1\}\subset \gV$, define procedure $\mathcal{A}_{M,\mathcal{E}}$ as follows: For any input $s_{1:n}$, run Algorithm~\ref{alg:next-token-prediction} with stop tokens $\mathcal{E}$. $\mathcal{A}_{M,\mathcal{E}}$ outputs 0 if $s_{1:t}$ ends with $\mathcal{E}_0$ and $\mathcal{A}_{M,\mathcal{E}}$ output 1 otherwise. We say $M$ \textit{autoregressively decides} decision problem $f$ if there is some $\gE\subset \gV$ for which $\mathcal{A}_{M,\gE}$ decides $f$.
 
 % The model generates a sequence $\mathbf{s}_{1:t}$ ending with $\mathcal{E}_1$ if $f(x)=1$ and $\mathcal{E}_0$ if $f(x)=0$. 
 % \jakenote{I'd like to switch this to $\tok{[ACCEPT]}$ and $\tok{[REJECT]}$ which is the format used below, the character $t$ is used to frequently for other quantities.}
\end{definition}

\begin{definition}[$\operatorname{3-SAT}_{p,c}$]
Let $\operatorname{DIMACS}(p, c)$ denote the set of valid DIMACS encodings of 3-SAT instances with at most $p$ variables and $c$ clauses with a prepended $\tok{[BOS]}$ token and an appended $\tok{[SEP]}$ token. Define $\operatorname{3-SAT}_{p,c}: \operatorname{DIMACS}(p, c) \rightarrow \{0,1\}$ as the problem of deciding whether the 3-SAT formula, encoded via $\operatorname{DIMACS}(p, c)$, is satisfiable. 
\end{definition}

With the above definition, we present the formal statement of our main result: 

\begin{theorem}[Decoder-only Transformers can solve SAT]
\label{thm:sat_search}
For any $p, c\in \sN^+$,
there exists a Transformer model $M: \gV^* \rightarrow \gV$ that autoregressively decides $\operatorname{3-SAT}_{p,c}$ in no more than $p\cdot 2^{p+1}$ CoT iterations. $M$ requires $L=7$ layers, $H=5$ heads, $d_{\text{emb}} = O(p)$, and $O(p^2)$  parameters.
% \footnote{excluding positional encodings} as defined in \cref{sec:transformer} such that all parameter values are at most \TODO{???} and $M$ autoregressively decides the decision problem $\operatorname{3-SAT}_{p,c}$ as per Definition~\ref{def:autoregressive-decision}, with chain-of-thought length at most $p\cdot 2^{p+1}$
% sequences at most $p\cdot 2^{p+1}$ tokens.
\end{theorem}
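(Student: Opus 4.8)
The plan is to prove the theorem by explicit construction: fix a deterministic DPLL-style solver whose execution trace is precisely the Chain-of-Thought depicted in \cref{fig:cot}, and then hand-build a $7$-layer Transformer whose greedy decoding reproduces that trace one token at a time. Concretely, I would take $\mathcal{A}$ to be the solver that propagates unit clauses to a fixpoint; when no propagation is available it branches on the smallest unassigned variable by assuming it \emph{true}; on deriving the empty clause (a conflict) it backtracks chronologically to the most recent branch still labelled ``assume-true'', relabels it ``decide-false'', and discards all assignments of deeper decision levels; it halts with \tok{UNSAT} if no such branch remains and with \tok{SAT} once all $p$ variables are assigned without conflict. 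The CoT alphabet is chosen so that $\mathcal{A}$ emits exactly one token per elementary event --- a literal token for each propagated or branched literal, an \tok{[Assume]} marker before each branch literal, a \tok{[BackTrack]} token on each conflict, and \tok{[SAT]}/\tok{[UNSAT]} at termination --- so that soundness of unit propagation together with completeness of chronological backtracking give, via \cref{def:autoregressive-decision}, that $\mathcal{A}$ decides $\operatorname{3-SAT}_{p,c}$. The branch/backtrack structure is a binary search tree over at most $p$ decisions, hence has at most $2^{p+1}$ nodes, and between two consecutive branch or backtrack events at most $p$ literals are propagated; a careful accounting of this traversal yields the bound of $p\cdot 2^{p+1}$ CoT tokens.

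\textbf{Representation and the simulation invariant.} I would set $d_{\text{emb}} = O(p)$ and partition the residual stream into a constant number of named blocks of width $O(1)$ or $O(p)$: a one-hot token-type field; a one-hot encoding of the literal in $\{1,\dots,p\}\times\{+,-\}$ together with the clause index and within-clause slot for formula tokens; a block holding the current partial assignment $a\in\{T,F,*\}^p$ (two bits per variable); the current decision level and a ``flipped'' flag for branch tokens; and $O(1)$ scratch coordinates. The key proof obligation is an invariant maintained by induction on the CoT length: after the $t$-th generated token, the residual stream at the final position encodes the state of $\mathcal{A}$ after $t$ steps, and the output map reads off $\mathcal{A}$'s $(t{+}1)$-st token. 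Since the positional encodings are only needed to parse the fixed-length DIMACS prefix and to locate branch tokens of a prescribed decision level, a fixed $O(p)$-dimensional scheme suffices even though the CoT itself can be exponentially long.

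\textbf{Allocating the layers, and the main obstacle.} I would assign the $L=7$ layers roughly as: (i) parse the DIMACS prefix into per-clause literal lists; (ii)--(iii) maintain the assignment $a$, where the easy case is a propagated or branch literal (it updates a single variable) and the delicate case is \tok{[BackTrack]}, for which $a$ is \emph{not} a local function of the previous state --- the model must compute the target decision level (the deepest unflipped ``assume-true'' branch), attend to that branch token, restore the assignment as it stood immediately before that branch, and set the flipped variable to $F$; (iv)--(v) run one round of unit propagation over all $c$ clauses in parallel via \cref{lemma:parallel}, detecting a conflict or extracting the smallest unit literal; (vi)--(vii) combine these signals into the next token --- \tok{[BackTrack]} or \tok{[UNSAT]} on conflict, the unit literal when one exists, \tok{[SAT]} when $a$ is total, and otherwise \tok{[Assume]} followed (on the next step) by the smallest unassigned literal. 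Each sub-module uses $O(1)$ attention heads and layers with $O(p)$-dimensional, $O(p^2)$-parameter weights, and a careful packing fits them within $H=5$ heads and $L=7$ layers. I expect the main obstacle to be precisely the \tok{[BackTrack]} bookkeeping: chronological backtracking is a bounded-depth ($\le p$) stack computation, and implementing ``find the deepest unflipped branch and roll the assignment back to it'' requires matching backtracks to branches in last-in-first-out order within a constant number of attention layers --- all carried out with bounded-precision softmax attention, whose error grows with the instance size and thereby both forces the per-$(p,c)$ (non-uniform) quantifier order of the statement and foreshadows the length-generalization barrier observed in the experiments.
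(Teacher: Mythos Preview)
Your high-level plan---simulate a fixed DPLL procedure and hand-build the layers so that greedy decoding reproduces the trace---is exactly the paper's strategy, and your layer budget, use of \cref{lemma:parallel} for propagation and conflict detection, and the $p\cdot 2^{p+1}$ counting argument are all on target. The crucial divergence is the CoT format, and it lands precisely on the step you yourself flag as ``the main obstacle''.

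In your trace convention the model emits a single \tok{[BackTrack]} token and then continues from the rolled-back state; the assignment therefore is \emph{not} a simple function of the tokens since the last separator, and recovering it requires matching each backtrack to its branch in LIFO order and deciding, for every past literal token, whether it is still ``active''. That is a Dyck-style stack computation over depth up to $p$, and you give no construction for it within $L=7$ layers and $H=5$ heads. With causal softmax attention the natural encodings (store decision level per token, compute a running minimum, or keep $p$ per-level pointers) seem to need either $\Theta(p)$ heads or a non-constant number of layers; I do not see how to close this within the stated resource bounds, and neither does your outline.

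The paper does not solve this problem---it sidesteps it by changing the trace. After each \tok{[BT]} the model \emph{re-emits the entire previous state token by token}: it locates the second-to-last separator, copies literals verbatim from the previous segment via an index-offset COPY head, and when the copy cursor reaches the last \tok{D} of that segment it outputs the negated literal instead and drops the \tok{D}. Thus the CoT is a sequence of complete DPLL states separated by \tok{[SEP]}/\tok{[BT]}, and the current assignment at any position is just the sum of literal one-hot embeddings since the last separator---one MEAN head, no stack matching. The price is more CoT tokens (each backtrack replays up to $2p$ tokens), but since there are at most $2^{p}$ segments the $p\cdot 2^{p+1}$ bound still holds. The seven layers are then spent on: computing position-index scaffolding; finding the last and second-to-last separators and the last \tok{D}; summing literal embeddings since the last separator to get $E(A)$ and each $E(C_i)$; the three heads of \cref{lemma:parallel}; and the prioritized output logic.

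So the gap in your proposal is exactly the one you earmarked: you diagnosed the hard step correctly but not its resolution. The fix is not cleverer attention arithmetic inside a single pass; it is to push the backtracking stack into the CoT itself, trading extra tokens for a constant-layer construction.
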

Remarks on \cref{thm:sat_search}
\vspace{-2pt}
\begin{itemize}
    \setlength{\itemsep}{2pt}  % Reduce space between items
    \setlength{\parskip}{2pt}
\item Despite the high upper bound on CoT length, it's rarely reached in practice. In \cref{fig:cotlen} we show that the number of CoT tokens is no greater than $8p\cdot 2^{0.08p}$ for most formulas
%    \item The upper bound on the CoT length $p\cdot 2^{p+1}$ is a worst-case upper bound which assumes that the model is unable to make any logical deductions have to try all $2^p$ assignments. However, this upper bound is never reached in practice, and in \cref{fig:cotlen} we show that the number of CoT tokens is no greater than $8p\cdot 2^{0.08p}$ for most formulas. If the number of backtracking steps is bounded by $T$ then the CoT is no longer than $(2p+1)(T+1)$
    \item The worst-case CoT length is independent of the number of clauses $c$, which is due to the parallel deduction over all clauses within the Transformer construction.
    % \item The construction uses ReGLU (\cite{gluvariant}) as the MLP activation as defined in \cref{eq:ReGLU} and does not include normalization layers such as LayerNorm.
    \item Positional encodings are not included in the number of parameters. The positional encoding at position $i$ is the numerical value $i$ at a particular dimension.
    \item Each param. can be represented with $O(p+\log c)$ bits
\end{itemize}

We show our full construction and proof via simulation of the abstract DPLL 
\citep{abstractDPLL} in \cref{sec:sat_search_proof}. The construction uses adapted versions of lemmas from \citet{cottheory} as basic building blocks. Here we provide a proof sketch of the core operations in our theoretical construction.

\noindent{\bf Proof Sketch} 
% unit propagation (i.e., logical deduction), variable decision, and backtracking.
The Transformer model requires that we encode boolean expressions as vectors. Define the set of literals as the set of variables and their negations $L=\{x_1,\lnot x_1, x_2, \lnot x_2, \dots, x_p,\lnot x_p\}$. Recall that a 3-SAT formula is the conjunction of clauses $C_1\land C_2\land \dots \land C_c$. We view both \textit{partial assignments} $A\subset L$ and \textit{clauses} $C\subset L$ as subsets of literals. For partial assignments, the subset $\{x_1, \lnot x_2, \lnot x_4\}$ denotes the partial assignment $x_1=T,x_2=F,x_4=F$, with $x_3$ unassigned. For clauses, the subset $\{x_1, \lnot x_2, \lnot x_4\}$ denotes the clause $(x_1\lor\lnot x_2 \lor \lnot x_4)$. Note that although we use the same set-based notation for both partial assignments and clauses, they have different meanings: In a partial assignment, each literal specifies the value of a single variable, and all such literals hold simultaneously (an AND) in the partial assignment. In contrast, a clause represents a disjunction (OR) among its literals, meaning at least one must hold true.

We assume without loss of generality that neither $A\subset L$ or $C\subset L$ contain $x_v$ and $\neg x_v$ simultaneously for any $v\in[p]$.  Let $\mathcal{B}\subset P(L)$ all possible subsets of $L$ without the same variable and its negation, then $A\in \mathcal{B}$ and $C\in \mathcal{B}$. We define the vector encoding of partial assignments and clauses as follows:

\begin{figure*}
    \centering
    \includegraphics[width=0.7\linewidth]{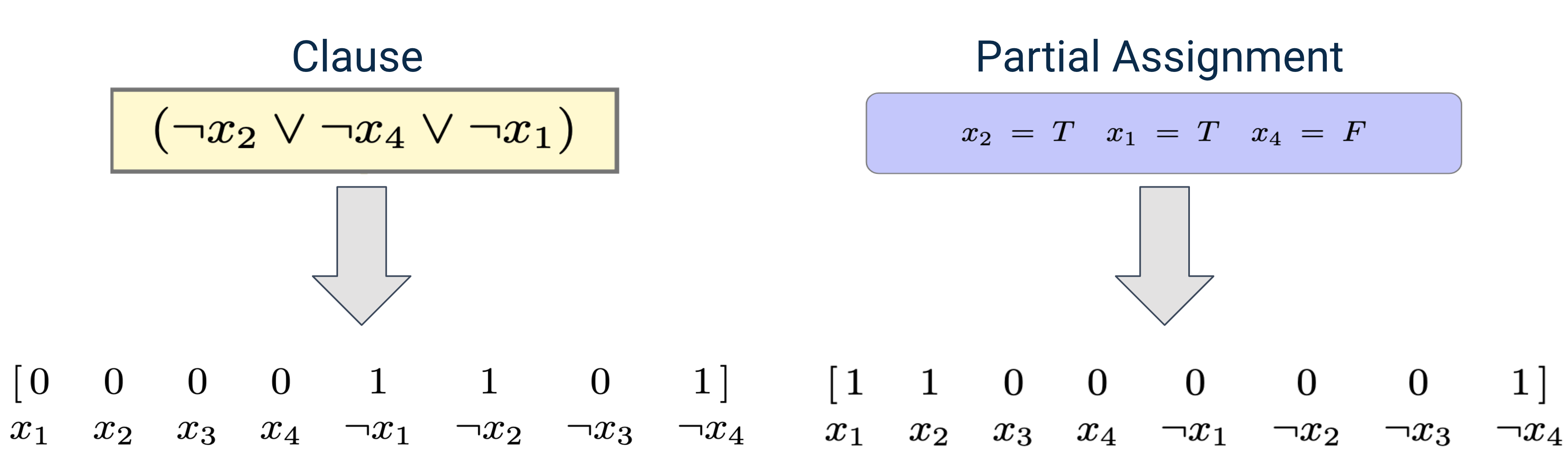}
    \caption{Illustration of the encoding scheme $E(C)$ and $E(A)$ for clauses and partial assignments from \cref{def:clause_encode} with $p=4$.}
    \label{fig:encoding}
\end{figure*}
\begin{definition}[Encoding of clauses and partial assignments, extending \citet{sato2021matsat}]
\label{def:clause_encode}
The mappings $E,E_\textnormal{not-false}, E_\textnormal{assigned} : \mathcal{B} \to \mathbb{R}^{2p}$ encodes $B \in \mathcal{B}$ as  
\begin{align*}
    E(B)_v &:= \boldsymbol{1}_{x_v \in B} 
        &  E(B)_{v+p} &:= \boldsymbol{1}_{(\neg x_v) \in B}.\\
    \enf(B)_v &:= \boldsymbol{1}_{(\lnot x_v) \notin B} 
        &  \enf(B)_{v+p} &:= \boldsymbol{1}_{x_v \notin B}.
\end{align*}
% \vspace{-0.1in}
$$    \eass(B)_v :=
          \eass(B)_{v+p} := \boldsymbol{1}_{x_v \in B\lor(\lnot x_v)\in B}.$$

\end{definition}

% Note that both $E_\textnormal{not-false}(B)$ and $ E_\textnormal{assigned}(B)$ can be computed via an affine transformation on $E(B)$.

Given a formula $F$, and a partial assignment of the variables $A$, we may reduce $F$ into a simpler form by evaluating clauses using the assignments in $A$. (See \cref{sec:DIMACS} for details) If $F$ reduces to true under $A$, then $F$ must evaluate to true any full assignment that extends $A$ and we denote $A\models F$.
Conversely, if $F$ reduces to false under $A$, then $F$ must evaluate to false under any full assignment that extends $A$ and we denote $F\models \lnot A$. 
After reducing $F$ under $A$, if there's a clause $C_u$ in $F$ that has only a single literal $l$ left (i.e., $C_u=\{l\}$), then any assignment that extends $A$ and satisfies $F$ must contain $l$. This is the ``deduction" we referred to in \cref{fig:cot} and is formally called "unit propagation", and we denote as $F\land A\models_1 l$.

\setlength{\abovedisplayskip}{5pt}
\setlength{\belowdisplayskip}{5pt}
We now show that the above logical operations can be computed using vector operations on the encoding of the clauses $\{C_1, \dots, C_c\}$ and a partial assignment $A$:
\begin{lemma}
\label{lemma:vec_ded}
Let $F=\bigwedge_{i\in [c]} C_i$ be a 3-SAT formula over p variables $\{x_1, \dots, x_p\}$ and $c$ clauses $\{C_1, \dots, C_c\}$. Let $A\subset L$ be a partial assignment defined on variables $\{x_1, \dots, x_p\}$, then the following properties hold:
\vspace{-8pt}
\begin{enumerate}
    \setlength{\itemsep}{0pt}  % Reduce space between items
    \setlength{\parskip}{0pt}
    \item Satisfiability Checking: 
    \[
    A\models F\quad\Longleftrightarrow\quad\min_{i\in [c]} E(C_i) \cdot E(A) \geq 1.
    \]
    \item Conflict Detection: 
    $$F\models \lnot A \quad\Longleftrightarrow\quad  \min_{i\in[c]}  E(C_i) \cdot \enf(A) = 0.$$
    \item Deduction: Let $D:=\{l \in L\;|\;F\land A\models_1 l\}$ be the literals deducible from $F$ given $A$ via unit propagation. Then we can write $E(D)$ as
    \[
        \begin{aligned}
        \max\Bigl[& \min\Bigl(\sum_{i \in [c]} 
            E(C_i) \boldsymbol {1}_{\{E(C_i)\cdot \enf(A)=1\}}, 1\Bigr) \\
            &\quad  -  \eass(A),\; 0 \Bigr].
        \end{aligned}
    \] 
    where $\max$ and $\min$ are applied element-wise.
\end{enumerate}
\end{lemma}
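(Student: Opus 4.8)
The plan is to reduce all three identities to elementary set-counting, after first reading off exactly what the two inner products compute. Writing out the $2p$-dimensional sums and using the standing normalization that neither a clause nor a partial assignment contains $x_v$ and $\lnot x_v$ together, one gets
\[
E(C_i)\cdot E(A) \;=\; \bigl|\{\,l\in C_i : l\in A\,\}\bigr|,\qquad
E(C_i)\cdot \enf(A) \;=\; \bigl|\{\,l\in C_i : \bar l\notin A\,\}\bigr|,
\]
i.e.\ the number of literals of $C_i$ that are \emph{true} under $A$, and the number of literals of $C_i$ that are \emph{not falsified} by $A$ (true or unassigned), respectively. Under the same normalization I would record three syntactic facts about reducing a single clause $C_i$ by $A$: $C_i$ reduces to \texttt{true} iff $C_i\cap A\neq\emptyset$; $C_i$ reduces to \texttt{false} iff every literal of $C_i$ is falsified by $A$; and $C_i$ reduces to a unit clause $\{l\}$ iff $l$ is the \emph{unique} not-falsified literal of $C_i$ \emph{and} the variable of $l$ is unassigned in $A$ (if that unique not-falsified literal lies in $A$, the clause reduces to \texttt{true} instead). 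Given these, parts 1 and 2 are immediate: $A\models F$ means every clause reduces to \texttt{true}, which is $E(C_i)\cdot E(A)\ge 1$ for all $i$, i.e.\ $\min_{i\in[c]}E(C_i)\cdot E(A)\ge 1$; and $F\models\lnot A$ means some clause reduces to \texttt{false}, which, since each $E(C_i)\cdot\enf(A)$ is a nonnegative integer, is equivalent to $\min_{i\in[c]}E(C_i)\cdot\enf(A)=0$.

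For part 3 I would argue coordinate by coordinate. Write $S:=\sum_{i\in[c]}E(C_i)\,\boldsymbol{1}_{\{E(C_i)\cdot\enf(A)=1\}}$, so the claimed vector is $\max(\min(S,1)-\eass(A),0)$. Fix a literal $l$ with coordinate $k$ in $\{1,\dots,2p\}$. From the reading of the second dot product, $S_k$ counts the clauses $C_i$ that have exactly one not-falsified literal and that \emph{contain} $l$ — crucially, $l$ itself need not be the surviving literal, as it may appear in such a clause as one of the falsified literals, and several such clauses may contribute. I then claim the $k$-th entry of $\max(\min(S,1)-\eass(A),0)$ equals $1$ iff $l\in D$. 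If $l\in D$, some $C_i$ reduces to $\{l\}$, so $E(C_i)\cdot\enf(A)=1$ and $l\in C_i$, giving $S_k\ge1$; moreover the variable of $l$ is unassigned, so $\eass(A)_k=0$ and the entry is $\min(S_k,1)-0=1$. Conversely, if the entry is $1$ then $\eass(A)_k=0$ (the variable of $l$ is unassigned in $A$) and $S_k\ge1$, so some $C_i$ with a unique not-falsified literal contains $l$; since the variable of $l$ is unassigned, $l$ is itself not falsified, hence must be that unique surviving literal, so $C_i$ reduces to $\{l\}$ and $l\in D$. Finally, if $l\notin D$ the entry is $0$: if the variable of $l$ is assigned then $\eass(A)_k=1\ge\min(S_k,1)$ and truncation at $0$ kills it; if it is unassigned then $S_k=0$ by the contrapositive just shown. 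Componentwise this is $E(D)=\max(\min(S,1)-\eass(A),0)$.

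The main obstacle — indeed the only delicate point — is part 3, and specifically that $S$ over-counts in two distinct ways: a unit-reducing clause deposits a $1$ at the coordinates of \emph{all} of its literals, including those already falsified by $A$, and the same literal may be forced by several clauses at once. The proof has to make explicit that the $\min(\cdot,1)$ clamp absorbs the multiplicity while subtracting $\eass(A)$ erases precisely the coordinates whose variable is already assigned — which is exactly the set of spurious coordinates — and that these two corrections are jointly sufficient without damaging the genuine entries. The no-tautology normalization of clauses and assignments is used repeatedly to keep ``not falsified'' and ``unassigned'' from interacting unexpectedly and to make the reduction-based readings of $A\models F$, $F\models\lnot A$, and $F\land A\models_1 l$ coincide with the per-clause conditions above; I would state it once at the start of the argument.
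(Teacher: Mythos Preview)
Your proof is correct and follows essentially the same approach as the paper: identifying $E(C_i)\cdot E(A)=|C_i\cap A|$ and $E(C_i)\cdot\enf(A)$ as the count of not-falsified literals, then reading off parts 1 and 2 directly and handling part 3 via the clamp-then-subtract-assigned mechanism. Your treatment of part 3 is in fact more careful than the paper's, which simply asserts that ``$E(C_i)$ encodes the remaining literal'' and that subtracting $\eass(A)$ ``removes literals already assigned''; you explicitly confront the over-counting (a unit-reducing clause contributes to \emph{all} of its literal coordinates, not just the survivor) and verify coordinatewise that the two corrections jointly eliminate exactly the spurious entries without harming the genuine ones.
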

\vspace{-0.1in}
Each of the above operations can be approximated by an attention head when given the clause and partial assignment encodings. We capture this idea in the following lemma:

% Recall that an attention head computes a query and key vector from the hidden states and the attention weight between two positions is based on the dot product between the query vector of the source position and the key vector of the target position. If the Transformer weights are configured such that the query vectors are \cref{fig:cot} is $E(A)$ or $E_{\text{not-false}}(A)$ for partial assignments $A$ in the Chain-of-Thought illustrated, and the key vectors are $E(C_i)$ for positions of clauses $C_i$ in the formula, then the attention weight (before softmax) would be proportional to $E(C_i) \cdot E(A)$ or $E(C_i) \cdot E_{\text{not-false}}(A)$ respectively, which are values crucial for the operations in \ref{lemma:vec_ded}. We can then scale the attention weights so that the attention weights focus on only the extremal values of $E(C_i) \cdot E(A)$ or $E(C_i) \cdot E_{\text{not-false}}(A)$. We illustrate the consequence of this correlation with the following informal lemma, which considers an idealized input that contains only the positions with encoding vectors and auxiliary values: 
\begin{lemma}[Parallel Processing of Clauses, Informal]
\label{lemma:parallel}
    Let $F$ be a 3-SAT formula over variables $\{x_1, \dots, x_p\}$ with $c$ clauses $\{C_1, \dots, C_c\}$ and $A$ a partial assignment defined on variables $\{x_1, \dots, x_p\}$. Let \[
X_{encoding} =
\begin{bmatrix}
0 & 1 & 1 \\
E(C_1) & 0 & 1 \\
\vdots & \vdots & \vdots \\
E(C_c) & 0 & 1 \\
E(A) & 0 & 1
\end{bmatrix} \in \mathbb{R}^{(c+2)\times (2p+2)}
\] 
Then for any $1 > \epsilon > 0$, given $X$ as input, there exists:
\begin{itemize}
    \setlength{\itemsep}{2pt}  % Reduce space between items
    \setlength{\parskip}{2pt}
    \item An attention head that outputs $\boldsymbol{1}_{A\models F}$ with approximation error bounded by $\epsilon$
    \item An attention head that outputs $\boldsymbol{1}_{F\models \lnot A}$ with approximation error bounded by $\epsilon$
    \item An attention head followed by an MLP layer that outputs $E(D)$ as defined \cref{lemma:vec_ded} with $\|\cdot\|_\infty$ error bounded by $\epsilon$, unless $F\models \lnot A$
\end{itemize}
All weight values are independent of $F$ and $A$ and are bounded by $O(poly(p, c, \log(1/\epsilon)))$
\end{lemma}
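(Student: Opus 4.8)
The plan is to build each of the three gadgets on top of the closed-form identities of \cref{lemma:vec_ded}, using two standard primitives (adapted from the building-block lemmas of \citet{cottheory}): a single attention head with a large inverse-temperature factor $\lambda$ folded into $\mW_Q\mW_K^\top$ computes, at a chosen position, a $\lambda$-approximate average of its value vectors over the rows of maximal attention logit, with leakage error $O\!\big((c+2)e^{-\lambda\delta}\big)$ when the gap to the next logit level is $\delta$; and a single ReGLU MLP layer computes any fixed bounded piecewise-linear map, in particular coordinatewise clamping $t\mapsto\max(\min(t,1),0)$, amplification $t\mapsto\min(Kt,1)$, and products of two bounded affine features. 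For every head I form the query from the last row of $X_{encoding}$ (which carries $E(A)$ and a constant-$1$ coordinate, hence also the affine images $\enf(A)=\1-PE(A)$ and $\eass(A)$, with $P$ the swap of the two halves of $\R^{2p}$), I form keys from the first $2p$ coordinates of each row, and I use the penultimate coordinate — which is $1$ only on the $(0,1,1)$ ``reference'' row — to give that row a fixed tunable logit. So the content is entirely in choosing logits and value maps.

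For satisfiability checking I give clause row $i$ the logit $-\lambda\,(E(C_i)\cdot E(A))$ and the reference row the logit $-\lambda/2$, and set $\mW_V$ to read the penultimate coordinate, so the value is $1$ on the reference row and $0$ everywhere else. By \cref{lemma:vec_ded}(1), $A\models F$ iff all overlaps $E(C_i)\cdot E(A)$ are $\ge 1$; then every clause logit is $\le-\lambda<-\lambda/2$, attention concentrates on the reference row, and the output is $\approx 1$, whereas if some overlap is $0$ that clause's logit is $0>-\lambda/2$, attention avoids the reference row, and the output is $\approx 0$. (The last row's logit is $-\lambda\|E(A)\|_1\le 0$ and its value is $0$, so it is harmless.) Conflict detection is the same construction with $E(A)$ replaced by $\enf(A)$ in the logits and with $\mW_V$ returning $1$ on clause rows and $0$ on the reference row, using \cref{lemma:vec_ded}(2); the one point to check is that the reference and $E(A)$ rows do not make the head misfire when $A$ is (nearly) empty, which holds because a fully falsified $3$-clause requires $|A|\ge 2$, and I fold this case into the value map. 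In both heads the concentration bound gives error $\le 2(c+2)e^{-\lambda/2}$, so $\lambda=\Theta\!\big(\log((c+2)/\epsilon)\big)$ suffices, and all weight entries are then $O(\lambda)$ times fixed $0/1$ patterns, i.e.\ $O(\operatorname{poly}(p,c,\log(1/\epsilon)))$.

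The deduction gadget is the main obstacle: \cref{lemma:vec_ded}(3) asks for a \emph{sum} of $E(C_i)$ over the set $U=\{i: E(C_i)\cdot\enf(A)=1\}$ of clauses that are unit under $A$, but an attention head can only form a \emph{weighted average}. I give clause $i$ the logit $-\lambda\,(E(C_i)\cdot\enf(A))$, the reference row the logit $-\tfrac32\lambda$, and let $\mW_V$ return the first $2p$ coordinates (so $E(C_i)$ on clause rows, $E(A)$ on the last row, $\vzero$ on the reference row). Using the hypothesis $F\not\models\lnot A$, \cref{lemma:vec_ded}(2) forbids any $E(C_i)\cdot\enf(A)=0$, so the top logit level is exactly $-\lambda$, attained precisely by $U$ when $U\neq\varnothing$; the reference level $-\tfrac32\lambda$ lies strictly between $-\lambda$ and the next clause level $\le-2\lambda$, so it wins exactly when $U=\varnothing$, correctly yielding $E(D)=\vzero$. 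The last row sits at logit $-\lambda\|E(A)\|_1$, which is $\le-2\lambda$ once $|A|\ge 2$ and hence negligible, while for $|A|\le 1$ one checks that $U=\varnothing$ and that $E(A)$ is supported only on variables assigned by $A$, so its contribution is removed by the $-\eass(A)$ step below. Thus the head outputs $\tfrac1{|U|}\sum_{i\in U}E(C_i)$ up to $\|\cdot\|_\infty$ error $\eta\le 2(c+2)e^{-\lambda/2}$. A single ReGLU MLP layer then amplifies each coordinate by $t\mapsto\min(2c\,t,1)$ — which sends every exact value $k/|U|$ with $k\ge1$ to $1$ and $0$ to $0$, recovering $\min(\sum_{i\in U}E(C_i),1)$ because $|U|\le c$ — multiplies by the affine feature $1-\eass(A)$, and so outputs $\max\big(\min(\sum_{i\in U}E(C_i),1)-\eass(A),\vzero\big)$, the formula of \cref{lemma:vec_ded}(3). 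Since the amplification is $2c$-Lipschitz and the rest is $1$-Lipschitz, the output error is $\le 2c\,\eta\le 4c(c+2)e^{-\lambda/2}$, so $\lambda=\Theta\!\big(\log(c/\epsilon)\big)$ again works with weights of size $O(\operatorname{poly}(p,c,\log(1/\epsilon)))$.

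What remains is bookkeeping: writing down the explicit $\mW_Q,\mW_K,\mW_V,\mW_1,\mW_2$ realizing the above as fixed patterns independent of $F$ and $A$, and carrying the constants through the softmax-concentration estimates. The genuinely delicate point is the deduction head — keeping the logit gaps clean under the ``no conflict'' hypothesis and controlling how the $O(c)$-factor amplification magnifies the softmax leakage; once the amplification constant and $\lambda$ are chosen as above, the rest is the routine ``large $\lambda$ drowns $O(c)$ competitors'' argument of \citet{cottheory}.
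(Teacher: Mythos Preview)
Your proposal is correct and follows essentially the same approach as the paper: both use the reference row with a fractional logit offset (the paper uses $-0.5$ for satisfiability/conflict and $-1.5$ for deduction, exactly your $-\lambda/2$ and $-3\lambda/2$) to sit strictly between the integer-valued clause logits, and both read off the answer from whether the reference row or some clause row wins. The only cosmetic differences are that the paper first proves a saturated-attention version and then invokes a generic softmax-approximation corollary, whereas you track the softmax leakage directly; and for the deduction MLP the paper scales the \emph{values} by $c$ (so the head already outputs $m\vz$ with $m=c/|U|>1$) and then uses the identity $\mathrm{ReLU}(m\vz-\eass(A))-\mathrm{ReLU}(m\vz-1)$, whereas you leave the values unscaled and do the $2c$-amplification inside the MLP followed by multiplication by $1-\eass(A)$ --- these are equivalent up to where the factor of $c$ lives. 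Your treatment of the last-row ($E(A)$) contamination in the deduction head is actually more explicit than the paper's, which silently relies on the same $\eass(A)$-subtraction to kill it.
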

% \cref{lemma:parallel} essentially shows that when given the binary encoding of clauses and a partial assignment, a single Transformer layer can perform satisfiability checking, conflict detection, and deduction over all clauses in the formula in parallel.

Given the above implementations of logical operations, the high-level overview of our constructions works as follows: 1) Find the previous clause separator (\cd{0}) or backtrack token and compute clause encodings $E(C_i)$ and partial assignment encodings $E(A)$ by summing up the one-hot token embeddings. 2) Compute $\boldsymbol{1}_{A\models F}$, $\boldsymbol{1}_{A\not\models F}$, and $E(D)$ as described in \cref{lemma:parallel}. 3) Determine other conditions such as whether there are assumption variables present in the current assignment, etc., that are required to decide the next action. 4) Determine the output token based on a prioritized list of conditions. e.g., if $A\models F$ output the token \cd{SAT}, else if $F\models \lnot A$ and there are assumptions in $A$ output \cd{BackTrack}, etc.

\section{Implementing the Construction with \cname}

In the previous section, we presented a theoretical construction of a Transformer capable of solving SAT instances. However, it can be difficult to gain insights and fully verify its correctness without experimental interactions with the construction. To help address this, we introduce \cname (short for ParametricTransformer), which instantiates Transformer weights based on high-level specifications written as NumPy code performing array operations.

% provides a framework for converting theoretical constructions of Transformers into practical models to facilitate empirical analysis and validation.

% \TODO{Leyan: Differentiate with the Tracr Compiler}

Both \cname and the specification it accepts are based on Python, and the syntax of the \cname is a restricted subset of Python with the NumPy library. Every variable \texttt{v} in \cname is a 2-D NumPy array of shape \texttt{n} $\times$ \texttt{d\_v}, where \texttt{n} denotes the input number of tokens and \texttt{d\_v} is the dimension of the \cname variable \texttt{v}, which can be different for every variable.

A specification ``program" in \cname is composed of a linear sequence of statements (i.e., no control flow such as loops or branching based on \cname variable values is allowed), where each statement assigns the value of an expression to a variable. Let \texttt{v\_1}, \texttt{v\_2}, $\dots$ denote \cname variable names. Each statement involving \cname variables must be one of the following: \textbf{(1) Binary operations} such as \texttt{v\_1 + v\_2}, \texttt{v\_1 * v\_2}, \texttt{v\_1 - v\_2}; \textbf{(2) Index operations} such as \texttt{v\_1[v\_2, :]} or \texttt{v\_1[:, start:end]}, where $\texttt{start}, \texttt{end}\in [d_\texttt{v\_1}]$; or \textbf{(3) Function calls} from a predefined library of functions that take \cname variables as input.

% The input variables of a \cname program for vocabulary size $V$ are \texttt{tokens} and \texttt{indices}, where \texttt{tokens} is a $V$-dimensional \cname variable containing one-hot token embeddings of the input tokens, and \texttt{indices} is a 1-dimensional \cname variable containing the numerical index of each input token (i.e., the array \texttt{[[1], \dots,[n]]}).

\cname takes in a specification program as well as variable \texttt{out} of dimension $V$ (size of vocabulary) and outputs a PyTorch \texttt{Module} object that implements a Transformer model as defined in Section 2. The following condition is satisfied: For any possible input sequence of tokens $s$ in the vocabulary of length $n$, the token predicted by the Transformer model is the same as the token corresponding to \texttt{out[-1, :].argmax()} (i.e., the token prediction at the last position) when interpreting the specification using the Python interpreter with the NumPy library. We provide more details on our tool and the supported operations in section \cref{sec:appendix_compiler}.

\subsection{Analysis of the Transformer Construction}
\label{sec:compiled_analysis}

With our tool, we successfully implemented our theoretical construction in \cref{thm:sat_search} using the code in \cref{sec:appendix_code} as a PyTorch model. We will refer to this model as the ``compiled" model for the rest of the section. 
% For $p=20$ number of variables, the resulting Transformer has 7 layers, 5 attention heads, 502 embedding dimensions, and 5011862 parameters. 
With a concrete implementation of our theoretical construction in PyTorch, we empirically investigate 3 questions (1) Does the compiled model correctly decide SAT instances? (2) How many steps does the model take to solve actual 3-SAT instances? (3) How does error induced by soft attention affect reasoning accuracy?

\noindent{\bf Evaluation Datasets} We evaluate our models on randomly sampled DIMACS encoding of 3-SAT formulas. We focus on SAT formulas with exactly 3 literals in each clause, with the number of clauses $c$ between $4.1p$ and $4.4p$, where $p$ is the number of variables.

It is well-known that the satisfiability of such random 3-SAT formulas highly depends on the clause/variable ratio, where a formula is very likely satisfiable if $c/p\ll 4.26$ and unsatisfiable if $c/p\gg 4.26$ \citep{satphasetransition}. This potentially allows a model to obtain high accuracy just by observing the statistical properties such as the $c/p$ ratio. To address this, we constrain this ratio for all formulas to be near the critical ratio $4.26$. Furthermore, our ``marginal" datasets contain pairs of SAT vs UNSAT formulas that differ from each other by only a single literal. This means that the SAT and UNSAT formulas in the dataset have almost no statistical difference in terms of $c/p$ ratio, variable distribution, etc., ruling out the possibility of obtaining SAT vs UNSAT information solely via statistical properties. We also use 3 different sampling methods to generate formulas of different solving difficulties to evaluate our model:

\begin{itemize}
    \setlength{\itemsep}{1pt}  % Reduce space between items
    \setlength{\parskip}{1pt}
    \item \textbf{Marginal:} Composed of pairs of formulas that differ by only one token.
    \item \textbf{Random:} Formulas are not paired by differing tokens and each clause is randomly generated.
    \item \textbf{Skewed:} Formulas where polarity and variable sampling are not uniform; For each literal, one polarity is preferred over the other. Some literals are also preferred over others.
\end{itemize}

We generate the above 3 datasets for each variable number $4\leq p \leq 20$, resulting in 51 total datasets of 2000 samples each. Each sample with $p$ variables contains $16.4p$ to $17.6p$ input tokens, which is at least $320$ for $p=20$.

\noindent{\bf Model} Unless otherwise stated, the model we experiment with is compiled from the code in \ref{sec:appendix_code} using \cname with max number of variables $p=20$, max number of clauses $c=88$, and exactness parameter $\beta=20$. The model uses greedy decoding during generation.

\noindent{\bf Accuracy} Our compiled model achieves perfect accuracy on all evaluation datasets described above. This provides empirical justification for our theoretical construction for \cref{thm:sat_search} as well as \cname. This result is included in \cref{fig:lenth_generalization} to compare with trained models.

\noindent{\bf How many steps?}  For all formulas we evaluated, the maximum CoT length is bounded by $8p\cdot 2^{0.08p}$, which is significantly less than the theoretical bound of $p\cdot 2^{(p+1)}$. This indicates that the model can use deduction to reduce the search space significantly. See appendix \cref{fig:cotlen}.

\begin{table*}[ht]
\centering
\label{tab:ood_accuracy}
\begin{tabular}{cl ccc | ccc}
\toprule
& & \multicolumn{3}{c}{$p\in [6,10]$} & \multicolumn{3}{c}{$p\in [11,15]$} \\
\cmidrule(lr){3-5} \cmidrule(lr){6-8}
& & Marginal & Random & Skewed & Marginal & Random & Skewed \\
\midrule
&\textbf{Marginal} & 99.88\%  & 99.99\% & 99.99\%  & 99.82\% & 99.89\% & 99.81\% \\
SAT vs UNSAT &\textbf{Random}   & 99.96\% & 100.00\% & 100.00\% & 99.11\% & 99.75\% & 99.55\% \\
&\textbf{Skewed}   & 99.96\% & 100.00\% & 99.99\%  & 99.41\% & 99.74\% & 99.48\% \\
\midrule
&\textbf{Marginal} & 98.50\%  & 97.33\% & 88.72\%  & 98.66\% & 97.57\% & 86.06\% \\
Full Trace Correct &\textbf{Random}   & 99.40\% & 99.04\% & 93.12\% & 98.56\% & 97.99\% & 91.70\% \\
&\textbf{Skewed}   & 99.38\% & 99.16\% & 97.72\%  & 97.02\% & 95.98\% & 91.51\% \\
\bottomrule
\end{tabular}

\caption{Average accuracies (\%) of SAT/UNSAT prediction and full trace accuracy for models trained and tested on different datasets in the training regime for number of variables $p\in [6,10]$ and $p\in [11,15]$. Columns denote train datasets, and rows denote test datasets. Each accuracy is computed over $10000$ total samples. }
\end{table*}

\section{Can Transformer Learn SAT Solving?}
\begin{figure*}[t!]
    \centering
    \begin{subfigure}[b]{0.49\textwidth}
        \centering
        \includegraphics[width=\textwidth]{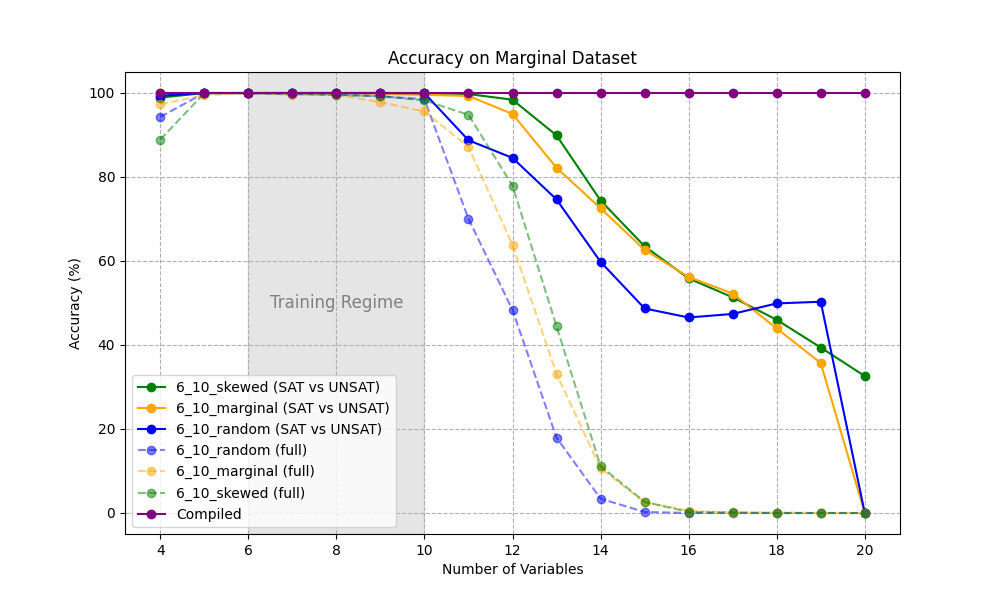}
        \label{fig:subexample1}
    \end{subfigure}
    \hfill
    \begin{subfigure}[b]{0.49\textwidth}
        \centering
        \includegraphics[width=\textwidth]{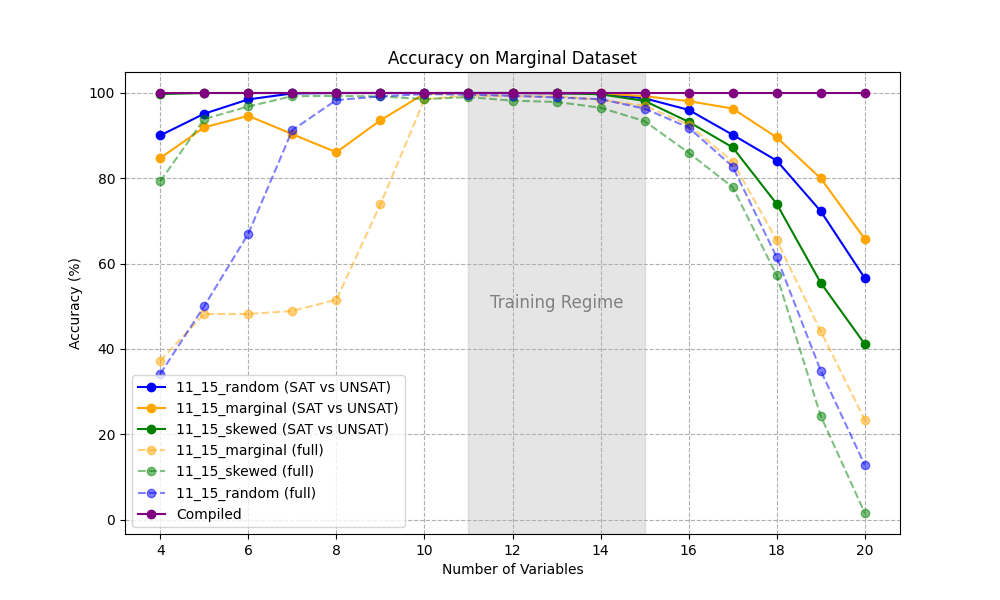}
        \label{fig:subexample2}
    \end{subfigure}
    \vspace{-0.2in}
    \caption{Result of the Length generalization experiments, showing SAT/UNSAT prediction accuracy (solid) and full trace accuracy (opaque, dashed) of Transformer models trained on the marginal, random, and skewed dataset with CoT on the marginal dataset over 4-20 variables. Left: model trained on 6-10 variables. Right: model trained on 11-15 variables. Compiled refers to the compiled model corresponding to our theoretical construction.}
    \label{fig:lenth_generalization}
\end{figure*}

Our previous sections showed that Transformer and weights exist for solving SAT instances using CoT with backtracking and deduction. However, it is unclear to what extent Transformers can learn such formal reasoning procedures by training on SAT formulas. Previously, \citet{paradox} showed that when using a single pass of a Transformer model (without CoT), Transformers fail to generalize to logical puzzles sampled from different distributions even when they have the same number of propositions.

This section provides proof-of-concept evidence that training on the CoT procedure with deduction and backtracking described in \cref{fig:cot} can facilitate Out-of-Distribution generalization within the same number of variables.

\noindent{\bf Datasets} In \cref{sec:compiled_analysis} we introduced 3 different distributions over random 3-SAT formulas of varying difficulties. For training data, we use the same sampling methods, but instead of having a separate dataset for each variable number $p$, we pick 2 ranges $p\in [6,10]$ and $p\in [11, 15]$, where for each sample a random $p$ value is picked uniformly random from the range. Each formula with $p$ variables contains $16.4p$ to $17.6p$ tokens. This results in $2\times 3$ training datasets, each containing $5\times 10^5$ training samples\footnote{The number of training samples is negligible compared to the total number of possible formulas. There are more than $p^{12p}$ 3-SAT formulas with $p$ variables, which is $>10^{56}$ for $p=6$}, with balanced SAT vs UNSAT samples. For each formula, we generate the corresponding CoT in the same format as \cref{fig:cot} using a custom SAT Solver.
The evaluation data is exactly the same as \cref{sec:compiled_analysis}.

\noindent{\bf Model and Training} We use the LLaMa \citep{Touvron2023LLaMAOA} architecture with 70M and 160M parameters for the training experiments, which uses Rotary Positional Encodings (RoPE) and SwiGLU as the activation function for MLP layers. Following prior works \citep{cottheory}, we compute cross-entropy loss on every token in the CoT but not the DIMACS encoding in the prompt tokens. We provide further training details in \cref{sec:appendix_training}. We also permute the variable IDs for training samples to ensure that the model sees all possible input tokens for up to 20 variables.

\noindent{\bf Evaluation Criteria} 
We evaluate our model using two criteria: SAT/UNSAT accuracy and full trace correctness. SAT/UNSAT accuracy evaluates the model's binary prediction based on the first token in $\{\tok{SAT}, \tok{UNSAT}\}$ generated by the model, compared against the ground truth satisfiability of the formula. If the model fails to generate $\{\tok{SAT}, \tok{UNSAT}\}$ within the context length, the prediction is considered incorrect, which can cause accuracy to drop significantly below 50\%. Full trace correctness checks if \emph{every} token generated by the model adheres to the abstract DPLL procedure (\cref{def:abstract_dpll}) under our CoT definition. While strict, the ``correct" CoT is not unique; the model may freely choose variable assignment and deduction orders.

\subsection{Intra-length OOD Generalization}

\label{sec:intra_len}
Our first set of experiments evaluates the model's performance on SAT formulas sampled from different distributions from training, but the number of variables in formulas remains the same ($p\in [6,10]$ and $p\in [11, 15]$ for both train and test datasets).

As shown in \cref{tab:ood_accuracy}, our trained models achieve near-perfect SAT vs UNSAT prediction accuracy when tested on the same number of variables as the training data, even when on formulas sampled from different distributions. The model also strictly follows a correct reasoning procedure for most samples. Recall that the ``marginal" dataset has SAT vs UNSAT samples differing by a single token (out of at least $16p$ tokens in the input formula), which minimizes statistical evidence that can be used for SAT/UNSAT prediction. Our experiments suggest that the LLM have very likely learned logical reasoning procedures using CoT that can be applied to all formulas with the same number of variables as the data they are trained on.

\subsection{Limitations in Length Generalization}
\label{sec:length_generalization}

The second experiment evaluates the model's ability to generalize to formulas with a different number of variables than seen during training. We use the model trained on 3 data distributions described in section \ref{sec:intra_len} and evaluate the marginal dataset with 4-20 variables, generated using the three methods described, with 2,000 samples each. For this experiment, we evaluate the accuracy of the binary SAT vs UNSAT prediction.

\noindent{\bf Results}
In Figure \ref{fig:lenth_generalization}, our results indicate that performance degrades drastically beyond the training regime when the number of variables increases. This shows that the model is unable to learn a general SAT-solving algorithm that works for all inputs of arbitrary lengths, which corroborates our theoretical result where the size of the Transformer for SAT-solving depends on the number of variables. 

\noindent{\bf Ethical Statement} This paper presents work whose goal is to advance the field of Machine Learning. There are many potential societal consequences of our work, none which we feel must be specifically highlighted here.

%\FloatBarrier

%\FloatBarrier

\bibliography{ref}
\bibliographystyle{iclr2025_conference}

\appendix
\section{Training Details}
\label{sec:appendix_training}
We use Llama \cite{Touvron2023LLaMAOA} models in the HuggingFace library. For the 70M model, we use models with 6 layers, 512 embedding dimensions, 8 heads, 512 attention hidden dimensions, and 2048 MLP hidden dimensions. For the 140M model, we use 12 layers, 768 embedding dimensions, 12 heads, 768 attention hidden dimensions, and 3072 MLP hidden dimensions. Both models have 850 context size. We trained for 5 epochs on both datasets using the Adam optimizer with a scheduled cosine learning rate decaying from $6\times 10^{-4}$ to $6\times 10^{-5}$ with $\beta_1=0.9$ and $\beta_2=0.95$.

\section{Additional Experiment Results}
\paragraph{Number of CoT Tokens for Theoretical Construction}
\paragraph{Effect of Soft Attention}
\paragraph{Length Generalization Results on Additional Datasets}
In \cref{fig:cotlen} we provide results on the number of CoT tokens required to solve randomly generated SAT instances. In \cref{fig:soft_attn} we provide results on how the SAT/UNSAT prediction accuracy is affected by numerical errors introduced by softmax. In \cref{fig:lenth_generalization_appendix} we present results for length generalization (described in \cref{sec:length_generalization}) on the marginal and skewed datasets.

\begin{figure}
    \centering
    \includegraphics[width=0.8\linewidth]{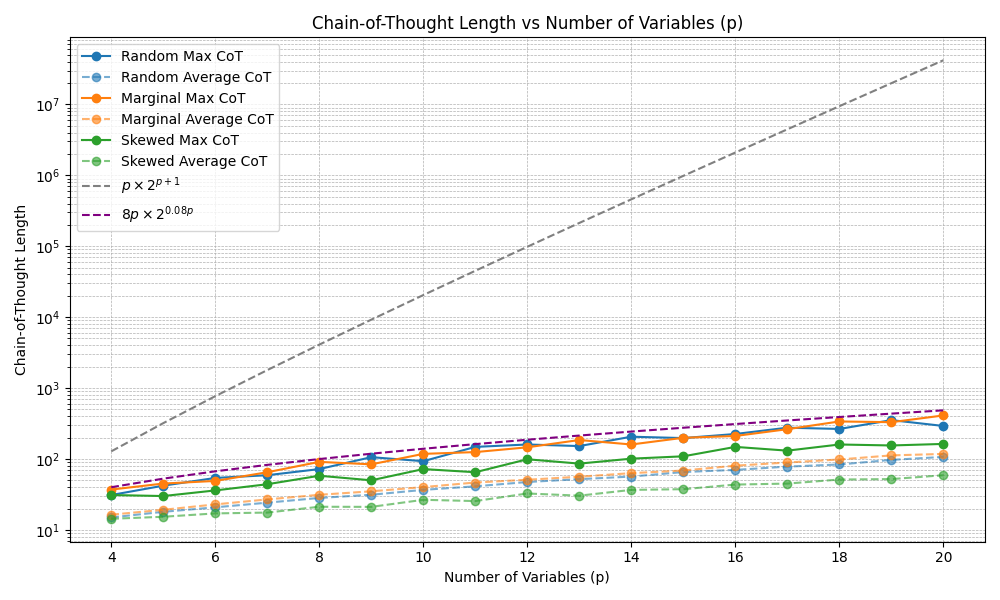}
    \caption{CoT Lengths generated by the compiled SAT-Solver Model vs the number of boolean variables in sampled SAT formulas, y-axis in log scale. Solid lines denote the maximum CoT length for each dataset while opaque, dashed lines denote the average CoT length. The empirical maximum CoT length in our datasets is bounded by $8p\cdot 2^{0.08p}$}.
    \label{fig:cotlen}
\end{figure}

\begin{figure}
    \centering
    \includegraphics[width=0.7\linewidth]{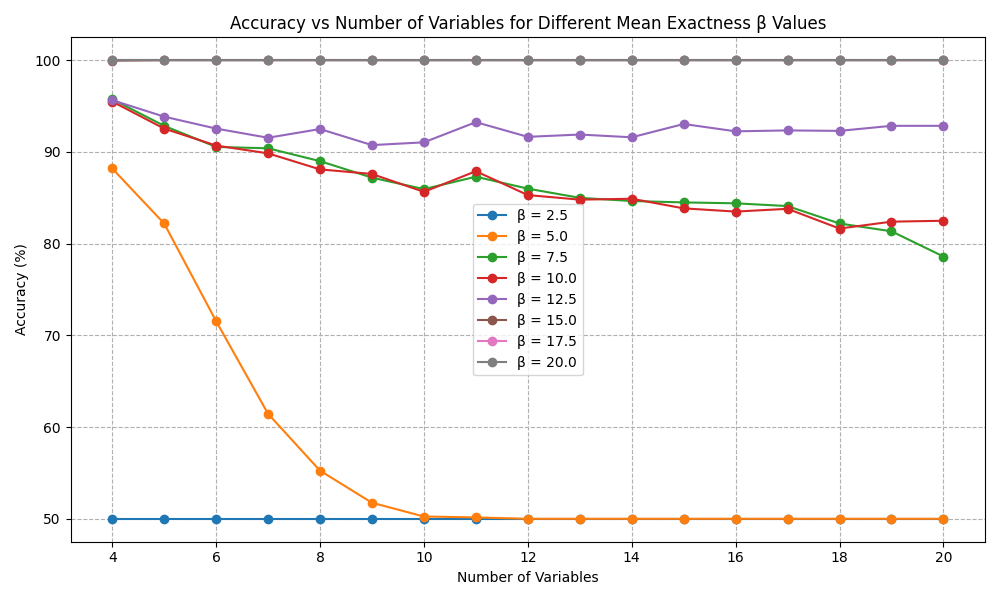}
    \caption{The impact of soft attention in Transformer layers on the SAT/UNSAT prediction accuracy. $\beta$ is a scaling factor that allows the soft attention operation to better simulate hard attention at the cost of larger model parameter values in attention layers. The model achieves perfect accuracy on all ``marginal" datasets starting at $\beta=17.5$, and for lower $\beta$ values, accuracy is negatively correlated with the number of variables in the datasets. }
    \label{fig:soft_attn}
\end{figure}

\begin{figure*}[t!]
    \centering
    \begin{subfigure}[b]{0.49\textwidth}
        \centering
        \includegraphics[width=\textwidth]{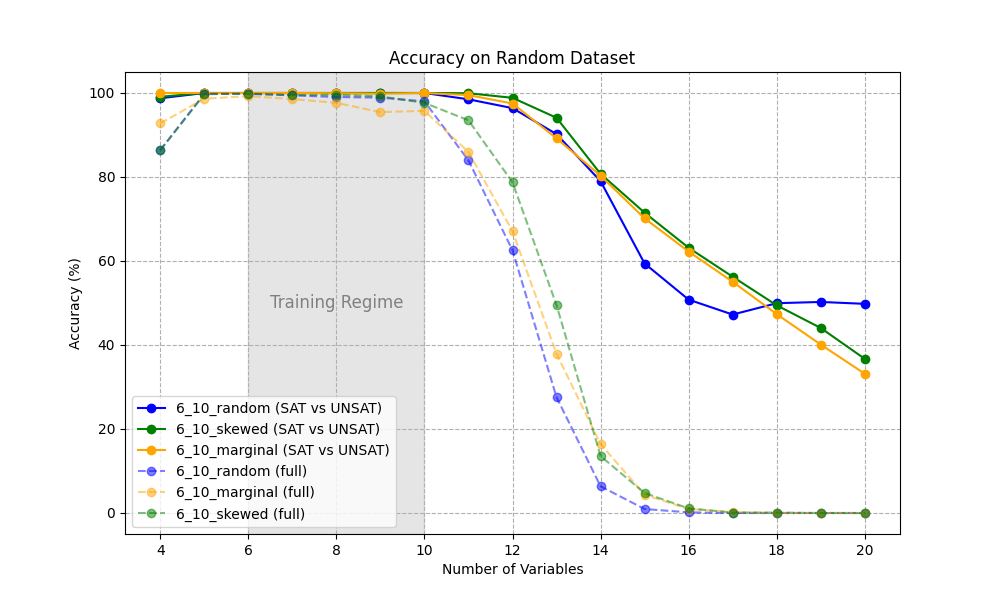}
        \label{fig:trained_6_10_random}
    \end{subfigure}
    \hfill
    \begin{subfigure}[b]{0.49\textwidth}
        \centering
        \includegraphics[width=\textwidth]{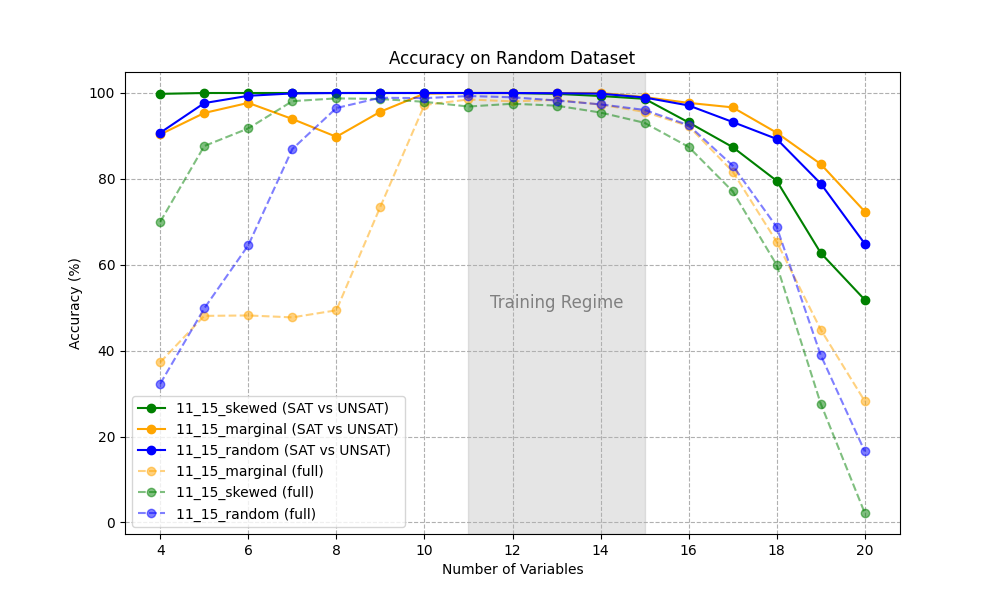}
        \label{fig:trained_11_15_random}
    \end{subfigure}
    \begin{subfigure}[b]{0.49\textwidth}
        \centering
        \includegraphics[width=\textwidth]{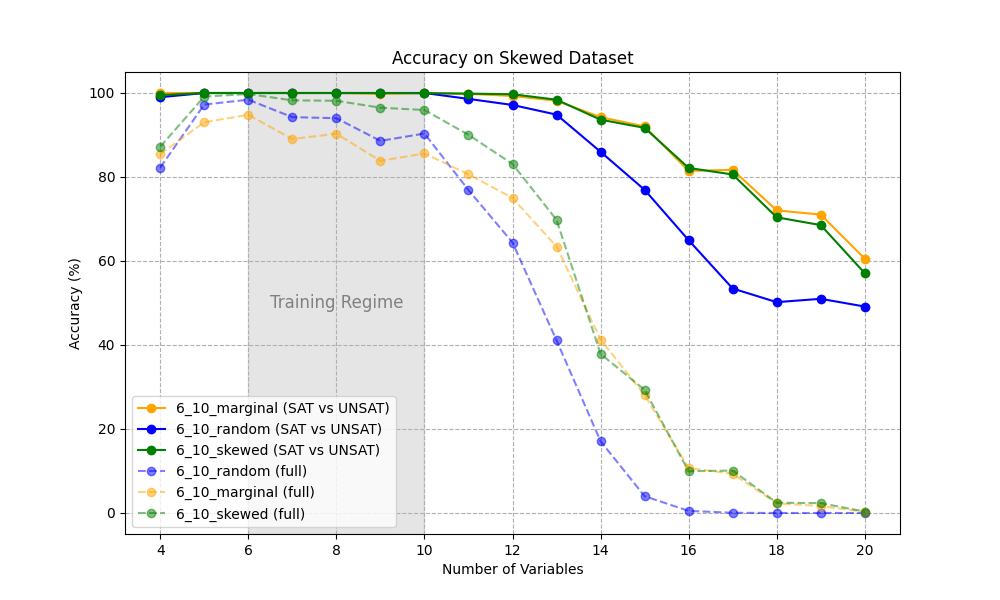}
        \label{trained_6_10_skewed}
    \end{subfigure}
    \hfill
    \begin{subfigure}[b]{0.49\textwidth}
        \centering
        \includegraphics[width=\textwidth]{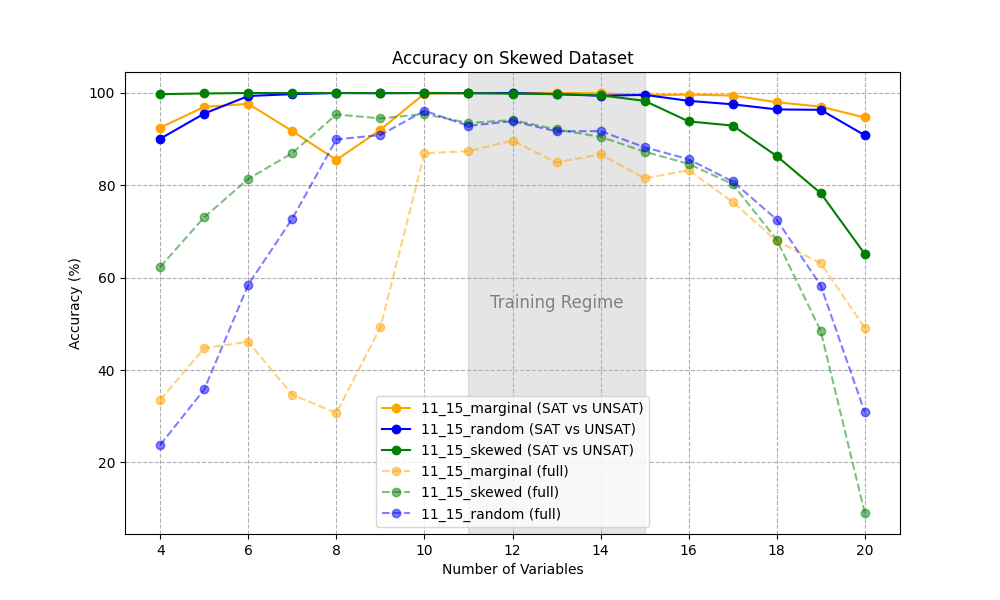}
        \label{fig:trained_11_15_skewed}
    \end{subfigure}
    \caption{Result of the Length generalization experiments on the random and skewed evaluation dataset. The meaning of different lines are the same as \cref{fig:lenth_generalization}}
    \label{fig:lenth_generalization_appendix}
\end{figure*}

\section{Proofs}
\label{sec:sat_search_proof}
\subsection{Preliminaries on SAT Solving}
\label{sec:DIMACS}
\paragraph{SAT} The Boolean satisfiability problem (SAT) is the problem of determining whether there exists an assignment $A$ of the variables in a Boolean formula $F$ such that $F$ is true under $A$. 

\paragraph{3-SAT}
In this paper, we only consider 3-SAT instances in \textit{conjunctive normal form} (CNF), where groups of at most 3 variables and their negations (\textit{literals}) can be joined by OR operators into clauses, and these clauses can then be joined by AND operators. We use the well-known \textit{DIMACS} encoding for CNF formulas where each literal is converted to a positive or negative integer corresponding to its index, and clauses are separated by a 0 (which represents an $\land$ operation). SAT problems where the Boolean formula is expressed in conjunctive normal form (CNF) with three literals per clause will be referred to as \textit{3-SAT}. A formula in CNF is a conjunction (i.e. ``AND") of clauses, a \textbf{clause} is a disjunction (i.e. ``OR") of several \textbf{literals}, and each literal is either a variable or its negation. In the case of 3-SAT, each clause contains at most three literals. An example 3-SAT formula with 4 variables and 6 clauses is:
\[
\begin{aligned}
& (x_1 \lor \neg x_2) \land (\neg x_1 \lor x_2 \lor \neg x_3) \land (x_2 \lor x_4 \lor \neg x_1) \land \\
& (x_1 \lor \neg x_3 \lor x_4) \land (\neg x_2 \lor \neg x_3 \lor \neg x_4) \land (\neg x_4 \lor \neg x_1)\\
\end{aligned}
\]

In the above formula, $(x_1 \lor \neg x_2)$ is a clause, which contains the literals $x_1$ and $\lnot x_2$.

The 3-SAT problem refers to determining if any assignment of truth values to the variables allows the formula $\phi$ to evaluate as true. It is well-known that 3-SAT is NP-hard and is widely believed to be unsolvable in polynomial time.

\paragraph{DIMACS Encoding}
The DIMACS format is a standardized encoding scheme for representing Boolean formulas in conjunctive normal form (CNF) for SAT problems. Each clause in the formula is represented as a sequence of integers followed by a terminating ``0" (i.e. ``0" represents $\land$ symbols and parentheses). Positive integers correspond to variables, while negative integers represent the negations of variables. For instance, if a clause includes the literals $x_1$, $\neg x_2$, and $x_3$, it would be represented as "\cd{1 -2 3 0}" in the DIMACS format.

For the 3-SAT example in the previous paragraph, the corresponding DIMACS representation would be:
\begin{center}
\texttt{1 -2 0 -1 2 -3 0 2 4 -1 0 1 -3 4 0 -2 -3 -4 0 -4 -1 0
}
\end{center}

\paragraph{Reducing a Formula.}
Let 
\[
F \;=\;\bigwedge_{i=1}^c C_i
\]
be a 3-SAT formula, where each $C_i$ is a clause (i.e.\ a disjunction of up to three literals).  The \emph{reduction} of $F$ by $A$, denoted $F|_A$, is defined by:
\begin{enumerate}
    \item \textbf{Remove (drop) any clause satisfied by $A$.}\\ 
    A clause $C_i$ is satisfied by $A$ if there is a literal $\ell \in C_i$ such that $\ell \in A$.  In that case, $C_i$ is automatically \texttt{True} and can be omitted from the conjunction.
    
    \item \textbf{Delete (false) literals contradicting $A$.}\\ 
    For each remaining clause $C_i$, if it contains a literal $\ell$ that is \emph{false} under $A$, remove that literal from $C_i$. Specifically:
    \begin{itemize}
      \item If $x_j \in A$ (so $x_j$ is \texttt{True}), then any literal $\lnot x_j$ in $C_i$ becomes false and is removed.
      \item If $\lnot x_j \in A$ (so $x_j$ is \texttt{False}), then any literal $x_j$ in $C_i$ is removed.
    \end{itemize}
    If a clause loses all its literals through this process, it becomes an \emph{empty clause} and the formula is immediately \texttt{False}.
\end{enumerate}
Formally, for each clause $C_i \subseteq L$, define
\[
C_i|_A 
\;:=\; 
\bigl(C_i \setminus \{\ell \in C_i : \ell \text{ is forced false by }A\}\bigr)
\]
and keep $C_i|_A$ only if it is not already satisfied by $A$. Then
\[
F|_A 
\;=\; 
\bigwedge_{\substack{i=1\\C_i \text{ not satisfied}}}^{c} \bigl(C_i|_A \bigr).
\]

As an example, suppose 
    \[
    F \;=\; (x_1 \lor \lnot x_2) \;\land\; (\lnot x_1 \lor x_3)\;\land\;(x_2 \lor \lnot x_3).
    \]
    Let $A=\{x_1\}$. Then:
    \begin{enumerate}
        \item The first clause $(x_1 \lor \lnot x_2)$ is satisfied by $x_1\in A$.  Hence we \emph{remove} it from the formula.
        \item In the second clause $(\lnot x_1 \lor x_3)$, the literal $\lnot x_1$ is false (since $x_1$ is set \texttt{True}).  We remove $\lnot x_1$ and are left with $(x_3)$.
        \item The third clause $(x_2 \lor \lnot x_3)$ is untouched: $x_1$ does not appear, so no literal is removed. However, it is not satisfied by $x_1$, so we keep it.
    \end{enumerate}
    Thus,
    \[
    F|_A \;=\; (x_3) \;\land\; (x_2 \lor \lnot x_3).
    \]
     If a partial assignment forces a clause to become empty, the whole formula becomes unsatisfiable under that assignment. For instance, with
    \[
    F \;=\;(x_1 \lor x_2) \;\land\;(\lnot x_1 \lor \lnot x_2),
    \]
    and a partial assignment $A=\{x_1, x_2\}$, we see:
    \begin{itemize}
      \item The first clause $(x_1 \lor x_2)$ is satisfied by $x_1\in A$ and gets removed.
      \item In the second clause $(\lnot x_1 \lor \lnot x_2)$, both $\lnot x_1$ and $\lnot x_2$ contradict $A$, so both are removed. This leaves the second clause empty, which means $F|_A$ is an empty conjunction (i.e.\ \texttt{False}).
    \end{itemize}
    Hence no full extension of $A$ can satisfy $F$.

\paragraph{Unit Propagation.}
An additional reduction step performed in SAT solving is \textbf{unit propagation}.  After applying a partial assignment $A$ to a formula $F$ (obtaining $F|_A$), some clauses may reduce to a \emph{single literal} (called a \emph{unit clause}).  Formally, a clause $C = \{\,\ell_1,\dots,\ell_k\}$ is \textbf{unit} if $k=1$.  If $C$ is unit, its lone literal $\ell$ must be assigned \texttt{True} in any extension of $A$ that satisfies $F$.  Concretely:
\begin{enumerate}
    \item \textbf{Identify unit clauses.} Scan the reduced formula $F|_A$.  If there is a clause $C_u$ with exactly one remaining literal $\ell$, then $\ell$ is forced \texttt{True}.
    \item \textbf{Extend the partial assignment.} Insert the forced literal $\ell$ into $A$.  
    \item \textbf{Re-reduce the formula.} Remove any clauses satisfied by $\ell$, and remove $\lnot \ell$ from all remaining clauses.
\end{enumerate}
This process may uncover additional unit clauses in subsequent steps, so unit propagation continues iteratively until there are no more clauses of size 1.  If at any point a clause becomes empty, we conclude that the current assignment $A$ cannot be extended to a satisfying assignment.  

\paragraph{Example.} 
Consider $F = (x_1 \lor \lnot x_2) \land (\lnot x_1 \lor x_3) \land (x_2 \lor \lnot x_3)$ and a partial assignment $A = \{\lnot x_1\}$.  
\begin{itemize}
    \item First, $F|_{A}$ removes $\lnot x_1$ (now satisfied) from $(\lnot x_1 \lor x_3)$, leaving the unit clause $(x_3)$. Thus $x_3$ is forced \texttt{True}.
    \item We add $x_3$ to $A$, giving $A \leftarrow A \cup \{x_3\}$. Re-reducing the formula removes any literal $\lnot x_3$.  If that step causes another clause to become unit, we repeat.
\end{itemize}
This iterative assignment of forced literals often simplifies the problem significantly before any broader search is required.

\subsection{Proof of \cref{lemma:vec_ded}}
We prove each of the three statements in the lemma, showing that the vector-based definitions correspond to the logical operations described.

\subsection*{1. Satisfiability Checking}

\[
A \models F \quad \Longleftrightarrow \quad \min_{i \in [c]} \Big(E(C_i) \cdot E(A)\Big) \geq 1.
\]

\paragraph{Logical Interpretation.}
The left-hand side, $A \models F$, means that every clause $C_i$ in $F$ is satisfied by $A$. This is equivalent to saying that, for every clause $C_i$, there exists at least one literal $l \in C_i$ such that $l \in A$. 

\paragraph{Vector Translation.}
For a clause $C_i$ and a partial assignment $A$, the dot product $E(C_i) \cdot E(A)$ computes the number of literals in $C_i$ that are also in $A$:
\[
E(C_i) \cdot E(A) = \sum_{v=1}^p \ind_{\{x_v \in C_i\}} \cdot \ind_{\{x_v \in A\}} + \sum_{v=1}^p \ind_{\{\lnot x_v \in C_i\}} \cdot \ind_{\{\lnot x_v \in A\}}=|C_i\cap A|.
\]
If $E(C_i) \cdot E(A) \geq 1$, this means there is at least one literal in $C_i \cap A$, and hence $C_i$ is satisfied. Taking the minimum over all clauses ensures that every clause $C_i$ is satisfied, which is precisely the condition for $A \models F$.

\subsection*{2. Conflict Detection}

\[
F \models \lnot A \quad \Longleftrightarrow \quad \min_{i \in [c]} \Big(E(C_i) \cdot \enf(A)\Big) = 0.
\]

\paragraph{Logical Interpretation.}
The left-hand side, $F \models \lnot A$, means that $F$ contradicts $A$, i.e., there exists a clause $C_i$ in $F$ such that all literals in $C_i$ are forced false by $A$. This happens if and only if no literal in $C_i$ is ``not-false'' under $A$.

\paragraph{Vector Translation.}
For a clause $C_i$, the dot product $E(C_i) \cdot \enf(A)$ computes the number of literals in $C_i$ that are \emph{not forced false} by $A$:
\[
E(C_i) \cdot \enf(A) = \sum_{v=1}^p \ind_{\{x_v \in C_i\}} \cdot \ind_{\{\lnot x_v \notin A\}} + \sum_{v=1}^p \ind_{\{\lnot x_v \in C_i\}} \cdot \ind_{\{x_v \notin A\}}.
\]
If $E(C_i) \cdot \enf(A) = 0$, this means all literals in $C_i$ are forced false by $A$, and $C_i$ is a contradiction. Taking the minimum over all clauses ensures that this happens for at least one clause, which corresponds to $F \models \lnot A$.

\subsection*{3. Deduction (Unit Propagation)}

\[
E(D) \;=\; \max\Bigl(
        \min\Bigl(\sum_{i \in [c]} 
            \boldsymbol{1}_{\{E(C_i)\cdot \enf(A)=1\}}
            \cdot E(C_i),\; 1\Bigr) -\; \eass(A),\; 0
        \Bigr).
\]

\paragraph{Logical Interpretation.}
A clause $C_i$ becomes a \emph{unit clause} under $A$ if all but one of its literals are forced false by $A$. In this case, the remaining literal must be set to \texttt{True} in any extension of $A$. The set $D$ consists of all such literals deduced via unit propagation.

\paragraph{Vector Translation.}
For each clause $C_i$, the condition $E(C_i) \cdot \enf(A) = 1$ identifies unit clauses after reduction, i.e., those with exactly one literal not forced false by $A$. For such clauses, $E(C_i)$ encodes the remaining literal.

The summation 
\[
\sum_{i \in [c]} \ind_{\{E(C_i) \cdot \enf(A) = 1\}} \cdot E(C_i)
\]
computes a vector where each coordinate accumulates contributions from unit clauses identifying the corresponding literal. Taking $\min(\cdot, 1)$ elementwise ensures that each coordinate is at most $1$, avoiding overcounting. Finally, subtracting $\eass(A)$ removes literals that are already assigned by $A$, leaving only the newly deduced literals.

This matches the standard logical definition of unit propagation.

\qed
\subsection{Useful Lemmas for Transformers}
In this section, several useful results on Transformer operations on their approximation capavilities. Specifically, an MLP with ReGLU can exactly simulate ReLU, linear operations, and multiplication without error. For Self-attention lemmas, we directly adapt from \cite{cottheory}.
\paragraph{Lemmas for MLP with ReGLU activation}
This section shows several lemmas showing the capabilities of the self-attention operation and MLP layers to approximate high-level vector operations. These high-level operations are later used as building blocks for the Transformer SAT-solver. Specifically, with appropriate weight configurations, a 2-layer MLP with ReGLU activation $f(\vx)=\mW_2[(\mW_1\vx  + \vb) \otimes \operatorname{relu}(\mV \vx + \vc)]$ can approximate the following vector operations for arbitrary input $\vx$:
\begin{itemize}
    \item Simulate a 2-layer MLP with ReLU activation: $\mW_2\operatorname{ReLU}(\mW'_1\vx+\vb'_1)+\vb'_2$
    \item Simulate any linear operation $\mW\vx$
    \item Simulate element-wise multiplication: $\vx_1\otimes \vx_2$
\end{itemize}

\begin{lemma}[Simulating a 2-Layer ReLU MLP with ReGLU Activation]
\label{lemma:relu_to_reglu}
A 2-layer MLP with ReGLU activation function can simulate any 2-layer MLP with ReLU activation function.
\end{lemma}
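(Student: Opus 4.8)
The plan is to show that the ReGLU activation $\sigma_{\text{ReGLU}}(\vu_i) = \vu_{i,1} \otimes \operatorname{ReLU}(\vu_{i,2})$ can reproduce a plain ReLU by making the multiplicative (gating) branch act as the identity. Concretely, suppose the target is the map $\vx \mapsto \mW_2' \operatorname{ReLU}(\mW_1' \vx + \vb_1') + \vb_2'$ with $\mW_1' \in \sR^{d \times m}$, $\vb_1' \in \sR^m$, $\mW_2' \in \sR^{m \times d}$, $\vb_2' \in \sR^d$. In the ReGLU block we have parameters $\mW_1, \vb_1$ producing the first half $\vu_{i,1}$ and a second set producing the second half $\vu_{i,2}$; after the elementwise product we multiply by $\mW_2$ and add $\vb_2$. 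I would choose the first half to be the constant $\boldsymbol{1}$ (set the corresponding rows of $\mW_1$ to zero and the corresponding entries of $\vb_1$ to $1$), and the second half to compute $\mW_1' \vx + \vb_1'$ (put $\mW_1'$ and $\vb_1'$ in the corresponding block). Then $\vu_{i,1} \otimes \operatorname{ReLU}(\vu_{i,2}) = \boldsymbol{1} \otimes \operatorname{ReLU}(\mW_1' \vx + \vb_1') = \operatorname{ReLU}(\mW_1' \vx + \vb_1')$, and setting $\mW_2 := \mW_2'$, $\vb_2 := \vb_2'$ finishes the construction.

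The key steps, in order, are: (1) write down the general form of the ReGLU block as it is used in this paper, being careful that the "width" of the ReGLU layer is $2d_{\text{mlp}}$ pre-split and $d_{\text{mlp}}$ post-split, so a target ReLU MLP of hidden width $m$ is realized by a ReGLU block with $d_{\text{mlp}} = m$; (2) specify the block matrices that send the first half to the all-ones vector and the second half to the target preactivation $\mW_1'\vx + \vb_1'$; (3) verify the elementwise identity $\boldsymbol{1} \otimes \operatorname{ReLU}(\vz) = \operatorname{ReLU}(\vz)$; (4) match the output projection and bias. I would also remark that this is \emph{exact} (no approximation error), which is what later lemmas rely on, and that it immediately gives the "simulate any linear operation" claim as the special case where the ReLU is applied to a preactivation that is split into a positive and negated copy, i.e. $\vz = \operatorname{ReLU}(\vy) - \operatorname{ReLU}(-\vy)$, though that sub-claim is really a separate short lemma.

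I do not expect any genuine obstacle here; the lemma is essentially a bookkeeping observation about how GLU-style activations subsume their ungated counterparts. The only thing requiring a little care is the dimension/index bookkeeping: making sure the split of the $2d_{\text{mlp}}$-dimensional preactivation into halves $\vu_{i,1}, \vu_{i,2}$ is aligned with which rows of $\mW_1$ (and entries of $\vb_1$) feed which half, and confirming that forcing one half to a constant does not cost us any expressive width on the other half. If one instead wanted to avoid the constant-ones trick (e.g.\ for numerical-stability reasons consistent with the paper's later discussion of attention/softmax error), an alternative is to realize each ReLU coordinate via $\operatorname{ReLU}(z) = \tfrac{1}{2}\big((z) \cdot \operatorname{ReLU}(\operatorname{sign\text{-}like gate})\big)$, but the constant-gate construction is cleanest and I would present that as the main argument.
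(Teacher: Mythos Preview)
Your proposal is correct and takes essentially the same approach as the paper: set the multiplicative branch of the ReGLU to the constant all-ones vector (zero weight, unit bias) so that the gate acts as the identity, route the target preactivation $\mW_1'\vx+\vb_1'$ through the ReLU branch, and match the output projection and bias. The only cosmetic difference is that the paper writes the two branches with separate weight matrices $(\mW_1,\vb)$ and $(\mV,\vc)$ rather than as two halves of a single $\mW_1,\vb_1$, but the construction is identical; your aside about recovering linear maps via $\operatorname{ReLU}(\vy)-\operatorname{ReLU}(-\vy)$ is not how the paper handles that separate lemma (it instead sets the ReLU branch to a constant), but that is outside the scope of the present statement.
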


\begin{proof}
Let the ReLU MLP be defined as:
\[
g(\vx) = \mW'_2\, \operatorname{ReLU}(\mW'_1 \vx + \vb'_1) + \vb'_2.
\]

Set the weights and biases of the ReGLU MLP as follows:
\begin{align*}
\mW_1 &= \mathbf{0}, \quad \vb_1 = \mathbf{1}, \\
\mV &= \mW'_1, \quad \vb_2 = \vb'_1, \\
\mW_2 &= \mW'_2, \quad \vb = \vb'_2.
\end{align*}

Then, the ReGLU MLP computes:
\[
f(\vx) = \mW'_2 \left[ (\mathbf{0} \cdot \vx + \mathbf{1}) \otimes \operatorname{ReLU}(\mW'_1 \vx + \vb'_1) \right] + \vb'_2.
\]

Simplifying:
\[
f(\vx) = \mW'_2 \left[ \mathbf{1} \otimes \operatorname{ReLU}(\mW'_1 \vx + \vb'_1) \right] + \vb'_2 = \mW'_2\, \operatorname{ReLU}(\mW'_1 \vx + \vb'_1) + \vb'_2 = g(\vx).
\]

Thus, the ReGLU MLP computes the same function as the ReLU MLP.
\end{proof}

\begin{lemma}[Simulating Linear Operations with ReGLU MLP]
\label{lemma:linear_with_reglu}
A 2-layer MLP with ReGLU activation can simulate any linear operation \( f(\vx) = \mW \vx + \vb \).
\end{lemma}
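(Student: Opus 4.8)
The plan is to mimic the construction in the proof of \cref{lemma:relu_to_reglu}: force the ReLU gating branch of the ReGLU unit to be an all-ones vector, so that the element-wise product collapses to the affine ``value'' branch, and then let that branch carry the target map.

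Concretely, write the ReGLU MLP as $f(\vx) = \mW_2\bigl[(\mW_1\vx + \vb)\otimes \operatorname{ReLU}(\mV\vx + \vc)\bigr] + \vb_2$, where $\vb_2$ is the post-projection bias if the formulation includes one. First I would set $\mV = \mathbf{0}$ and $\vc = \mathbf{1}$, so that $\operatorname{ReLU}(\mV\vx + \vc) = \operatorname{ReLU}(\mathbf{1}) = \mathbf{1}$ for every input $\vx$; the element-wise product then satisfies $(\mW_1\vx + \vb)\otimes\mathbf{1} = \mW_1\vx + \vb$. Next I would take $\mW_1 = \mW$, keep the bias of the value branch equal to the target bias $\vb$, set $\mW_2 = \mathbf{I}$ and $\vb_2 = \mathbf{0}$. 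Then $f(\vx) = \mathbf{I}(\mW\vx + \vb) + \mathbf{0} = \mW\vx + \vb$, as required. If the target output dimension differs from the hidden width, one simply pads $\mW$ and $\vb$ with zero rows and truncates afterwards, or equivalently uses a non-square $\mW_2$ that selects the relevant coordinates.

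A one-line alternative route, worth noting, is that any affine map is itself a $2$-layer ReLU MLP via $\mW\vx + \vb = \operatorname{ReLU}(\mW\vx + \vb) - \operatorname{ReLU}(-\mW\vx - \vb)$ with a doubled hidden layer, so the claim also follows immediately from \cref{lemma:relu_to_reglu}; the direct construction above is preferable only in that it does not inflate the hidden dimension.

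I do not expect a genuine obstacle here: the statement is a routine weight-setting argument. The only point requiring care is bookkeeping — keeping straight which bias (the $\vb$ inside the ReGLU versus the post-projection bias $\vb_2$) realizes the target offset, and confirming that the gating branch is identically $\mathbf{1}$ rather than merely nonnegative, so that no input-dependent distortion leaks through the element-wise product.
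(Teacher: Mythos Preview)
Your proposal is correct and is essentially identical to the paper's own proof: both set the gating branch to the constant $\mathbf{1}$ (via $\mV=\mathbf{0}$ and gating bias $\mathbf{1}$), put the target affine map $\mW\vx+\vb$ in the value branch, and take $\mW_2=\mI$ with zero outer bias. Your additional remark about the alternative route through \cref{lemma:relu_to_reglu} is a valid side observation but not needed, as you note.
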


\begin{proof}
To compute a linear function using the ReGLU MLP, we can set the activation to act as a scalar multiplier of one. Set the weights and biases as:
\begin{align*}
\mW_1 &= \mW, \quad \vb_1 = \vb, \\
\mV &= \mathbf{0}, \quad \vb_2 = \mathbf{1}, \\
\mW_2 &= \mI, \quad \vb = \mathbf{0}.
\end{align*}
Here, \( \mI \) is the identity matrix.

Since \( \mV \vx + \vb_2 = \vb_2 = \mathbf{1} \), we have:
\[
\operatorname{ReLU}(\mV \vx + \vb_2) = \operatorname{ReLU}(\mathbf{1}) = \mathbf{1}.
\]
Then, the ReGLU MLP computes:
\[
f(\vx) = \mI \left[ (\mW \vx + \vb) \otimes \mathbf{1} \right] = \mW \vx + \vb.
\]
Thus, any linear operation can be represented by appropriately setting \( \mW_1 \), \( \vb_1 \), and \( \mW_2 \).
\end{proof}

\begin{lemma}[Element-wise Multiplication via ReGLU MLP]
\label{lemma:mlp_mult}
A 2-layer MLP with ReGLU activation can compute the element-wise multiplication of two input vectors \( \vx_1 \) and \( \vx_2 \), that is,
\[
f(\vx) = \vx_1 \otimes \vx_2,
\]
where \( \vx = [\vx_1; \vx_2] \) denotes the concatenation of \( \vx_1 \) and \( \vx_2 \).
\end{lemma}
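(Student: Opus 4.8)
The plan is to realize $\vx_1 \otimes \vx_2$ \emph{exactly} (no approximation error) with a ReGLU layer of hidden width $2d$, where $d = \dim(\vx_1) = \dim(\vx_2)$ and $\vx = [\vx_1;\vx_2] \in \sR^{2d}$. Recall the ReGLU MLP computes $f(\vx) = \mW_2\bigl[(\mW_1\vx + \vb)\otimes\operatorname{ReLU}(\mV\vx + \vc)\bigr]$, so the first (``linear'') branch passes through untouched while the second (``gated'') branch is rectified. The only structural obstacle is that the $\operatorname{ReLU}$ on the gated branch destroys sign information: feeding $x_{2,k}$ directly into the gate and $x_{1,k}$ into the linear branch only recovers $x_{1,k}x_{2,k}$ on the half-space $\{x_{2,k}\ge 0\}$. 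The fix is the exact scalar identity $t = \operatorname{ReLU}(t) - \operatorname{ReLU}(-t)$: apply it to the factor that must go through the gate, and keep the other factor (with its sign intact) in the un-gated branch.

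Concretely, I would group the $2d$ hidden units into pairs $(2k-1,2k)$ for $k \in [d]$. Choose $\mW_1$ so that rows $2k-1$ and $2k$ both read off $x_{1,k}$ (coordinate $k$ of $\vx$), and choose $\mV$ so that row $2k-1$ reads off $x_{2,k}$ (coordinate $d+k$ of $\vx$) and row $2k$ reads off $-x_{2,k}$; set $\vb = \vc = \vzero$. Then hidden unit $2k-1$ evaluates to $x_{1,k}\operatorname{ReLU}(x_{2,k})$ and hidden unit $2k$ evaluates to $x_{1,k}\operatorname{ReLU}(-x_{2,k})$. Finally choose $\mW_2$ and $\vb$ so that output coordinate $k$ equals (hidden unit $2k-1$) minus (hidden unit $2k$) with zero bias. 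Every entry of $\mW_1,\mV,\mW_2$ lies in $\{0,\pm 1\}$.

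By the identity above, for each $k\in[d]$,
\[
f(\vx)_k = x_{1,k}\operatorname{ReLU}(x_{2,k}) - x_{1,k}\operatorname{ReLU}(-x_{2,k}) = x_{1,k}\bigl(\operatorname{ReLU}(x_{2,k}) - \operatorname{ReLU}(-x_{2,k})\bigr) = x_{1,k}\,x_{2,k},
\]
so $f(\vx) = \vx_1 \otimes \vx_2$ holds identically for every $\vx \in \sR^{2d}$. The construction involves no truncation, scaling, or limiting argument, so it is exact and the weights are trivially $O(1)$-bounded, as required by the downstream lemmas. I do not expect any genuine difficulty here; the only things to verify are the routine bookkeeping that the paper's ReGLU definition indeed gates the second half and leaves the first half linear (this asymmetry is exactly what the argument uses) and that the hidden width $2d$ fits the available MLP budget.
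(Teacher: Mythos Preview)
Your proposal is correct and takes essentially the same approach as the paper: put $\vx_1$ in the un-gated branch, duplicate each gate to carry both $\vx_2$ and $-\vx_2$, and recombine via $t=\operatorname{ReLU}(t)-\operatorname{ReLU}(-t)$. The only cosmetic difference is that the paper blocks the hidden units as $[\vx_1\otimes\operatorname{ReLU}(\vx_2);\,\vx_1\otimes\operatorname{ReLU}(-\vx_2)]$ whereas you interleave the pairs, which is just a permutation of coordinates.
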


\begin{proof}
Let $\vx = [\vx_1; \vx_2] \in \mathbb{R}^{2n}$, where $\vx_1, \vx_2 \in \mathbb{R}^n$.

Set the weights and biases:

\[
\begin{aligned}
\mW_1 &= \begin{bmatrix} \mI_n \\ \mI_n \end{bmatrix}, & \vb_1 &= \mathbf{0}_{2n}, \\
\mV &= \begin{bmatrix} \mI_n \\ -\mI_n \end{bmatrix}, & \vb_2 &= \mathbf{0}_{2n}, \\
\mW_2 &= \begin{bmatrix} \mI_n & -\mI_n \end{bmatrix}, & \vb &= \mathbf{0}_n.
\end{aligned}
\]

Compute:

\[
\begin{aligned}
\mW_1 \vx + \vb_1 &= \begin{bmatrix} \vx_1 \\ \vx_1 \end{bmatrix}, \\
\mV \vx + \vb_2 &= \begin{bmatrix} \vx_2 \\ -\vx_2 \end{bmatrix}, \\
\operatorname{ReLU}(\mV \vx + \vb_2) &= \begin{bmatrix} \operatorname{ReLU}(\vx_2) \\ \operatorname{ReLU}(-\vx_2) \end{bmatrix}.
\end{aligned}
\]

The element-wise product:

\[
(\mW_1 \vx + \vb_1) \otimes \operatorname{ReLU}(\mV \vx + \vb_2) = \begin{bmatrix} \vx_1 \otimes \operatorname{ReLU}(\vx_2) \\ \vx_1 \otimes \operatorname{ReLU}(-\vx_2) \end{bmatrix}.
\]

Compute the output:

\[
\begin{aligned}
f(\vx) &= \mW_2 \left[ (\mW_1 \vx + \vb_1) \otimes \operatorname{ReLU}(\mV \vx + \vb_2) \right] + \vb \\
&= \vx_1 \otimes \operatorname{ReLU}(\vx_2) - \vx_1 \otimes \operatorname{ReLU}(-\vx_2) \\
&= \vx_1 \otimes \left( \operatorname{ReLU}(\vx_2) - \operatorname{ReLU}(-\vx_2) \right) \\
&= \vx_1 \otimes \vx_2.
\end{aligned}
\]

Thus, the ReGLU MLP computes $f(\vx) = \vx_1 \otimes \vx_2$ without restrictions on $\vx_2$.
\end{proof}

\paragraph{Capabilities of the Self-Attention Layer}
\label{sec:sacap}
In this subsection, we provide 2 core lemmas on the capabilities of the self-attention layer from \cite{cottheory}. 

Let $n\in\mathbb N$ be an integer and let $\vx_1, \vx_2, \cdots, \vx_n$ be a sequence of vectors where $\vx_i=(\tilde \vx_i,r_i,1) \in [-M,M]^{d+2}$, $\tilde \vx_i\in\mathbb R^d$, $r_i\in\mathbb R$, and $M$ is a large constant. Let $\mK,\mQ,\mV\in\mathbb R^{d'\times (d+2)}$ be any matrices with $\|\mV\|_\infty\le 1$, and let $0<\rho,\delta <M$ be any real numbers. Denote $\vq_i=\mQ\vx_i$, $\vk_j=\mK\vx_j$, $\vv_j=\mV\vx_j$, and define the \emph{matching set} $\gS_i=\{j\leq i: |\vq_i\cdot \vk_j|\le \rho\}$. Equipped with these notations, we define two basic operations as follows:
\begin{itemize}[topsep=0pt,leftmargin=30pt]
\setlength{\itemsep}{0pt}
    \item COPY: The output is a sequence of vectors $\vu_1,\cdots,\vu_n$ with $\vu_i=\vv_{\mathrm{pos}(i)}$, where $\mathrm{pos}(i)=\operatorname{argmax}_{j\in \gS_i}{r_j}$.
    \item MEAN: The output is a sequence of vectors $\vu_1,\cdots,\vu_n$ with $\vu_i=\operatorname{mean}_{j\in \gS_i}{\vv_j}$.
\end{itemize}
\begin{assumption}
\label{ass:attention}[Assumption C.6 from \cite{cottheory}]
    The matrices $\mQ,\mK,\mV$ and scalars $\rho, \delta$ satisfy that for all considered sequences $\vx_1, \vx_2, \cdots, \vx_n$, the following hold:
    \begin{itemize}[topsep=0pt,leftmargin=30pt]
    \setlength{\itemsep}{0pt}
        \item For any $i,j\in [n]$, either $|\vq_i\cdot \vk_j|\le \rho$ or $\vq_i\cdot \vk_j\le -\delta$.
        \item For any $i,j\in[n]$, either $i=j$ or $|r_i- r_j|\geq \delta$.
    \end{itemize}
\end{assumption}

\cref{ass:attention} says that there are sufficient gaps between the attended position (e.g., $\mathrm{pos}(i)$) and other positions. The two lemmas below show that the attention layer with casual mask can implement both COPY operation and MEAN operation efficiently.

\begin{lemma}[Lemma C.7 from \cite{cottheory}]
\label{lemma:TM_copy}
     Assume \cref{ass:attention} holds with $\rho\le\frac{\delta^2}{8M}$. For any $\epsilon > 0$, there exists an attention layer with embedding size $O(d)$ and one causal attention head that can approximate the COPY operation defined above. Formally, for any considered sequence of vectors $\vx_1, \vx_2, \dots, \vx_n$, denote the corresponding attention output as $\vo_1, \vo_2, \dots, \vo_n$. Then, we have $\|\vo_i-\vu_i\|_{\infty}\le\epsilon$ for all $i\in [n]$ with $\gS_i\neq \emptyset$. Moreover, the $\ell_\infty$ norm of attention parameters is bounded by $O(\mathrm{poly}(M,1/\delta,\log(n),\log(1/\epsilon)))$.
\end{lemma}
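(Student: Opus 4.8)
The plan is to exhibit a single causal attention head whose softmax weights, at every position $i$ with $\gS_i\neq\emptyset$, concentrate almost all their mass on $\mathrm{pos}(i)=\operatorname{argmax}_{j\in\gS_i}r_j$, while the head's value vectors reproduce $\vv_j=\mV\vx_j$; then $\vo_i$ is a convex combination of the $\vv_j$ dominated by $\vv_{\mathrm{pos}(i)}=\vu_i$, and the $\ell_\infty$ error is controlled by the leftover softmax mass. This is the construction underlying Lemma C.7 of \cite{cottheory}, which I would re-derive as follows.

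I would define new query/key maps $\mQ',\mK'$ whose bilinear form realizes the logit $B\big(\lambda\,(\vq_i\cdot\vk_j)+r_j\big)$ for scalars $B,\lambda>0$ chosen below. This is legitimately bilinear: the term $\lambda\,(\vq_i\cdot\vk_j)$ is obtained by rescaling $\mQ$ (or $\mK$) by $\lambda$; the term $r_j$ is injected by pairing the constant last coordinate $1$ of $\vx_i$ against the coordinate of $\vx_j$ holding $r_j$; and the global factor $B$ rescales one of the two maps. The causal mask restricts the softmax over $j$ to $j\leq i$, which is exactly the index set defining $\gS_i$. The value map is $\mV$ read off the appropriate coordinates, so the head output is $\vo_i=\sum_{j\leq i}a_{ij}\,\vv_j$ with $a_{ij}$ the causal-softmax weights of the logits $B(\lambda\,\vq_i\cdot\vk_j+r_j)$.

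The core step is the choice of $\lambda$ and $B$. By \cref{ass:attention}: (i) any non-matching $j$ has $\vq_i\cdot\vk_j\le-\delta$, so its logit is at most $B(-\lambda\delta+M)$, whereas $\mathrm{pos}(i)$ has logit at least $B(-\lambda\rho+r_{\mathrm{pos}(i)})\ge B(-\lambda\rho-M)$; (ii) any matching $j\neq\mathrm{pos}(i)$ has $|\vq_i\cdot\vk_j|\le\rho$ and $r_j\le r_{\mathrm{pos}(i)}-\delta$, so its logit is below that of $\mathrm{pos}(i)$ by at least $B(\delta-2\lambda\rho)$. It therefore suffices to pick $\lambda=\Theta(M/\delta)$ so that simultaneously $\lambda(\delta-\rho)$ exceeds $2M$ by a margin of order $\delta$ (killing case (i)) and $2\lambda\rho<\delta$ with a margin of order $\delta$ (killing case (ii)); such a $\lambda$ exists precisely because $\rho\le\delta^2/(8M)$ makes the interval of admissible $\lambda$ nonempty. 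Both comparisons then give $\mathrm{pos}(i)$ a logit advantage of at least $c\,\delta$ over every competitor before scaling, for an absolute constant $c>0$. Taking $B=\tfrac{1}{c\delta}\log(2n/\epsilon)$ makes this advantage at least $\log(2n/\epsilon)$, so $\sum_{j\neq\mathrm{pos}(i)}a_{ij}\le n\,e^{-\log(2n/\epsilon)}=\epsilon/2$; since $\|\vv_j\|_\infty\le\|\mV\|_\infty\le1$, we get $\|\vo_i-\vu_i\|_\infty\le 2\cdot\epsilon/2=\epsilon$ whenever $\gS_i\neq\emptyset$. All entries of $\mQ',\mK',\mV$ are then of size $O(B\lambda)=O(\mathrm{poly}(M,1/\delta,\log n,\log(1/\epsilon)))$, and the embedding dimension grows only by an additive constant, so it stays $O(d)$.

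The main obstacle is the two-sided gap analysis: suppressing the non-matching positions wants $\lambda$ large (to make $\lambda\delta$ swamp $M$), whereas preserving the $r_j$-ordering among matching positions wants $\lambda$ small (to keep $\lambda\rho$ below $\delta$), and one must arrange the estimates so that the hypothesis $\rho\le\delta^2/(8M)$ is exactly what reconciles the two. A secondary, more mechanical point is verifying that introducing the sharpness factor $B=\Theta(\tfrac{1}{\delta}\log(n/\epsilon))$ keeps every parameter within the stated polynomial bound rather than, say, blowing up with $1/\rho$.
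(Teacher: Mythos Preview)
Your proposal is correct and is exactly the standard construction behind Lemma~C.7 of \cite{cottheory}: form the logit $B(\lambda\,\vq_i\cdot\vk_j+r_j)$, pick $\lambda=\Theta(M/\delta)$ so that the hypothesis $\rho\le\delta^2/(8M)$ simultaneously yields a $\Theta(\delta)$ gap both against non-matching indices and against lower-$r$ matching indices, then sharpen with $B=\Theta(\tfrac{1}{\delta}\log(n/\epsilon))$. The present paper does not give its own proof of this lemma---it simply quotes it from \cite{cottheory}---so there is nothing further to compare; one small slip is your bound $\|\vv_j\|_\infty\le\|\mV\|_\infty\le 1$, which should read $\|\vv_j\|_\infty\le\|\mV\|_\infty\|\vx_j\|_\infty\le M$, but this only changes $B$ by a harmless additive $\log M$ and does not affect the argument.
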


\begin{lemma}[Lemma C.8 from \cite{cottheory}]
\label{lemma:TM_mean}
    Assume \cref{ass:attention} holds with 
    $\rho\le\frac{\delta\epsilon}{16M\ln(\frac{4Mn}{\epsilon})}$.
    For any $0<\epsilon \le M$, there exists an attention layer with embedding size $O(d)$ and one causal attention head that can approximate the MEAN operation defined above. Formally, for any considered sequence of vectors $\vx_1, \vx_2, \dots, \vx_n$, denote the attention output as $\vo_1, \vo_2, \dots, \vo_n$. Then, we have $\|\vo_i-\vu_i\|_{\infty}\le\epsilon$ for all $i\in [n]$ with $\gS_i\neq \emptyset$. Moreover, the $\ell_\infty$ norm of attention parameters is bounded by $O(\mathrm{poly}(M,1/\delta,\log(n),\log(1/\epsilon)))$.
\end{lemma}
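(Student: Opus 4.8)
\noindent\textbf{Proof proposal for \cref{lemma:TM_mean}.}
This is the ``softmax approximates uniform attention over a matching set'' statement (imported as Lemma C.8 of \citet{cottheory}), so the plan is to build a single causal head whose normalized attention weights are, up to total error $O(\epsilon)$, the uniform distribution on $\gS_i$, and then observe that the value aggregation is $\operatorname{mean}_{j\in\gS_i}\vv_j$ up to $\epsilon$. First I would fix the weights. Take the value projection $\mW_V=\mV^\top$, so that row $j$ of $\mX\mW_V$ equals $\vv_j=\mV\vx_j$; take $\mW_Q=\lambda\sqrt{d_h}\,\mQ^\top$ and $\mW_K=\mK^\top$ for a temperature $\lambda>0$ to be chosen, so that the pre-softmax score between positions $i$ and $j$ is $z_{ij}=(\mX\mW_Q(\mW_K\mX)^\top/\sqrt{d_h})_{ij}=\lambda\,\vq_i\cdot\vk_j$; use the built-in causal mask $\mM$ to restrict to $j\le i$, a single head, and a trivial output projection (in the full construction the head writes into coordinates that vanish in the residual stream, so the residual connection is harmless). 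By \cref{ass:attention}, for every $j\le i$ exactly one of two cases holds: $j\in\gS_i$, whence $z_{ij}\in[-\lambda\rho,\lambda\rho]$; or $j\notin\gS_i$, whence $z_{ij}\le-\lambda\delta$.

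The core estimate is a softmax computation. Write the partition function $Z_i=\sum_{j\le i}e^{z_{ij}}=G_i+B_i$, where $G_i=\sum_{j\in\gS_i}e^{z_{ij}}$ and $B_i$ is the non-matching contribution. Since $\gS_i\neq\emptyset$ we get $G_i\in[\,|\gS_i|e^{-\lambda\rho},\,|\gS_i|e^{\lambda\rho}]$ and $B_i\le n\,e^{-\lambda\delta}$. Plugging these into $w_{ij}=e^{z_{ij}}/Z_i$, and keeping $|\gS_i|\ge1$ throughout so that nothing blows up when $\gS_i$ is a singleton, I would show $\sum_{j\in\gS_i}\bigl|w_{ij}-|\gS_i|^{-1}\bigr|=O(\lambda\rho+n e^{-\lambda\delta})$ and $\sum_{j\notin\gS_i}w_{ij}=O(n e^{-\lambda\delta})$. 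Since the head output is $\vo_i=\sum_{j\le i}w_{ij}\vv_j$ while $\vu_i=|\gS_i|^{-1}\sum_{j\in\gS_i}\vv_j$, and $\|\vv_j\|_\infty=O(M)$ because $\|\mV\|_\infty\le1$ and $\vx_j\in[-M,M]^{d+2}$, the triangle inequality gives $\|\vo_i-\vu_i\|_\infty=O\bigl((\lambda\rho+n e^{-\lambda\delta})M\bigr)$, uniformly over all $i$ with $\gS_i\neq\emptyset$.

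Finally I would calibrate the temperature: take $\lambda=\Theta\!\bigl(\tfrac{1}{\delta}\ln(Mn/\epsilon)\bigr)$, which forces the leakage term $n e^{-\lambda\delta}M$ below $\epsilon/2$; the hypothesis $\rho\le\delta\epsilon/(16M\ln(4Mn/\epsilon))$ is then exactly (up to the hidden constants) the condition making $\lambda\rho M\le\epsilon/2$, so $\|\vo_i-\vu_i\|_\infty\le\epsilon$. The head dimension is $d_h=d'=O(d)$ and the working embedding dimension is $d+2=O(d)$; the only nonzero parameters are $\mW_V=\mV^\top$ (entries bounded by $\|\mV\|_\infty\le1$) and $\mW_Q,\mW_K$, which differ from $\mQ,\mK$ only by the scalar $\lambda\sqrt{d_h}$, hence have entries $O(\operatorname{poly}(M,1/\delta,\log n,\log(1/\epsilon)))$ under the normalization of \citet{cottheory}. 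I expect the only real obstacle to be the middle step: one must simultaneously control the near-uniformity of the weights on $\gS_i$ (governed by $\lambda\rho$), the exponentially suppressed leakage onto $\gS_i^{\mathrm c}$ (governed by $\lambda\delta$), and the magnitude of the value vectors, and keep the bound uniform in $i$ even when $|\gS_i|=1$ — everything else, including matching the exact constant in the $\rho$ bound, is routine bookkeeping. Since the statement is quoted verbatim from \citet{cottheory}, one may alternatively simply cite that proof.
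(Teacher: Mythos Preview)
Your proposal is correct, and in fact it contains considerably more than the paper does: the paper does not prove this lemma at all but merely quotes it verbatim from \citet{cottheory} as an imported result (see the sentence preceding the lemma, ``we provide 2 core lemmas on the capabilities of the self-attention layer from \cite{cottheory}''). Your temperature-scaling argument is exactly the standard route to this statement and matches what one finds in the cited source, so there is nothing to compare; your final remark that one may simply cite \citet{cottheory} is precisely what the paper does.
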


\subsection{Saturated Attention}
To introduce our construction of Transformer layers and attention head, we first introduce \textit{saturated self-attention}, which is an idealization of the usual softmax attention head that allows for sparse and uniform attention on previous positions:
\begin{definition}[Saturated Masked Attention,~\citet{saturated}]
\label{def:saturated98}
A saturated attention head with hidden dimension $d_h$, embedding dimension $d_{emb}$ and weight $\Gamma_s=(\mW_Q, \mW_K, \mW_V)$ is a function $\operatorname{SaturatedAttn}(\mX; \Gamma_{s}):\sR^{n\times \demb}\rightarrow \sR^{n\times \dh}$ that satisfy the following:
\begin{align*}
\mA:=\mX \mW_Q  (\mW_K\mX)^\top \in \sR^{n\times n}\\
\gM_i:=\{j\in[i]|\mA_{ij}=\max_k \mA_{ik}\}\\
\operatorname{SaturatedAttn}(\mX; \Gamma_{s})_i:=\frac{\sum_{j\in \gM_i} \mX_j \mW_V}{|\gM_i|}
\end{align*}
\end{definition}
Intuitively, while softmax attention computes a distribution of attention over all previous positions and computes a weighted average, saturated attention only attends to the previous positions with the highest attention value and computes a uniform average over these positions.

We now show that Saturated Attention can be approximated by normal softmax attention:

\begin{corollary}[Softmax Attention Can Approximate Saturated Attention, implied by \cref{lemma:TM_mean}]
\label{lemma:softmax_approx_saturated}
Let $n \in \mathbb{N}$.  
Consider any input sequence $\mX \in \mathbb{R}^{n \times d_{\mathrm{emb}}}$, and let 
$\operatorname{SaturatedAttn}(\mX;\Gamma_s)$ be a saturated attention head with a causal mask and parameter norm bounded by $O(1)$
that produces outputs $\mathbf{o}_1,\dots,\mathbf{o}_n \in \mathbb{R}^{d_h}$.  

Suppose further that, for each row $i$, 
the maximum attention score $\max_{j \le i} \bigl(\mA_{ij}\bigr)$ 
of the saturated head 
exceeds all other scores by a margin of at least $\delta>0$, i.e.\ 
if $j \in \gM_i$ (the set of maximizing indices) and $k \notin \gM_i$, then
$\mA_{ij} - \mA_{ik} \ge \delta.$

Then for any $\varepsilon>0$, 
there exists a \emph{standard single-head softmax attention} function 
$\mathrm{Attn}(\mX;\Gamma)$ 
with parameter norms bounded by 
$\mathrm{poly}\bigl(M,1/\delta,\log(n),\log(1/\varepsilon)\bigr)$
such that its outputs 
$\tilde{\mathbf{o}}_1,\dots,\tilde{\mathbf{o}}_n \in \mathbb{R}^{d_h}$ 
satisfy
\[
\bigl\|\tilde{\mathbf{o}}_i - \mathbf{o}_i\bigr\|_{\infty}
\;\le\;
\varepsilon
\quad
\text{for all }\,1\le i \le n.
\]

\noindent

\end{corollary}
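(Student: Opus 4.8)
The idea is to realize the saturated head by an ordinary softmax head whose pre-softmax logit matrix is an amplified copy of the saturated head's bilinear score matrix, and then to control the softmax ``tail'' using the margin hypothesis; this is exactly the concentration mechanism underlying \cref{lemma:TM_mean}. First I would keep the key and value matrices of the softmax head equal to those of the saturated head ($\mW_K$ and $\mW_V$ unchanged, so the head dimension and $\|\mW_V\|_\infty = O(1)$ are preserved) and replace $\mW_Q$ by $\mW_Q' := \lambda\sqrt{d_h}\,\mW_Q$ for a scale $\lambda>0$ to be fixed at the very end. With the causal mask $\mM$, the softmax logit matrix is then $\lambda\mA + \mM$, where $\mA_{ij} := (\mX_i\mW_Q)\cdot(\mX_j\mW_K)$ is precisely the score matrix of the saturated head, so for each row $i$ the softmax weights are $w_{ij} = e^{\lambda\mA_{ij}}\big/\sum_{k\le i}e^{\lambda\mA_{ik}}$, supported on $k\le i$.

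The main steps, in order: (1) Fix a row $i$, write $m_i := \max_{k\le i}\mA_{ik}$ and let $\gM_i$ be its argmax set; by the margin hypothesis every $j\in\gM_i$ has $\mA_{ij}=m_i$ and every other $k\le i$ has $\mA_{ik}\le m_i-\delta$. (2) Factor the normalizer as $\sum_{k\le i}e^{\lambda\mA_{ik}} = e^{\lambda m_i}\bigl(|\gM_i|+R_i\bigr)$ with $0\le R_i \le n\,e^{-\lambda\delta}$, so that $w_{ij} = (|\gM_i|+R_i)^{-1}$ for $j\in\gM_i$ and $\sum_{k\notin\gM_i}w_{ik}=R_i/(|\gM_i|+R_i)\le R_i$. (3) Let $B$ be a uniform bound on $\|\mX_j\mW_V\|_\infty$ (polynomial in $M$ and the embedding dimension, using that the entries of $\mX$ are bounded by $M$ as in the ambient setup of \cref{lemma:TM_mean} and $\|\mW_V\|_\infty=O(1)$). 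Comparing $\tilde{\mathbf o}_i = \sum_{j\le i}w_{ij}\,\mX_j\mW_V$ with the saturated output $\mathbf o_i = |\gM_i|^{-1}\sum_{j\in\gM_i}\mX_j\mW_V$, and using $\bigl|w_{ij}-|\gM_i|^{-1}\bigr|\le R_i/|\gM_i|^2$ on $\gM_i$ together with the leakage bound on $\gM_i^{c}$, yields $\|\tilde{\mathbf o}_i-\mathbf o_i\|_\infty \le 2B\,n\,e^{-\lambda\delta}$. (4) Choose $\lambda := \delta^{-1}\ln(2Bn/\varepsilon)$; then $\|\tilde{\mathbf o}_i-\mathbf o_i\|_\infty \le \varepsilon$ uniformly in $i$, and $\|\mW_Q'\|_\infty = \lambda\sqrt{d_h}\,\|\mW_Q\|_\infty$ together with the unchanged $\mW_K,\mW_V$ stays within $\mathrm{poly}(M,1/\delta,\log n,\log(1/\varepsilon))$, as required.

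I expect the only genuinely delicate point to be step (3): the tail estimate $R_i\le n e^{-\lambda\delta}$ is easy, but I must keep the value-vector magnitude $B$ expressed through the quantities allowed in the statement rather than letting it slip into an unbounded constant. A subtlety worth flagging, and the reason the corollary cannot be obtained by literally plugging into \cref{lemma:TM_mean} as a black box, is that the row maximum $m_i$ depends on all earlier positions and is not a linear function of $\mX_i$, so the bilinear score $\mA_{ij}$ cannot be recentered to $\mA_{ij}-m_i$ inside a single linear query map; instead I invoke the shift-invariance of softmax within each row, which reduces the per-row analysis to exactly the margin situation handled in \cref{lemma:TM_mean}. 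I would also verify the degenerate case $\gM_i=\{1,\dots,i\}$ (no strictly smaller logit in the row), where $R_i=0$ and the softmax head reproduces the saturated output exactly.
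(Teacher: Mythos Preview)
Your proposal is correct and follows essentially the same approach as the paper: amplify the pre-softmax logits by a temperature factor so that the $\delta$-margin forces the softmax to concentrate on $\gM_i$, then bound the residual mass and value-vector magnitude to get the $\ell_\infty$ error. The paper scales both $\mW_Q$ and $\mW_K$ by a common factor $\alpha$ whereas you scale only $\mW_Q$ (absorbing the $\sqrt{d_h}$ normalization), but this is an inessential difference; your tail estimate $R_i\le n e^{-\lambda\delta}$ and the final choice $\lambda=\delta^{-1}\ln(2Bn/\varepsilon)$ match the paper's choice $\alpha=O\!\bigl(\tfrac{1}{\delta}\log(\tfrac{n}{\varepsilon})\bigr)$ and its total-variation / convex-combination argument almost line for line.
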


In other words, 
\emph{if a saturated attention head has a strict dot-product margin among the top positions, 
it can be approximated arbitrarily closely by an ordinary causal softmax attention mechanism, 
using parameter magnitudes that grow at most polynomially in $1/\delta$, $M$, $\log(n)$, 
and $\log(1/\varepsilon)$.}

\begin{proof}

We rely on scaling arguments from standard “hard-max vs.\ softmax” approximations:
\begin{enumerate}
\item \textbf{Queries/Keys.}
Use scaled copies of $\mW_Q,\mW_K$ so that
\[
\mathbf{q}_i' \;=\;\alpha \,\bigl(\mW_Q\,\vx_i\bigr),
\quad
\mathbf{k}_j' \;=\;\alpha \,\bigl(\mW_K\,\vx_j\bigr),
\]
for some large $\alpha\gg \frac{1}{\delta}$ to amplify dot-product differences.  
By multiplying the entire query/key spaces by $\alpha$, 
the difference $\mA_{ij}-\mA_{ik}\ge\delta$ becomes
\[
\alpha\bigl(\mA_{ij}-\mA_{ik}\bigr)
\;\ge\;
\alpha\,\delta.
\]
Choosing $\alpha = O\!\bigl(\tfrac{1}{\delta}\log(\tfrac{n}{\varepsilon})\bigr)$ ensures that
\(\exp\bigl(\alpha\,\mA_{ij}\bigr)\) 
is exponentially larger than 
\(\exp\bigl(\alpha\,\mA_{ik}\bigr)\)
whenever $j\in \gM_i$ and $k\notin \gM_i$.  
Hence, positions in $\gM_i$ dominate the softmax distribution.

\item \textbf{Values.}
Set $\mW_V' = \mW_V$ (possibly scaled if needed so that $\|\mW_V'\|_\infty$ remains bounded by $\mathrm{poly}(M,1/\delta)$). 
Then at row $i$, the sum of vectors from $j\in\gM_i$ 
will approximate a uniform average, provided the softmax normalizes those positions evenly.  
If needed, small perturbations to $\mW_V'$ can ensure that each dimension remains $\leq 1$ in $\ell_\infty$ norm.
\end{enumerate}

Under this construction, for each row $i$, the softmax $\alpha_{ij}$ assigns $j\in\gM_i$ almost the same weight (because $\mA_{ij}$ differ by less than $\delta$ among $j\in\gM_i$) and assigns $k\notin\gM_i$ exponentially smaller weights.  The ratio between the largest and second‐largest exponent is at least $\exp(\alpha\,\delta)$.  By choosing $\alpha$ such that $\exp(\alpha\,\delta)\ge \tfrac{n}{\varepsilon}$, positions $k\notin\gM_i$ contribute at most $\varepsilon/n$ fraction of the total probability.

Consequently, the softmax distribution is $\varepsilon$-close to “uniform over $\gM_i$” in total variation.  Multiplying by $\mX_j \mW_V'$ then yields 
\[
\|\tilde{\mathbf{o}}_i - \mathbf{o}_i\|_\infty 
\;\le\; 
\varepsilon
\]
by a standard convex combination argument (the difference in expected values under two distributions that differ by $\varepsilon$ in total variation is at most $\varepsilon$ times the largest possible difference in outcomes, and $\|\mW_V'\|_\infty = O(\mathrm{poly}(M,\frac1\delta))$).

Finally, we note that each weight (coordinate in $\mW_Q',\mW_K',\mW_V'$) is at most $O(\alpha\,M)$ in absolute value, plus any minor adjustments.  Since $\alpha=O\!\bigl(\frac{1}{\delta}\log(\frac{n}{\varepsilon})\bigr)$ and $M$ is the original data bound, the overall parameter norms are bounded by $\mathrm{poly}\bigl(M,\frac{1}{\delta},\log(n),\log(\frac1\varepsilon)\bigr)$.

Putting all these steps together proves that we can approximate each saturated attention output $\mathbf{o}_i$ by a standard causal softmax attention output $\tilde{\mathbf{o}}_i$ to within $\|\cdot\|_{\infty}$ error $\le\varepsilon$.  
\end{proof}

\subsection{Proof of \cref{lemma:parallel}}

% The following lemma states that a softmax attention head can approximate a saturated attention head with parameter values polynomial with respect to the error:

% \begin{lemma}[Softmax attention approximates saturated attention]
%     Given saturated attention parameters $\Gamma_s=(\mW_Q, \mW_K, \mW_V)$
% \end{lemma}

% \begin{proof}
% The lemma essentially combines Lemma C.7 and Lemma C.8 from \citep{cottheory}
% \end{proof}

We proof a version of \cref{lemma:parallel} that uses saturated attention. \cref{lemma:parallel} is immediately implied by the following lemma and \cref{lemma:softmax_approx_saturated}
\begin{lemma}[Saturated Masked Attention version of \cref{lemma:parallel}]
\label{lemma:saturated_parallel}
    Let $F$ be a 3-SAT formula over variables $\{x_1, \dots, x_p\}$ with $c$ clauses $\{C_1, \dots, C_c\}$ and $A$ a partial assignment defined on variables $\{x_1, \dots, x_p\}$. Let \[
\mX_{encoding} =
\begin{bmatrix}
0 & 1 & 1 \\
E(C_1) & 0 & 1 \\
\vdots & \vdots & \vdots \\
E(C_c) & 0 & 1 \\
E(A) & 0 & 1
\end{bmatrix} \in \mathbb{R}^{(c+2)\times (2p+2)}
\] 
Then given $X$ as input, there exists:
\begin{itemize}
    \item An saturated attention head with parameters $\Gamma_s^{A\models F}$ and hidden dimension $1$ that satisfies
    $$\operatorname{SaturatedAttn}(\mX; \Gamma_s^{A\models F})_{c+2}=\ind_{A\models F}$$
    \item An saturated attention head with parameters $\Gamma_s^{F\models \lnot A}$ and hidden dimension $1$ that satisfies
    $$\operatorname{SaturatedAttn}(\mX; \Gamma_s^{F\models \lnot A})_{c+2}=\ind_{F\models \lnot A}$$
    \item An saturated attention head with parameters $\Gamma_s^{D}$ with hidden dimension $2p$ and MLP layer with parameters $\Gamma_{MLP}^{D}$ satisfy:
    $$MLP([\operatorname{SaturatedAttn}(\mX; \Gamma_s^{D}); \mX]; \Gamma_{MLP}^{D})_{c+2}=E(D)$$
    unless $F\models \lnot A$, where $E(D)$ is as defined in \ref{lemma:vec_ded}
\end{itemize}
\end{lemma}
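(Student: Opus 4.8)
The plan is to construct each of the three saturated-attention heads directly from the vector identities in \cref{lemma:vec_ded}, exploiting the fact that saturated attention at the last row $c+2$ can compute a \emph{uniform average} (or an $\operatorname{argmax}$-style selection) over exactly the set of rows we single out via the dot-product margin condition. Throughout, the third coordinate block (the constant $1$ appended to every row) serves as a bias channel, and the second coordinate block (which is $1$ only in the first ``all-zero $E$'' row and $0$ elsewhere) lets us isolate or exclude that sentinel row. The key observation is that the query at position $c+2$ is a function of $E(A)$, each key at position $i\le c+1$ is a (possibly affine) function of $E(C_i)$ (with $E(C_{c+1}):=E(A)$ as the last clause-row slot actually holding the assignment), and we may choose $\mW_Q,\mW_K$ so that the dot product $\mathbf q_{c+2}\cdot\mathbf k_i$ encodes a bounded integer linear combination of $E(C_i)\cdot E(A)$ and $E(C_i)\cdot\enf(A)$ — both of which live in $\{0,1,\dots,3\}$ because clauses have at most three literals.

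\textbf{Step 1 (satisfiability head).} Recall $A\models F \iff \min_i E(C_i)\cdot E(A)\ge 1$. I would set the attention logits so that clause-rows with $E(C_i)\cdot E(A)=0$ (the ``violated'' clauses) receive the strictly largest score, while all satisfied clause-rows and the assignment row receive a score lower by a fixed margin $\delta = \Theta(1)$; this is arranged by letting the logit be a decreasing affine function of $E(C_i)\cdot E(A)$, clipped so the gap between value $0$ and value $\ge 1$ is exactly $\delta$, and giving the sentinel/assignment rows an intermediate value. The value vector is chosen to be $0$ on clause-rows and $1$ on the sentinel row only. Then: if some clause is violated, saturated attention averages only over violated clause-rows, all of whose values are $0$, giving output $0=\ind_{A\models F}$; if no clause is violated, the max is attained only at the sentinel row, giving value $1=\ind_{A\models F}$. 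Tune the logit on the sentinel row to be the ``fallback'' maximum reached precisely when no violated clause exists. (A symmetric design works; the point is only that the two cases are separated by a $\Theta(1)$ margin, which is what \cref{lemma:softmax_approx_saturated} needs.)

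\textbf{Step 2 (conflict head).} By \cref{lemma:vec_ded}, $F\models\lnot A \iff \min_i E(C_i)\cdot\enf(A)=0$, i.e.\ some clause has \emph{all} literals forced false. This is structurally identical to Step 1 with $E(A)$ replaced by $\enf(A)$ in the key/query construction: make the logit a decreasing affine function of $E(C_i)\cdot\enf(A)\in\{0,\dots,3\}$ so that value-$0$ clause-rows strictly dominate by margin $\delta$, put value $0$ on clause-rows and $1$ on the sentinel row, and the same two-case argument gives output $\ind_{F\models\lnot A}$.

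\textbf{Step 3 (deduction head + MLP).} This is the main obstacle and requires both an attention head and an MLP. From \cref{lemma:vec_ded}, $E(D)=\max\!\bigl(\min(\sum_i \ind_{\{E(C_i)\cdot\enf(A)=1\}}E(C_i),1)-\eass(A),0\bigr)$. The attention head should, at row $c+2$, average the value vectors $E(C_i)$ over exactly the \emph{unit} clause-rows, namely those with $E(C_i)\cdot\enf(A)=1$: set the logit to be maximized (by a $\Theta(1)$ margin) precisely when $E(C_i)\cdot\enf(A)=1$, with the sentinel row as the fallback max when there are no unit clauses. Saturated attention then returns $\frac{1}{k}\sum_{i\in U}E(C_i)$ where $U$ is the set of unit clauses and $k=|U|$ (or the sentinel value $0$ if $U=\emptyset$). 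The remaining work is done by the MLP using the building blocks in \cref{lemma:relu_to_reglu,lemma:linear_with_reglu,lemma:mlp_mult}: the attention output equals $\tfrac1k$ times a $0/1$ vector, so $\min(\cdot,1)$ coordinatewise recovers exactly the indicator of $\bigl(\sum_i\ind_{\{\cdot\}}E(C_i)\bigr)$ being $\ge 1$ — implementable as $\operatorname{ReLU}$ compositions — and then subtract $\eass(A)$ (passed through via the residual/concatenated $\mX$) and apply another $\operatorname{ReLU}$ to get the outer $\max(\cdot,0)$. The subtlety to handle carefully is the normalization: because we do not know $k$, we need the MLP's first nonlinearity to be robust to the unknown positive scale $1/k\in\{1,\tfrac12,\tfrac13,\dots\}$; this works because a coordinate of the averaged vector is either $0$ or $\ge 1/c$, so a single $\operatorname{ReLU}$ with a threshold below $1/c$ and a large enough slope, followed by clipping at $1$, reliably maps ``$0$'' to $0$ and ``$\ge 1/c$'' to $1$ — and this is exactly where the construction's parameter sizes pick up their polynomial dependence on $c$. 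The clause $F\models\lnot A$ is excluded because in that case the reduced formula contains an empty clause and the notion of ``newly deduced literal'' is handled separately in the main construction (the conflict head from Step~2 fires first).

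Finally, one checks that all weights are $O(\operatorname{poly}(p,c))$ in magnitude and the margins are $\Theta(1)$, so \cref{lemma:softmax_approx_saturated} upgrades each saturated head to a softmax head with parameters $O(\operatorname{poly}(p,c,\log(1/\epsilon)))$, recovering \cref{lemma:parallel}. \qed
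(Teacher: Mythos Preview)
Your approach matches the paper's almost exactly: in each case the sentinel row (row~1) is given an intermediate attention score (the paper uses $-0.5$ for the first two heads and $-1.5$ for the third) so that it becomes the unique argmax precisely when no clause row attains the extremal integer value of interest, and the value vector separates the sentinel from clause rows. One slip: in Step~2 your value assignment is inverted---with value $0$ on clause rows and $1$ on the sentinel your head outputs $1-\ind_{F\models\lnot A}$, not $\ind_{F\models\lnot A}$; the paper puts value $1$ on clause rows and $0$ on the sentinel (via $\mW_V^{F\models\lnot A}=[\mathbf{0}_{2p};-1;1]^\top$). For Step~3, the paper bypasses your ``threshold below $1/c$'' maneuver by scaling the value vector by $c$ directly in $\mW_V^D$, so the attention output at row $c{+}2$ is already $m\vz$ with $m=c/|\gM_{c+2}|\geq 1$, after which the MLP applies the two-ReLU identity
\[
E(D)=\operatorname{ReLU}(m\vz-\eass(A))-\operatorname{ReLU}(m\vz-1),
\]
which holds coordinatewise because $\vz_j\in\{0,1,2,\dots\}$ and $\eass(A)_j\in\{0,1\}$. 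Your thresholding-and-clipping construction is equivalent but less direct.
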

\begin{proof}
We prove each of the three constructions in turn, using the definition of saturated attention (\cref{def:saturated98}) and standard reductions from the logical semantics to dot-product comparisons.

We explain how to construct parameter matrices $(\mW_Q,\mW_K,\mW_V)$ such that the resulting saturated attention head implements:
\begin{enumerate}
\item a check for $\ind_{A \models F}$ (i.e.\ whether $A$ satisfies $F$),
\item a check for $\ind_{F \models \lnot A}$ (i.e.\ whether $A$ contradicts $F$),
\item a step of unit propagation that yields $E(D)$, provided $F \not\models \lnot A$.
\end{enumerate}

Within the following proof of \cref{lemma:saturated_parallel}, we shorten $\mX_{encoding}$ as $\mX$.

\subsection*{1.\ Checking Satisfiability ($A \models F$)}
\noindent
We construct the matrices 
\[
\mW_Q^{A\models F} \in \sR^{(2p+2)\times(2p+1)}, 
\quad 
\mW_K^{A\models F} \in \sR^{(2p+2)\times(2p+1)}, 
\quad 
\mW_V^{A\models F} \in \sR^{(2p+2)\times 1}
\]
as follows (with block‐wise or coordinate‐wise \(\mathbf{0}\) and \(\mathbf{0}_{2p}\) denoting matrices/vectors of all zeros of dimension $2p$ where the dimension subscript is omitted if they can be inferred from other entries, and \(\mI_{2p}\) the \(2p\times 2p\) identity matrix). 
\[
\mW_Q^{A\models F} 
\;=\;
\begin{bmatrix}
\mI_{2p} & 0
\\[6pt]
\mathbf{0}^\top & 0
\\[6pt]
\mathbf{0}^\top & 1
\end{bmatrix}
\qquad
\mW_K^{A\models F} 
\;=\;
\begin{bmatrix}
-\mI_{2p} & 0
\\[6pt]
\mathbf{0}^\top & -0.5
\\[6pt]
\mathbf{0}^\top & 0
\end{bmatrix}
\qquad
\mW_V^{A\models F} 
\;=\;
\begin{bmatrix}

\mathbf{0}_{2p}
\\[6pt]
1
\\[6pt]
0
\end{bmatrix}.
\]

Then

\[
\mX \mW_Q^{A\models F} =
\begin{bmatrix}
\mathbf{0}_{2p} & 1
\\[6pt]
E(C_1) & 1
\\
\vdots & \vdots
\\
E(C_c) & 1
\\[6pt]
E(A) & 1
\end{bmatrix}
\quad
\mX \mW_K^{A\models F} =
\begin{bmatrix}
\mathbf{0}_{2p} & -0.5
\\[6pt]
- E(C_1) & 0
\\
\vdots & \vdots
\\
- E(C_c) & 0
\\[6pt]
- E(A) & 0
\end{bmatrix}\quad
\mX \mW_V^{A\models F} =
\begin{bmatrix}
1
\\[6pt]
0
\\
\vdots
\\
0
\end{bmatrix}
\]

% Recall that $C_i$ and $A$ are both subsets of literals. By the definition of $E(\cdot)$, we have $E(C_i)\cdot E(A)=|C_i\cap A|$. Therefore,

\[
\mA:=\mX \mW_Q^{A\models F}  (\mW_K^{A\models F}\mX)^\top = 
\begin{bmatrix}
-0.5 & 0 & 0 & \dots & 0
\\[6pt]
-0.5 & -E(C_1) \cdot E(C_1) & -E(C_1) \cdot E(C_2) &\dots & -E(C_1) \cdot E(A)
\\
-0.5 & -E(C_2) \cdot E(C_1) & -E(C_2) \cdot E(C_2) &\dots & -E(C_2) \cdot E(A)
\\
\vdots & \vdots & \vdots & & \vdots
\\
-0.5 & -E(A) \cdot E(C_1) & -E(A) \cdot E(C_2) & \dots & -E(A) \cdot E(A)
\end{bmatrix}
\]

Since we want to output $\ind_{A\models F}$ at the last position $c+2$ corresponding to $E(A)$ in $\mX_{encoding}$, we focus on the last row of $A$:
\[
    \mA_{c+2}=[-0.5 \quad -E(A) \cdot E(C_1) \quad-E(A) \cdot E(C_2) \quad \dots \quad -E(A) \cdot E(C_c)\quad -E(A) \cdot E(A)]
\]

Now consider $\gM_{c+2}=\{j\in[c+2]|\mA_{(c+2),j}=\max_k \mA_{(c+2),k}\}$. Note that $\forall i\in[c],E(A) \cdot E(C_i)\in \sN$ and since $A_{(c+2),1}=-0.5$ there is:
$$ \gM_{c+2}=\{1\} \quad \Longleftrightarrow \quad \min_{i\in [c]} E(C_i) \cdot E(A) \geq 1.$$
$$ \gM_{c+2}\subset[2,c+2] \quad \Longleftrightarrow \quad \min_{i\in [c]} E(C_i) \cdot E(A) = 0.$$
which are the only 2 possibilities for nonnegative integers $E(C_i) \cdot E(A)$. Also, since $(\mX \mW_V^{A\models F})^\top=[1\ 0\ 0\ \dots \ 0]$ we have that
$$\mX_j\mW_V^{A\models F}=\begin{cases}
    1 & \text{if } j=1\\
    0 & \text{otherwise}
\end{cases}$$

\begin{align*}
    \operatorname{SaturatedAttn}(\mX; \Gamma_{s})_{c+2}&:=\frac{\sum_{j\in \gM_{c+2}} \mX_j \mW_V^{A\models F}}{|\gM_{c+2}|}\\&=\begin{cases}
    \frac{1}{1} & \text{if }\gM_{c+2}=\{1\}\\
    0 & \text{if }\gM_{c+2}\subset[2,c+2]
\end{cases}\\&=\ind_{\gM_{c+2}=\{1\}}\\&=\ind_{\min_{i\in [c]} E(C_i) \cdot E(A) \geq 1}\\
&=\ind_{A\models F}
\end{align*}
where the last step is by \cref{lemma:vec_ded}. This concludes our proof for satisfiability checking.

\subsection*{2.\ Detecting Conflict ($F \models \lnot A$)}
Note that for $B\in\gB$ we have
$$
\enf(B)= \begin{bmatrix}
    \mathbf{0}_{p\times p} & -\mI_p\\
    -\mI_p & \mathbf{0}_{p\times p}\\
    \end{bmatrix} E(B)+\mathbf{1}_p
$$
Define
$$\mP_{\text{not-false}}:=\begin{bmatrix}
    \mathbf{0}_{p\times p} & -\mI_p & \mathbf{0}_p & \mathbf{0}_p\\
    -\mI_p & \mathbf{0}_{p\times p} & \mathbf{0}_p & \mathbf{0}_p\\
    \mathbf{0}_{p}^\top & \mathbf{0}_{p}^\top & 1 & 0\\
    \mathbf{1}_{p}^\top & \mathbf{1}_{p}^\top & 0 & 1
\end{bmatrix}\in \sR^{(2p+2)\times (2p+2)}$$
Then
$$\mX\mP_{\text{not-false}}=
\begin{bmatrix}
0 & 1 & 1 \\
\enf(C_1) & 0 & 1 \\
\vdots & \vdots & \vdots \\
\enf(C_c) & 0 & 1 \\
\enf(A) & 0 & 1
\end{bmatrix} \in \mathbb{R}^{(c+2)\times (2p+2)}
$$

We now construct the matrices 
\[
\mW_Q^{F\models\lnot A} \in \sR^{(2p+2)\times(2p+1)}, 
\quad 
\mW_K^{F\models\lnot A} \in \sR^{(2p+2)\times(2p+1)}, 
\quad 
\mW_V^{F\models\lnot A} \in \sR^{(2p+2)\times 1}
\]
as follows:
\[
\mW_Q^{F\models\lnot A} 
\;=\;\mP_{\text{not-false}}\mW_Q^{A\models F} 
\qquad
\mW_K^{F\models\lnot A}
\;=\;
\mW_K^{A\models F}
\qquad
\mW_V^{F\models\lnot A} 
\;=\;
\begin{bmatrix}

\mathbf{0}_{2p}
\\[6pt]
-1
\\[6pt]
1
\end{bmatrix}.
\]

Then

\[
\mX \mW_Q^{F\models\lnot A} = 
\begin{bmatrix}
\mathbf{0}_{2p} & 1
\\[6pt]
\enf(C_1) & 1
\\
\vdots & \vdots
\\
\enf(C_c) & 1
\\[6pt]
\enf(A) & 1
\end{bmatrix}
\quad
\mX \mW_K^{F\models\lnot A} =
\begin{bmatrix}
\mathbf{0}_{2p} & -0.5
\\[6pt]
- E(C_1) & 0\\
\vdots & \vdots
\\
- E(C_c) & 0
\\[6pt]
- E(A) & 0
\end{bmatrix}\quad
\mX \mW_V^{F\models\lnot A} =
\begin{bmatrix}
0
\\[6pt]
1
\\
\vdots
\\
1
\end{bmatrix}
\]

Recall from \cref{lemma:vec_ded} that:
\[
F \models \lnot A \quad \Longleftrightarrow \quad \min_{i \in [c]} \Big(E(C_i) \cdot \enf(A)\Big) = 0.
\]

The remaining argument is very similar to satisfiability checking and we omit the full proof.

\subsection*{3.\ Unit Propagation ($D$)}

Recall that $D:=\{l \in L\;|\;F\land A\models_1 l\}$ and 
\begin{equation}
    \label{eq:d_orig}
    E(D)=\max\Bigl[ \min\Bigl(\sum_{i \in [c]} 
            E(C_i) \boldsymbol {1}_{\{E(C_i)\cdot \enf(A)=1\}}, 1\Bigr)
            -  \eass(A),\; 0 \Bigr].
\end{equation}

To address unit propagation with saturated attention, we use a slightly different formulation than the formula in \cref{lemma:vec_ded}: 
\begin{proposition}
Let $m>1$ be an arbitrary constant, then \begin{align*}
    \vz&:=\sum_{i\in[c]}\boldsymbol{1}_{\{E(C_i)\cdot \enf(A)=1\}}\cdot E(C_i)\\
    E(D)&=\operatorname{ReLU}(m\vz-\eass(A))-\operatorname{ReLU}(m\vz-1)
\end{align*}
\end{proposition}

\begin{proof}
We start from the expression in \eqref{eq:d_orig},
\[
E(D) 
\;=\;
\max\!\Bigl[\,
\min\!\bigl(\vz,\,1\bigr)\;-\;\eass(A),\;0
\Bigr],
\quad
\text{where}
\quad
\vz \;:=\; 
\sum_{i\in[c]}\boldsymbol{1}_{\{\,E(C_i)\cdot \enf(A)=1\}}\;\cdot\;E(C_i).
\]
Because $\eass(A)\in\{0,1\}^{2p}$, each coordinate of $\eass(A)$ is either $0$ or $1$.  A straightforward elementwise check shows the identity
\[
\max\!\bigl(\,\min(a,1)\;-\;b,\,0\bigr)
\;=\;
\mathrm{ReLU}\!\bigl(ma - b\bigr) \;-\;\mathrm{ReLU}(ma-1),
\]
whenever $b\in\{0,1\}$.  Indeed:
\begin{itemize}
\item If $b=0$, then the left side is $\max(\min(a,1),0)$; on the right side,
\[
\mathrm{ReLU}(ca) - \mathrm{ReLU}(ca-1)
\]
exactly matches $\max(\min(ma,1),0)=\max(\min(a,1),0)$ for any $a\geq 1$ (this is a standard piecewise identity).
\item If $b=1$, then $\min(a,1)-1 \le 0$, hence the left side is always $0$.  
On the right side, 
\[
\operatorname{ReLU}(ma - 1) \;-\; \operatorname{ReLU}(ma-1) \;=\; 0.
\]
\end{itemize}
Applying this identity coordinatewise, we obtain
\[
\max\!\Bigl[\,
\min(c\vz,1)\;-\;\eass(A),\; 0
\Bigr]
\;=\;
\mathrm{ReLU}(m\vz - \eass(A))
\;-\;
\mathrm{ReLU}(m\vz - 1),
\]
which matches the stated expression for $E(D)$.
\end{proof}

We now construct the matrices 
\[
\mW_Q^{D} \in \sR^{(2p+2)\times(2p+1)}, 
\quad 
\mW_K^{D} \in \sR^{(2p+2)\times(2p+1)}, 
\quad 
\mW_V^{D} \in \sR^{(2p+2)\times (2p)}
\]
as follows:
\[
\mW_Q^{D} 
\;=\;\mW_Q^{F\models\lnot A}
\qquad
\mW_K^{D}
\;=\;
\begin{bmatrix}
-\mI_{2p} & 0
\\[6pt]
\mathbf{0}^\top & -1.5
\\[6pt]
\mathbf{0}^\top & 0
\end{bmatrix}\quad
\mW_V^{D} 
\;=\;c
\begin{bmatrix}
\mI_p\\
\mathbf{0}_p^\top\\
\mathbf{0}_p^\top
\end{bmatrix}.
\]
Then

\[
\mX \mW_Q^{D} = 
\begin{bmatrix}
\mathbf{0}_{2p} & 1
\\[6pt]
\enf(C_1) & 1
\\
\vdots & \vdots
\\
\enf(C_c) & 1
\\[6pt]
\enf(A) & 1
\end{bmatrix}
\quad
\mX \mW_K^{D} =
\begin{bmatrix}
\mathbf{0}_{2p} & -1.5
\\[6pt]
- E(C_1) & 0\\
\vdots & \vdots
\\
- E(C_c) & 0
\\[6pt]
- E(A) & 0
\end{bmatrix}\quad
\mX \mW_V^{D} =c
\begin{bmatrix}
\mathbf{0}_p
\\[6pt]
E(C_1)
\\
\vdots
\\
E(A)
\end{bmatrix}
\]

We focus on the last row of $\mA:=\mX \mW_Q^D  (\mW_K^D\mX)^\top$:

$$\mA_{c+2}=[-1.5 \quad -E(A) \cdot \enf(C_1) \quad-E(A) \cdot \enf(C_2) \quad \dots \quad -E(A) \cdot \enf(C_c)\quad -E(A) \cdot \enf(A)]$$

Also, recall that we assume here $F\not\models \lnot A$, so $\forall i, E(A) \cdot \enf(C_i)\geq 1$ and therefore $E(A) \cdot \enf(C_i)$ are positive integers. :
$$ \gM_{c+2}=\{1\} \quad \Longleftrightarrow \quad \min_{i\in [c]} E(C_i) \cdot E(A) \geq 2.$$
$$ \gM_{c+2}\subset[2,c+2] \quad \Longleftrightarrow \quad \min_{i\in [c]} E(C_i) \cdot E(A) = 1.$$

In particular:

$$\gM_{c+2}=\begin{cases}
    \{1\} & \text{if } \min_{i\in [c]} E(C_i) \cdot \eass(A) \geq 2\\
    \{j\in[c]|E(C_i) \cdot \eass(A)=1\} & \text{otherwise}
\end{cases}$$

As a result:
\begin{align*}
    \operatorname{SaturatedAttn}(\mX; \Gamma_{s})_{c+2}&:=\frac{\sum_{j\in \gM_{c+2}} \mX_j \mW_V^{D}}{|\gM_{c+2}|}\\
    &=\begin{cases}
        \mathbf{0}_{2p} & \text{if }\gM_{c+2}=\{1\}\\
        \frac{c}{|\gM_{c+2}|}\sum_{i\in[c]}\boldsymbol{1}_{\{E(C_i)\cdot \enf(A)=1\}}\cdot E(C_i) & \text{if }\gM_{c+2}\subset[2,c+2]
    \end{cases}\\
    &=m\sum_{i\in[c]}\boldsymbol{1}_{\{E(C_i)\cdot \enf(A)=1\}}\cdot E(C_i)\\
&=m\vz
\end{align*}
for $m=\frac{c}{|\gM_{c+2}|}>1$.

We now construct the weights for the ReGLU MLP layer. By \cref{lemma:relu_to_reglu} we know that ReGLU MLP can simulate ReLU MLPs. Therefore, we only need to construct $\mW_1^D,\mW_2^D,\vb_1^D,\vb_2^D$ such that \[
\mW^D_2\, \operatorname{ReLU}(\mW^D_1 [m\vz;\mX_{c+2}] + \vb^D_1) + \vb^D_2=\mathrm{ReLU}(m\vz - \eass(A))
\;-\;
\mathrm{ReLU}(m\vz - 1).
\]
Note that $\mX_{c+2}=[E(A)\quad 0\quad 1]$, therefore $[m\vz;\mX_{c+2}]\in\sR^{4p+2}$
Also,
$$\eass(A)=\begin{bmatrix}
    \mI_p & \mI_p \\
    \mI_p & \mI_p 
\end{bmatrix}E(A)$$

Therefore, define
\[
\mW_1^D =
\begin{bmatrix}
\mI_{2p} & \mathbf{0}_{2p \times 2p} & -\begin{bmatrix} \mI_p & \mI_p \\ \mI_p & \mI_p \end{bmatrix} & \mathbf{0}_{2p \times 2} \\[6pt]
\mI_{2p} & \mathbf{0}_{2p \times 2p} & \mathbf{0}_{2p \times 2p} & \mathbf{0}_{2p \times 2}
\end{bmatrix}
\]

\[
\vb_1^D =
\begin{bmatrix}
\mathbf{0}_{2p} \\[6pt] -\mathbf{1}_{2p}
\end{bmatrix}
\]

\[
\mW_2^D =
\begin{bmatrix}
\mI_{2p} & -\mI_{2p}
\end{bmatrix}
\]

\[
\vb_2^D = \mathbf{0}_{2p}
\]

It can be easily verified that this satisfies the desired equality.
\end{proof}

\subsection{Theoretical Construction (\cref{thm:sat_search})}

% \subsubsection{Notations}

% \subsubsection{Construction with Saturated Attention}

% \subsubsection{From Saturated Attention to Softmax Attention}
% \textit{Preprint Note: We're in the process of reformatting the construction and proof for better organization}
\label{sec:construction}
\paragraph{Notations}
\begin{itemize}
    \item $p$ denotes the number of variables
    \item $t_i$ denotes the token at position $i$
    \item $T_{vars}$ denotes the set of tokens that denote variables and their negations. i.e. `\texttt{1}', `\texttt{2}', $\dots$, `\texttt{n}', `\texttt{-1}', `\texttt{-2}', $\dots$, `\texttt{-n}'
    \item $b$ denotes boolean variables
\end{itemize}
\begin{proof} 

    We first describe the encoding format of the formulas and the solution trace format before going into the details of model construction.

    \paragraph{Input Format.} We consider 3-CNF-SAT formulas in the DIMACS representation, with an initial \texttt{[BOS]} token and an ending \texttt{[SEP]} token. Each variable \( x_i \) for \( i \in [n] \) has 2 associated tokens: \texttt{i} and \texttt{-i} (e.g., \texttt{1} and \texttt{-1}), where the positive token indicates that the \( i \)-th variable appears in the clause while the negative token indicates that the negation of the \( i \)-th variable appears in the clause. Clauses are separated using the \texttt{0} token. For example, the formula

\begin{align*}
(\lnot x_2 \lor \lnot x_4 \lor \lnot x_1) \land (x_3 \lor x_4 \lor \lnot x_1) \land (\lnot x_1 \lor \lnot x_3 \lor \lnot x_2)\\ \land (x_1 \lor \lnot x_2 \lor \lnot x_4) \land (\lnot x_4 \lor x_2 \lor x_1) \land (x_1 \lor \lnot x_2 \lor x_4)
\end{align*}

would be represented as:
\begin{center}
    
\tok{[BOS] -2 -4 -1 0 3 4 -1 0 -1 -3 -2 0 1 -2 -4 0 -4 2 1 0 1 -2 4 0 [SEP]}

\end{center}

    \paragraph{Solution Trace Format.} The trace keeps track of the order of the assignments made and whether each assignment is a decision (assumption) or a unit propagation (deduction). Literals with a preceding \tok{D} token are decision literals while other literals are from unit propagation. When the model encounters a conflict between the current assignment and the formula, it performs a backtrack operation denoted by \tok{[BT]} and performs another attempt with the last decision literal negated. In particular, compared to \cref{fig:cot}, we used \tok{D} to abbreviate \tok{Assume} and use \tok{[BT]} to abbreviate \tok{Backtrack}\\
    \par As an example, the solution trace for the above SAT formula would be:\\
    \texttt{[SEP] D 2 D 1 -4 3 [BT] D 2 D -1 -4 [BT] -2 D 3 D 4 -1 SAT}
    We use simplified versions of the tokens compared to \cref{fig:cot}. In particular, we use $\texttt{[BT]}$ as a shorthand for $\tok{BackTrack}$ and $\tok{D}$ for $\tok{Deduce}$.

    \paragraph{Embedding Layer.} Our token set consists of one token for each variable and its negation, the separator token \texttt{0}, and a special token \texttt{D} to denote where decisions are made. The positional encoding occupies a single dimension and contains the numerical value of the position of the token in the string. (i.e. there exists a dimension $pos$ such that the position embedding of position $i$ is $i\cdot \mathbf{e}_{pos}$)
    \paragraph{Layer 1.} The first layer prepares for finding the nearest separator token and $D$ token. Let $i$ denote the position index of tokens:
    \begin{enumerate}
        \item Compute $i_{\text{sep}}$ where $i_{\text{sep}}=i$ if the corresponding token $t_i\in \{\text{`\texttt{0}', `\texttt{[SEP]}', `\texttt{[BT]}'}\}$ and $i_{\text{sep}}=0$ otherwise
        \item Similarly, compute $i_{\tok{D}}$ where $i_{\tok{D}}=i$ if the corresponding token $t_i=\tok{D}$ and $i_{\text{sep}}=0$ otherwise.
        \item Compute $(i-1)^2$, $i^2$ for index equality comparison
    \end{enumerate}
    The first 2 operations can both be computed using a single MLP layer that multiplies between $i$ from the positional encoding using \cref{lemma:mlp_mult}. Similarly, the 3rd operation is a multiplication operation that can be performed with \cref{lemma:mlp_mult}.
    \paragraph{Layer 2.} This layer uses 2 heads to perform the following tasks:
    \begin{enumerate}
        \item Copy the index and type of the last separator token and stores 
        \begin{align*}
            p_i^{sep}{}'&=\max\{j:j\leq i,t_j\in \{\text{`\texttt{0}', `\texttt{[SEP]}', `\texttt{[BT]}'}\}\}\\
            b_\texttt{0}&=(t_j=\text{`\texttt{0}'})\\
            b_\texttt{[SEP]}&=(t_j=\text{`\texttt{[SEP]}'})\\
            b_\texttt{[BT]}&=(t_j=\text{`\texttt{[BT]}'})\\
        \end{align*}
        for $j=p_i^{sep}{}'$
        \item (Backtrack) Compute the position of the nearest \texttt{D} token $p_i^\texttt{D}=\max\{j:j\leq i,t_j=\text{`\texttt{D}'}\}$
        \item Compute $(p_i^{sep}{}')^2$ for index operation
        % \item (Unit Propagation) Determine whether the previous token is \texttt{D}: $b_{\texttt{D}}'=(t_{i}=$`\texttt{D}'$)$
    \end{enumerate}
    \par Task 1 can be achieved via the COPY operation from \cref{lemma:TM_copy} with $\vq_i=1$, $\vk_i=i_\text{sep}$, $\vv_j=(j, \mathbb{I}[t_j=\text{`\texttt{0}'}], \mathbb{I}[t_j=\text{`\texttt{[SEP]}'}], \mathbb{I}[t_j=\text{`\texttt{[UP]}'}], \mathbb{I}[t_j=\text{`\texttt{[BackTrack]}'}])$.
    \par Task 2 is highly similar to task 1 and can be achieved using COPY with $\vq_i=1$, $\vk_i=i_{\tok{D}}$, $\vv_j=(j)$
    \par Task 3 is a multiplication operation that can be performed using \cref{lemma:mlp_mult}.
    % \par Task 4 does not require any action since the tokens are one-hot encoded and we can use $id(\text{`\texttt{D}'})$ as the dimension for $b_{\texttt{D}}'$
    \paragraph{Layer 3} This layer uses 1 head to copy the several values from the previous token to the current token. Specifically, this layer computes:
    \begin{enumerate}
        \item The position of the {\it previous} separator token, not including the current position:$$p_i^{sep}=\max\{j:j< i,t_j\in \{\text{`\texttt{0}', `\texttt{[SEP]}',`\texttt{[UP]}', `\texttt{[BackTrack]}'}\}\}$$
        \item Dermine if the previous token is {\tt D}: $b_{decision}=(t_{i-1}=\text{`\texttt{D}'})$ i.e., whether the current token is a decision variable
        \item (Induction) Compute the offset of the current token to the previous separator token $d_i^{sep}=i-p_i^{sep}{}'$
        \item Compute $(p_i^{sep})^2$,  for equality comparison at the next layer.
    \end{enumerate}
    \par Task 1 and 2 is done by copying $p_i^{sep}{}'$ and $\mathbb I[t_i=\text{`\texttt{D}'}]$ from the previous token. Specifially, we use the COPY operation from \cref{lemma:TM_copy} with $\vq_i=((i-1)^2, i-1,1)$ and $\vk_j=(-1,2j,-j^2)$ which determines $i-1=j$ via $-((i-1)-j)^2=0$ and $\vv_j=(p_i^{sep}{}',\mathbb I[t_i=\text{`\texttt{D}'}])$.
    Task 4 is a local multiplication operation that can be implemented via \cref{lemma:mlp_mult}.
    % \par Task 3 is a local linear operation that can be done using output matrix $\mW_{O}$ of the same attention head as Task 1 and the MLP layer. Specifically, the $\mW_O$ matrix maps the computed $p_i^{sep}$ from the attention operation to the position in the residual stream with factor $-1$. The MLP layer simulates a linear mapping of $i$ to the same position with factor $1$ via \cref{lemma:MLP_lin}
    \paragraph{Layer 4.} This layer uses 2 heads to perform the following tasks:
    \begin{enumerate}
        \item Compute the sum of all variable token embeddings after the previous separator to encode a vector representation of assignments and clauses at their following separator token. $$\mathbf{r}_i=\sum_{j > p_i^{sep}, t_j\in T_{vars}}\mathbf{e}_{id(t_j)}=\sum_{p_j^{sep}=p_i^{sep}, t_j\in T_{vars}}\mathbf{e}_{id(t_j)}$$
        \item (Induction) Compute the position of the second-to-last separator $p_i^{sep-}=\max\{j:j<p_i^{sep},t_j\in \{\text{`\texttt{0}', `\texttt{[SEP]}',`\texttt{[UP]}', `\texttt{[BackTrack]}}'\}\}=p_{p_i^{sep}{}'}^{sep}$ and the corresponding current position in the previous state $p_i^{-}=p_i^{sep-}+d_i^{sep}$. As a special case for the first state, we also add $4$ to $p_i^{-}$ if $b_\texttt{[SEP]}$ is true, i.e. $p_i^{-}=p_i^{sep-}+d_i^{sep}+4\cdot b_{\texttt{[SEP]}}$. The additional $4$ is the number of variables per clause + 1 to ensure that we don't consider the last clause as an assignment.
        \item (Backtrack) Compute the position of the nearest \texttt{D} token to the last separator token $p_i^{\texttt{D}-}=p_{p_{i}^{sep}{'}}^\texttt{D}$
        \item Compute $b_{exceed}=(p_i^-> p_i^{\texttt{D}-} + 1)$, this denotes whether we're beyond the last decision of the previous state.
        \item Compare $(p_i^\texttt{D-} \leq p_i^-)$ for $b_{\text{BT\_finished}}$ at the next layer.
        \item Compare if $p_i^\texttt{D-} = p_i^-$ for the $b_{backtrack}$ operator.
        \item Compute $b_{copy}'=(p_i^- < p_i^{sep}{}' - 1)$
    \end{enumerate}

    Task 1 is achieved using a MEAN operation with $\vq_i=((p_i^{sep})^2, p_i^{sep},1)$, $\vk_j=((-1, 2p_j^{sep},-(p_j^{sep})^2)$, $\vv_j=\mathbf{e}_{id(t_j)}$ for $t_j\in T_{vars}$. This attention operations results in $\frac{\mathbf{r}_i}{i-p_i^{sep}}$ The MLP layer then uses \cref{lemma:mlp_mult} to multiply the mean result by $i-p_i^{sep}$ to obtain the $\mathbf{r}_i$.

    Task 2 is achieved using the COPY operation with $\vq_i=((p_i^{sep})^2, p_i^{sep},1)$, $\vk_j=(-1,2j,-j^2)$ and $\vv_j=p_i^{sep}{}'$. The MLP layer then performs the addition operation the computes $p_i^-$ by \cref{lemma:linear_with_reglu}
    
    Similarly, Task 3 is achieved using the COPY operation with $\vq_i=((p_i^{sep})^2, p_i^{sep},1)$, $\vk_j=(-1,2j,-j^2)$ and $\vv_j=p_i^{\texttt{D}}$.

    % Task 4 is achieved using the MLP layer by computing computing $ReLU(p_i^--p_i^{sep})-ReLU(p_i^--p_i^{sep}-1)$
    \paragraph{Layer 5.} The third layer uses 5 heads to perform the following tasks:
    \begin{enumerate}
        \item Compute $\ind_{A\models F}$, $\ind_{F\models \lnot A}$, $E(D)$ where $D:=\{l \in L\;|\;F\land A\models_1 l\}$ according to \cref{lemma:vec_ded}
        \item Compute $b_{final}=b_{exceed}\land b_{decision}$
        % \item Compute $b_{final2}=b_{BT}\land b_{\texttt{D}-}$
        \item Compare $b_{no\_decision}=(p_i^{\texttt{D}}\leq p_i^{sep})$, which denotes whether the current state contains {\it no} decision variables
        \item Compute $b_{\text{BT\_finished}}=(p_i^\texttt{D-} \leq p_i^-) \land b_{\texttt{[BackTrack]}}$
        \item Compare $p_i^{-}$ with $p_i^{\texttt{D}-}-1$ by storing $p_i^{-}\leq p_i^{\texttt{D}-}-1$ and $p_i^{-}\geq p_i^{\texttt{D}-}-1$ (to check for equality at the next layer)
        \item Compare $b_{\text{backtrack}}=(p_i^{-}=p_i^{\texttt{D}-}-1)$
        
    \end{enumerate}

    \paragraph{Layer 6}
    This layer does the remaining boolean operators required for the output. In particular,
    \begin{itemize}
        \item $b_{unsat} = b_{no\_decision} \land b_{cont}$
        \item $b_{\tok{[BT]}}=b_{cont}\land \lnot (t_i=\tok{[BT]})$
        \item Compute a vector that is equal to $b_{backtrack} \cdot \rve_{BT}$, which is equal to $\rve_{BT}$ if $b_{backtrack}$ is True and $\boldsymbol{0}$ otherwise. This is to allow the operation at the output layer for backtracking
    \end{itemize}
    Note that $\land$ can be implemented as a single ReLU operation for tasks 1 and 2 that can be implemented with \cref{lemma:relu_to_reglu}, and task 3 is a multiplication operation implemented with \cref{lemma:mlp_mult}

    \paragraph{Layer 7}
    This layer performs a single operation with the MLP layer: Compute $b_{copy}\cdot e_{copy}$, which gates whether $e_{copy}$ should be predicted based on $b_{copy}$. This enables condition 5 at the output layer.
    \paragraph{Output Projection}
    The final layer is responsible for producing the output of the model based on the computed output of the pervious layers. We constructed prioritized conditional outputs, where the model outputs the token according to the first satisfied conditional in the order below:
    \begin{enumerate}
        \item If $b_{sat}$ output \tok{SAT}
        \item If $b_{cont} \land b_{no\_decision}$ output \tok{UNSAT}
        \item If $b_{cont} \land \lnot (t_i=\tok{[BackTrack]})$ output `\texttt{[BackTrack]}'
        % \item (BackTrack) If $b_{no\_decision} \land b_{BT}$ output \texttt{UNSAT}
        \item (BackTrack) If $b_{backtrack}$, output the negation of the token from position $p_i^{\texttt{D}-}+1$
        \item (Induction) If $b_{copy}$, copy token from position $p_i^{-}+1$ as output ($e_{copy}$)
        \item output a unit propagation variable, if any.
        \item output \texttt{D} if the current token is not \texttt{D}
        \item output a unassigned variable
    \end{enumerate}

    \par For the output layer, we use $l_{\texttt{[TOKEN]}}$ to denote the output logit of $\texttt{[TOKEN]}$. Since the final output of the model is the token with the highest logit, we can implement output priority by assigning outputs of higher priority rules with higher logits than lower priority rules. Specifically, we compute the output logits vector using the output layer linear transformation as:
    $$
2^7 \cdot b_{sat} \cdot \rve_{\texttt{SAT}} + 2^6 \cdot b_{cont} \cdot \rve_{\texttt{[BackTrack]}} + 2^5 \cdot b_{unsat} \cdot \rve_{\texttt{UNSAT}}$$ $$ + 2^4 \cdot b_{backtrack} \cdot \rve_{BT} + 2^3 \cdot b_{copy} \cdot \rve_{copy} + 2^2 \cdot \rve_{\texttt{UnitPropVar}} + 2^1 \cdot (1-\boldsymbol{1}[t_i=`\texttt{D}'])\cdot \rve_{\texttt{D}} + 2^0 \cdot T[(0, 0), (0,0), (1,1)] \rvr_i$$
    
\end{proof}

\begin{proposition}
    There exists a transformer with 7 layers, 5 heads, $O(p)$ embedding dimension, and $O(p^2)$ weights that, on all inputs $\vs\in \operatorname{DIMACS}(p,c)$, predicts the same token as the output as the above operations. Furthermore, let $l_{ctx}=4c+p\cdot 2^p$ be the worst-case maximum context length required to complete SAT-solving, then all weights are within $\operatorname{poly}(l_{ctx})$ and can be represented within $O(p+\log c)$ bits.
\end{proposition}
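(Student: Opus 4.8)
The preceding layer-by-layer description already exhibits the target transformer as a fixed composition, over $7$ layers with at most $5$ heads per layer, of a constant number of \emph{abstract} primitives: the COPY and MEAN attention operations of \cref{lemma:TM_copy,lemma:TM_mean}, the three saturated-attention(+MLP) heads of \cref{lemma:saturated_parallel} that compute $\ind_{A\models F}$, $\ind_{F\models\lnot A}$ and $E(D)$, and the ReGLU-realizable MLP primitives (2-layer ReLU nets, linear maps, element-wise products) of \cref{lemma:relu_to_reglu,lemma:linear_with_reglu,lemma:mlp_mult}. The plan is: (i) replace each abstract primitive by its concrete softmax/ReGLU realization from these lemmas and wire them exactly as described; (ii) verify \cref{ass:attention} for every attention head so those lemmas apply; (iii) fix one inverse-polynomial target accuracy $\epsilon$ and show the accumulated numerical error never changes the output token; (iv) count parameters and bit-length. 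Step (iv) is immediate: the embedding dimension carries $O(1)$ scalar registers (positions, the flags $b_\bullet$, etc.) plus a constant number of $2p$-dimensional vectors ($E(A)$, $E(D)$, the clause/assignment sums $\vr_i$, $\dots$), so $d_{\text{emb}}=O(p)$; each of the constantly many weight matrices across the $7$ layers is then $O(p)\times O(p)$, and the output projection is $d_{\text{emb}}\times V$ with $V=2p+O(1)$, giving $O(p^2)$ parameters, matching \cref{thm:sat_search}.

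For (ii), every head matches positions in one of two ways. The ``structural'' heads (Layers 1--5: finding the previous separator, the previous $\tok{D}$, copying from the previous position, summing variable tokens, $\dots$) match via the integer-quadratic templates $-((i-1)-j)^2$, $-(p^{sep}_i-j)^2$, etc.; since the positional encoding is the integer value $i$, a match has score exactly $0$ and every non-match has score $\le -1$, so \cref{ass:attention} holds with $\delta=\Omega(1)$ and $\rho=0$. The ``parallel clause processing'' heads of \cref{lemma:saturated_parallel} have scores that are integer combinations of $\{0,1\}$-encodings shifted so that the distinguished position $1$ carries the half-integer score $-0.5$ (for $A\models F$ and $F\models\lnot A$) or $-1.5$ (for $D$), forcing a dot-product margin $\ge\tfrac12$ between the top scores and the rest; \cref{lemma:softmax_approx_saturated} then produces an ordinary causal softmax head $\epsilon$-close to the saturated one. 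Two minor points must be handled: the margin claims must hold \emph{uniformly} over every $\vs\in\operatorname{DIMACS}(p,c)$ and over every CoT state reached during decoding (handled by an induction maintaining the register-format invariant of the layer descriptions), and the value matrix $\mW_V^D=c\,[\mI_p;\mathbf 0;\mathbf 0]$ must be renormalized to $\|\cdot\|_\infty=O(1)$ as \cref{lemma:softmax_approx_saturated} requires, pushing the scalar $c$ into the following MLP.

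For (iii) and the bit-length: every intended intermediate quantity is an integer or a $\{0,1\}$ flag, the pipeline has constant depth, and the only error-amplifying primitive is the element-wise product of \cref{lemma:mlp_mult}, whose output error is at most $\epsilon$ times the operand magnitudes; all magnitudes that occur — positions and their squares ($\le l_{ctx}^2$), clause/assignment sums ($\le p$), the vector $m\vz$ ($\le c^2$) — are $\operatorname{poly}(l_{ctx})$, and the MEAN normalizations $1/(i-p^{sep}_i)$ and $1/|\gM_{c+2}|$ are undone by the designated MLP multiplications. An induction ``after layer $\ell$, every register equals its intended value up to additive error $\ell\cdot\epsilon\cdot\operatorname{poly}(l_{ctx})$'' then shows that taking $\epsilon=1/\operatorname{poly}(l_{ctx})$ small enough keeps the final logit vector within $1/4$ of the idealized one; since the output layer builds the logits as $2^7,2^6,\dots,2^0$ times $\{0,1\}$-masked standard-basis vectors, any selected logit exceeds any non-selected one by $\ge1$, so the arg-max — hence the predicted token — equals the one produced by the abstract operations. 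Finally, the attention weights are $\operatorname{poly}(M,1/\delta,\log n,\log(1/\epsilon))=\operatorname{poly}(l_{ctx})$ by \cref{lemma:TM_copy,lemma:TM_mean,lemma:softmax_approx_saturated}, the MLP/output weights are $O(1)$, $O(c)$, or powers of two up to $2^7$, and rounding all of them to rationals with numerator and denominator $\operatorname{poly}(l_{ctx})$ (absorbed by the $\epsilon$-slack) makes each weight fit in $O(\log l_{ctx})=O(p+\log c)$ bits, since $l_{ctx}=4c+p\cdot 2^p$.

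I expect the main obstacle to be the joint induction underlying (ii) and (iii): formulating a single invariant that, on every autoregressively reachable input, simultaneously pins down the exact format and contents of each register after each layer, certifies the attention-score margin at each head, and bounds the accumulated approximation error so that no arg-max flips. The constituent lemmas are all in hand; the difficulty is purely in threading them through the $7$-layer, $5$-head construction without either breaking a margin or letting a multiplicative error blow up.
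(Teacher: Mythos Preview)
Your proposal is correct and follows essentially the same approach as the paper: allocate disjoint residual-stream slots for each intermediate, realize each layer's operations via the cited COPY/MEAN/saturated-attention and ReGLU lemmas, observe that only a constant number of $2p$-dimensional vectors appear so $d_{\text{emb}}=O(p)$ and the weight count is $O(p^2)$, and bound parameter magnitudes by $\operatorname{poly}(l_{ctx})$ to get the $O(p+\log c)$ bit bound. In fact your plan is considerably more detailed than the paper's own argument, which explicitly ``only argue[s] from a high level'' and does not spell out the margin verification for \cref{ass:attention}, the layer-by-layer error induction, or the $\mW_V^D$ renormalization you flag; the paper also asserts weights can be chosen as multiples of $0.5$ rather than your rounding-to-rationals route, but both suffice.
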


We only argue from a high level why this is true due to the complexity of the construction. In the above construction, we demonstrate how each operation can be approximated by a Self-attention or MLP layer. We can set the embedding dimension to the sum of dimensions of all the intermediate values and allocate for every intermediate values a range of dimensions that's equal to the dimension of the variables. All dimensions are initialized to 0 in the positional encoding of the transformer except for the dimensions assigned to the positional index $i$. Similarly, only the dimensions assigned to the one-hot token representation are initialized in the token embeddings. At each layer, the self-attention heads and MLP layers extract the variable values from the residual stream and perform the operations assigned to them at each layer. 

The only intermediate values whose dimensions are dependent on $p$ are the vectors for one-hot encodings and storing binary encodings of clauses and assignments. They all have size $2p$. Therefore, the number of total allocated embedding sizes is also $O(p)$.

Furthermore, \ref{sec:sacap} shows that all parameter values are polynomial with respect to the context length and the inverse of approximation errors. Note that we need only guarantee the final error is less than 1 to prevent affecting the output token. Furthermore, we can choose all parameter values so that they are multiples of $0.5$. As such, all parameters are within $\operatorname{poly}(l_{ctx})$ and can be represented by $O(\log(l_{ctx}))=O(p+\log c)$

\subsection{Correctness of Construction (\cref{thm:sat_search})}
\noindent{\it Note: This section assumes prior knowledge in propositional logic and SAT solving, including an understanding of the DPLL algorithm. For a brief explanation of the notations in this section, please refer to (\cite{abstractDPLL}). For more general knowledge, please refer to (\cite{satbook}).}

We prove that the above model autoregressive solves $\operatorname{3-SAT}_{p,c}$ by showing that it uses the CoT to simulate the  ``Abstract DPLL Procedure".

\subsubsection{Abstract DPLL}
\label{sec:appendix_abstract_dpll}

In this section, we provide a description of abstract DPLL. Since the focus of this paper is not to show the correctness of the DPLL algorithm but rather how our model's CoT is equivalent to it, we only present the main results from \cite{abstractDPLL} and refer readers to the original work for proof of the theorems.

Let $M$ be an ordered trace of variable assignments with information on whether each assignment is an \textit{decision literal} (i.e. assumption) or an \textit{unit propagation} (i.e., deduction).

For example, the ordered trace $3^{d}\: 1 \: \overline{2} \: 4^d \: 5$ denotes the following sequence of operations:

Assume $x_3=T\rightarrow$ Deduce $x_1=T\rightarrow$ Deduce $x_2=F\rightarrow$ Assume $x_4=T\rightarrow$ Deduce $x_5=T$.

Let $F$ denote a SAT formula in CNF format (which includes 3-SAT), $C$ denote a clause (e.g., $x_1 \lor \lnot x_2 \lor x_3$), $l$ denote a single literal (e.g., $\lnot x_2$), and $l^d$ denote a decision literal. Let $M \models F$ denote that the assignment in $M$ satisfies the formula $F$.

\begin{definition}[State in the DPLL Transition System]
A \emph{state} $S\in \sS$ in the DPLL transition system is either:
\begin{itemize}
    \item The special states \(\operatorname{SAT}, \operatorname{UNSAT}\), indicating that the formula satisfiable or unsatisfiable
    \item A pair \( M \parallel F \), where:
    \begin{itemize}
        \item \( F \) is a finite set of clauses \(C_1\land C_2 \dots \land C_c\) (a conjunctive normal form (CNF) formula), and
        \item \( M \) is a sequence of annotated literals \(l_1 \circ l_2 \dots \circ l_i\) for some $i\in[n]$ representing variable assignments, where $\circ$ denotes concatenation. Annotations indicate whether a literal is a decision literal (denoted by \( l^{\mathrm{d}} \)) or derived through unit propagation.
    \end{itemize}
\end{itemize}
We denote the empty sequence of literals by \( \emptyset \), unit sequences by their only literal, and the concatenation of two sequences by simple juxtaposition. While \( M \) is a sequence, it can also be viewed as a set of variable assignments by ignoring annotations and order.
\end{definition}

\begin{definition}[Adapted from Definition 1 of~\cite{abstractDPLL}]
\label{def:abstract_dpll}

The Basic DPLL system consists of the following transition rules $\sS \Longrightarrow \sS$:
\begin{flalign*}
\operatorname{UnitPropagate}:\\ M \,&\|\, F \land (C \lor l) & \quad \Longrightarrow \quad & M \circ l \,\|\, F \land (C \lor l) & \quad \textbf{if} \quad &
\begin{cases}
M \models \neg C, \\
l \text{ is undefined in } M.
\end{cases} \\
\operatorname{Decide}:\\ M \,&\|\, F & \quad \Longrightarrow \quad & M \circ l^{\mathrm{d}} \,\|\, F & \quad \textbf{if} \quad &
\begin{cases}
l \text{ or } \neg l \text{ occurs in a clause of } F, \\
l \text{ is undefined in } M.
\end{cases} \\
\operatorname{Backjump}:\\ M \circ l^{\mathrm{d}} \circ N \,&\|\, F & \quad \Longrightarrow \quad & M \circ l' \,\|\, F & \quad \textbf{if} \quad &
\begin{cases}
\text{There is some clause } C \lor l' \text{ s.t. } \\
 F \models C \lor l', \quad M \models \lnot C, \\
l' \text{ is undefined in } M, \\
l' \text{ or } \neg l' \text{ occurs in a clause of } F.
\end{cases}\\
\operatorname{Fail}:\\ M \,&\|\, F \land C & \quad \Longrightarrow \quad & \operatorname{UNSAT} & \quad \textbf{if} \quad &
\begin{cases}
M \models \neg C, \\
M \text{ contains no decision literals.}
\end{cases} \\
\operatorname{Success}:\\ M \,&\|\, F & \quad \Longrightarrow \quad & \operatorname{SAT} & \quad \textbf{if} \quad &
M \models F
\end{flalign*}
We also use $S\Longrightarrow^* S'$ to denote that there exist $S_1, S_2, \dots, S_i$ such that $S\Longrightarrow S_1 \Longrightarrow \dots \Longrightarrow S_i \Longrightarrow S'$. Also $S\Longrightarrow^! S'$ denote that $S\Longrightarrow^* S'$ and $S'$ is a final state (\op{SAT} or \op{UNSAT}).

\noindent{\it Explanation of the \(\operatorname{Backjump}\) Operation:}

The \(\operatorname{Backjump}\) operation allows the DPLL algorithm to backtrack to a previous decision and learn a new literal. In particular, $F \models C \lor l'$ means that, for some clause $C$, every assignment that satisfies $F$ must either satisfy $C$ (i.e., contain the negation of each literal in $C$) or contain $l'$ as an assignment. However, if $M\models \lnot C$, which means that $M$ conflicts with $C$ and thus contains the negation of each literal in $C$, then if we want some assignment containing $M$ to still satisfy $F$, then the assignment must also include the literal $l'$ as an assignment to ensure that it satisfies $C \lor l'$, a requirement for satisfying $F$.

In our construction, we only consider the narrower set of \op{BackTrack} operations that find the last decision and negate it:

\begin{lemma}

\label{lemma:backtrack_is_backjump}[Corrollary of Lemma 6 from \cite{abstractDPLL}]
Assume that $ \emptyset \parallel F \Longrightarrow^* M \circ l^{\mathrm{d}} \circ N \,\|\, F$, the \op{BackTrack} operation:
\begin{align*}
M \circ l^{\mathrm{d}} \circ N \,&\|\, F & \quad \Longrightarrow \quad & M \circ \lnot l \,\|\, F & \quad \textbf{if} \quad &
\begin{cases}
\text{There exists clause } C \text{ in } F \text{ such that}\\
M \circ l^{\mathrm{d}} \circ N \models \lnot C\\
N \text{contains no decision literals} \\
\end{cases}\\
\end{align*}
is always a valid \op{Backjump} operation in \cref{def:abstract_dpll}.
\end{lemma}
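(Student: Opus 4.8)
The goal is to show that, whenever the \op{BackTrack} side conditions hold at a reachable state $M \circ l^{\mathrm{d}} \circ N \parallel F$, one can exhibit a \emph{backjump clause} $C' \lor \lnot l$ that makes the transition to $M \circ \lnot l \parallel F$ an instance of \op{Backjump}. First I would dispose of the two bookkeeping premises of \op{Backjump}. The literal $l^{\mathrm{d}}$ was placed on the trail by some \op{Decide} step, and since trails only grow at the right end or get truncated by \op{Backjump} (which would also delete $l^{\mathrm{d}}$), the portion of the trail to the left of $l^{\mathrm{d}}$ is still exactly the $M$ on which that \op{Decide} fired; hence $l$, and therefore $\lnot l$, is undefined in $M$, and $l$ or $\lnot l$ occurs in a clause of $F$. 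It remains to build $C'$ with $F \models C' \lor \lnot l$ and $M \models \lnot C'$.

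The heart of the argument is conflict-driven resolution against the conflicting clause. Let $C \in F$ be the clause with $M \circ l^{\mathrm{d}} \circ N \models \lnot C$, and write $N = n_1 \circ \dots \circ n_k$. Because $N$ contains no decision literals, each $n_j$ was added by \op{UnitPropagate} from some clause $E_j \lor n_j \in F$ with $M \circ l^{\mathrm{d}} \circ n_1 \circ \dots \circ n_{j-1} \models \lnot E_j$. Starting from $D^{(0)} := C$, I would process the literals of $N$ from most recent to least recent: at step $t$ (handling $n_{k-t+1}$), if $\lnot n_{k-t+1} \in D^{(t-1)}$ let $D^{(t)}$ be the resolvent of $D^{(t-1)}$ and $E_{k-t+1} \lor n_{k-t+1}$ on $\mathrm{var}(n_{k-t+1})$, and otherwise set $D^{(t)} := D^{(t-1)}$. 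The invariant to carry is: (i) $F \models D^{(t)}$, which holds because resolving logical consequences of $F$ yields a logical consequence of $F$; (ii) $M \circ l^{\mathrm{d}} \circ n_1 \circ \dots \circ n_{k-t} \models \lnot D^{(t)}$; and (iii) $D^{(t)}$ contains no literal over any of $\mathrm{var}(n_{k-t+1}), \dots, \mathrm{var}(n_k)$. The role of (iii) is to make (ii) inductive: each $E_j$ is falsified strictly before $n_j$ is assigned and so mentions only earlier variables, so a variable once resolved out is never reintroduced; and a clause falsified by a trail cannot contain the positive form of any trail literal, so after deleting the pivot $\lnot n_{k-t+1}$ the remaining literals of $D^{(t-1)}$ already mention only variables assigned before $n_{k-t+1}$ (invariant (iii) at step $t-1$ rules out later $N$-variables). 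The ``skip'' case is identical, since $\lnot n_{k-t+1} \notin D^{(t-1)}$ together with $n_{k-t+1} \notin D^{(t-1)}$ shows $D^{(t-1)}$ has no literal over $\mathrm{var}(n_{k-t+1})$.

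After $k$ steps I obtain a clause $D := D^{(k)}$ with $F \models D$, with $M \circ l^{\mathrm{d}} \models \lnot D$, and with no literal over any variable of $N$. I would finish with a case split on whether $\lnot l \in D$. If $\lnot l \in D$, put $C' := D \setminus \{\lnot l\}$; then $C' \lor \lnot l = D$, so $F \models C' \lor \lnot l$, and since $D$ (being falsified by a trail containing $l$) cannot contain $l$ and contains no $N$-variable, every literal of $C'$ is already falsified by $M$, i.e.\ $M \models \lnot C'$. If $\lnot l \notin D$, then $D$ mentions neither $\mathrm{var}(l)$ nor any $N$-variable, so $M \models \lnot D$ already; take $C' := D$, and $F \models D$ gives $F \models C' \lor \lnot l$ by weakening. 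In either case $C' \lor \lnot l$ satisfies all premises of \op{Backjump} with $l' := \lnot l$, and the transition it licenses is precisely $M \circ l^{\mathrm{d}} \circ N \parallel F \Longrightarrow M \circ \lnot l \parallel F$, which is the \op{BackTrack} step.

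I expect the main obstacle to be carrying invariant (iii), and hence (ii), faithfully through the induction: checking that resolving away one unit-propagated literal cannot pull a later-propagated variable back into the clause, and that the ``skip'' case preserves everything, is the routine-but-delicate correctness core of conflict analysis. This is exactly the content of Lemma 6 of \cite{abstractDPLL}, so a legitimate shortcut is to cite that lemma for the existence of a suitable backjump clause and then carry out only the short specialization above that fixes $l' = \lnot l$ and verifies the two trail-bookkeeping side conditions.
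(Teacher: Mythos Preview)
Your proposal is correct, and in fact goes well beyond what the paper does. The paper does not supply any argument for this lemma at all: it is stated as a corollary of Lemma~6 of \cite{abstractDPLL} and left at that. Your closing remark---that one may simply cite that lemma for the existence of a backjump clause and then verify that the choice $l' = \lnot l$ meets the two side conditions---is exactly the paper's ``proof.''

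What you have written in addition is a self-contained reconstruction of the conflict-resolution argument that underlies Lemma~6 itself, specialized to the last decision level. The invariants (i)--(iii) and the case split on whether $\lnot l \in D$ are the standard ingredients, and you handle them correctly; in particular, your justification of invariant~(iii) (that resolving away a propagated literal cannot reintroduce a later-propagated variable, because the antecedent $E_j$ is falsified by the trail \emph{before} $n_j$ and trails contain no complementary pair) is the crucial point and is right. The trail-bookkeeping argument for ``$\lnot l$ undefined in $M$'' and ``$l$ or $\lnot l$ occurs in $F$'' is also correct, relying on the fact that all transitions either append to the trail or truncate a suffix containing $l^{\mathrm d}$. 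So your detailed route is sound; it simply duplicates the content of the cited external lemma rather than invoking it.
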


\end{definition}

\begin{definition}[Run of the DPLL Algorithm]
A \emph{run} of the DPLL algorithm on formula $F$ is a sequence of states \( S_0 \Longrightarrow S_1 \Longrightarrow \dots\Longrightarrow S_T \) such that:
\begin{itemize}
    \item \( S_0 \) is the initial state \( \emptyset \parallel F \)
    \item For each \( i = 0, 1, \dots, n-1 \), the transition \( S_i \Longrightarrow S_{i+1} \) is valid according to the transition rules of the DPLL system in \cref{def:abstract_dpll} (e.g., $\operatorname{UnitPropagate}$, $\operatorname{Decide}$, $\operatorname{Backjump}$, or $\operatorname{Fail}$);
    \item \( S_n \) is a final state that is either \(\operatorname{SAT}\) or \(\operatorname{UNSAT}\)
\end{itemize}
\end{definition}

Note that the above definition is simply the expansion of $\emptyset \parallel F \Longrightarrow^!S_T$.

The following theorem states that the DPLL procedure always decides the satisfiability of CNF formulas:
\begin{lemma}
\label{thm:dpll_correct} [Theorem 5 and Theorem 9 Combined from \cite{abstractDPLL}]
The Basic DPLL system provides a decision procedure for the satisfiability of CNF formulas \( F \). Specifically:
\begin{enumerate}
    \item \( \emptyset \parallel F \Longrightarrow^{!} \operatorname{UNSAT} \) if and only if \( F \) is unsatisfiable.
    \item \( \emptyset \parallel F \Longrightarrow^{!} \operatorname{SAT} \) if and only if \( F \) is satisfiable.
    \item There exist no infinite sequences of the form \( \emptyset \parallel F \Longrightarrow S_1 \Longrightarrow \cdots \)
\end{enumerate}
\end{lemma}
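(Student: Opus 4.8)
The plan is to reconstruct the argument of \cite{abstractDPLL}, of which this lemma is the Basic-system specialization, in two logical halves: strong normalization of $\Longrightarrow$ from $\emptyset \parallel F$ (part 3), and a characterization of the unique reachable terminal state (parts 1--2). The backbone is a collection of invariants maintained along any derivation $\emptyset \parallel F \Longrightarrow^{*} M \parallel F$: the clause set is never modified by the Basic rules; $M$ carries at most one literal per variable, so $|M|\le p$; $M$ is consistent; and—the crucial one—every non-decision literal of $M$ is entailed by $F$ together with the literals that precede it in $M$. I would check each rule preserves these: $\operatorname{Decide}$ trivially (it only appends a decision literal over an unassigned variable); $\operatorname{UnitPropagate}$ because a unit clause forces its lone literal; and $\operatorname{Backjump}$ because its backjump clause $C\lor l'$ satisfies $F\models C\lor l'$ while $M\models\lnot C$. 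When the only backtracking move is $\operatorname{BackTrack}$ (negate the last decision literal), \cref{lemma:backtrack_is_backjump} supplies exactly the clause the invariant needs, so it still propagates.

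For termination I would put a well-founded order on states and verify every non-terminal transition strictly decreases under it. Writing the trail as $M = M_0\,l_1^{\mathrm{d}}\,M_1\cdots l_k^{\mathrm{d}}\,M_k$ with each block $M_i$ decision-free, one associates to $M\parallel F$ a tuple recording, level by level, how ``full'' the assignment is (e.g.\ the lengths $|M_0|,\dots,|M_k|$, suitably complemented and padded to length $p+1$), compared lexicographically: $\operatorname{UnitPropagate}$ and $\operatorname{Decide}$ extend the trail without introducing an earlier level, while $\operatorname{Backjump}$ discards a decision level, so a sign-adjusted version of this tuple strictly decreases in every case; I would defer to \cite{abstractDPLL} for the exact bookkeeping rather than re-deriving it. Then comes a \emph{progress lemma}: any state that is neither $\operatorname{SAT}$ nor $\operatorname{UNSAT}$ admits a transition. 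If $M\models F$ then $\operatorname{Success}$ applies; otherwise, if some clause of $F$ is falsified by $M$, then $\operatorname{Fail}$ applies when $M$ has no decision literal and $\operatorname{Backjump}$/$\operatorname{BackTrack}$ applies otherwise (again via \cref{lemma:backtrack_is_backjump}); otherwise $M$ is conflict-free but incomplete, so some variable of $F$ is unassigned and $\operatorname{Decide}$ (or $\operatorname{UnitPropagate}$, if a clause is unit) fires. Combined with well-foundedness, every derivation from $\emptyset\parallel F$ is finite and ends in $\operatorname{SAT}$ or $\operatorname{UNSAT}$.

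Parts 1--2 then follow by inspecting the last transition, using that a derivation ends in exactly one of the two terminal states. A derivation reaching $\operatorname{SAT}$ does so by $\operatorname{Success}$ from some $M\parallel F$ with $M\models F$; as $M$ is consistent and assigns every variable occurring in $F$, any extension is a model, so $F$ is satisfiable—and conversely, if $F$ is satisfiable the derivation cannot end in $\operatorname{UNSAT}$ (shown next), so by termination it ends in $\operatorname{SAT}$. A derivation reaching $\operatorname{UNSAT}$ does so by $\operatorname{Fail}$ from some $M\parallel F\land C$ with $M\models\lnot C$ and $M$ decision-free; then every literal of $M$ is non-decision, so by the entailment invariant $F\models M$, hence $F\models\lnot C$ while $C$ is a clause of $F$, forcing $F$ unsatisfiable—and conversely, if $F$ is unsatisfiable no reachable state has $M\models F$, so the derivation cannot end in $\operatorname{SAT}$ and must end in $\operatorname{UNSAT}$. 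The step I expect to demand the most care is establishing the entailment invariant through backtracking; in our restricted setting it is already handed to us by \cref{lemma:backtrack_is_backjump}, so in practice the only genuinely fiddly remaining piece is the termination measure, which is routine bookkeeping and best delegated to \cite{abstractDPLL}.
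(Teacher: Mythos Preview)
The paper does not actually prove this lemma: it is stated as a quotation of Theorems~5 and~9 of \cite{abstractDPLL}, with the explicit remark ``we only present the main results from \cite{abstractDPLL} and refer readers to the original work for proof of the theorems.'' So there is no in-paper proof to compare against; the intended ``proof'' here is a citation.

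Your reconstruction is a faithful sketch of the standard argument from \cite{abstractDPLL}: invariants preserved by each rule (consistency of $M$, at most one literal per variable, and the key entailment invariant for non-decision literals), a well-founded lexicographic measure on the decision-level structure of $M$ for termination, a progress lemma showing every non-terminal state has a successor, and then reading off soundness/completeness from the last transition. The use of \cref{lemma:backtrack_is_backjump} to discharge the $\operatorname{Backjump}$ side-conditions in the restricted $\operatorname{BackTrack}$ setting is exactly how this paper intends that lemma to be consumed. Two small points worth tightening if you actually write this out: (i) in the progress lemma, the case ``$M$ is conflict-free but $M\not\models F$'' needs the observation that some clause is neither satisfied nor falsified, hence contains an unassigned variable of $F$, to guarantee $\operatorname{Decide}$ is applicable; and (ii) the termination measure you allude to (complemented block lengths compared lexicographically) should be spelled out enough to see that $\operatorname{Backjump}$ strictly decreases it---you correctly flag this as the fiddly part and defer to \cite{abstractDPLL}, which is appropriate given that the present paper does the same.
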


\subsubsection{Trace Equivalence and Inductive Proof}

% We demonstrate that our Transformer in \cref{thm:sat_search} solves SAT by showing that the CoT produced by the Transformer is "trace equivalent" to an abstract DPLL algorithm with some heuristic. We first provide definition of ``trace equivalence":
% \begin{definition}[Trace Equivalence of Algorithms]
% \label{def:trace_equivalence}

% Let $A$ and $B$ be two algorithms. Let $\Sigma_A$ and $\Sigma_B$ be the sets of possible states of $A$ and $B$, respectively. We say that algorithms $A$ and $B$ are \emph{trace equivalent} if there exists a bijective mapping $\phi: \Sigma_A \rightarrow \Sigma_B$, independent of the input, such that for every input $s$, the traces produced by $A$ and $B$ satisfy the following:

% If the execution of $A$ on input $s$ produces the trace $\mathrm{Tr}_A(s) = [\sigma_1^A, \sigma_2^A, \dotsc, \sigma_n^A]$, and the execution of $B$ on the same input $s$ produces the trace $\mathrm{Tr}_B(s) = [\sigma_1^B, \sigma_2^B, \dotsc, \sigma_n^B]$, then for all $i \in \{1, 2, \dotsc, n\}$,
% \[
% \sigma_i^B = \phi(\sigma_i^A).
% \]

% That is, the sequences of states of $A$ and $B$ are in one-to-one correspondence via the fixed mapping $\phi$, and corresponding states are related by this mapping for every input $s$.

% \end{definition}

To prove that our Transformer indeed simulates abstract DPLL algorithm, we use an argument of refinement: we view our Transformer construction with CoT as a state transition system and show that that transitions of this system "refines" that of the abstract DPLL state transition system:

\begin{definition}
    A transition system is a tuple $(S, T, s_0)$ where $S$ is the set of states, $T \subseteq S \times S$ is the transition relation, and $s_0$ is the start state. If $(s_1, s_2)\in T$, we say that there is a transition from $s_1$ to $s_2$ and denote $s_1 \Rightarrow s_2$.
\end{definition}

\begin{definition}
    A run $r$ of transition system $(S, T, s_0)$ is a (potentially infinite) sequence $(s_0, s_1, \dots)$ such that:
    \begin{itemize}
        \item The sequence starts with $s_0$
        \item At each step $t\geq 0$, $(s_t,s_{t+1})\in T$
    \end{itemize}
    The run $r$ \textit{halts} if it's a finite sequence such that $(s_0, s_1, \dots, s_t^*)$ such that $s_t^*$ does not have any next transitions, i.e., There's no state $s$ such that $(s_t^*,s)\in T$ 
\end{definition}

\begin{definition}[Refinement]
Given two transition systems \( A = (S_A, T_A, s_{A0}) \) and \( B = (S_B, T_B, s_{B0}) \). Transition system $A$ \emph{refines} $B$ if there is a \emph{refinement mapping} \( R \subseteq S_A \times S_B \) such that:

\begin{enumerate}
    \item $R$ maps the initial state of $A$ to the initial state of $B$:
    \[
    (s_{A0}, s_{B0}) \in R.
    \]
    \item
    For every \( (s_A, s_B) \in R \), and every run $r$ that contains $s_A$, let $r'=(s_A, \dots)$ be the suffix of $r$ starting from $s_A$. There exists $s_A'\in S_A$, $s_B'\in S_B$ such that $s_A'\in r'$ and $(s_B,s_B')\in T_B$.
    % if \( s_B \Rightarrow_B s_B' \), then there exists a path in \( A \):
    
\end{enumerate}
Here, \( \Rightarrow_A^* \) denotes the reflexive transitive closure of \( \Rightarrow_A \).
\end{definition}

\begin{proposition}
    Given two transition systems \( A = (S_A, T_A, s_{A0}) \) and \( B = (S_B, T_B, s_{B0}) \). If transition system $A$ \emph{refines} $B$, and {\it every} run of $B$ halts and ends in state $s^*_B$, then every run of $A$ contains on $s^*_A$ such that $R(s^*_A)=s^*_B$.
\end{proposition}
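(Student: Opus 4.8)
The plan is to use the refinement mapping $R$ to lift an arbitrary run of $A$ into a run of $B$, checkpoint by checkpoint, and then invoke the assumption that $B$ always halts. Fix any run $r = (s_0, s_1, s_2, \dots)$ of $A$, so that $s_0 = s_{A0}$. I would construct, by induction on $k$, a nondecreasing sequence of positions $0 = n_0 \le n_1 \le n_2 \le \cdots$ along $r$ together with states $u_0, u_1, u_2, \dots \in S_B$ satisfying $u_0 = s_{B0}$, $(u_j, u_{j+1}) \in T_B$ for every $j$, and $(s_{n_k}, u_k) \in R$ for every $k$. The base case is exactly clause~(1) of the definition of refinement, which gives $(s_{n_0}, u_0) = (s_{A0}, s_{B0}) \in R$.

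For the inductive step, assume $(s_{n_k}, u_k) \in R$ is already defined. If $u_k = s^*_B$, stop. Otherwise, first note that $u_k$ must possess a $T_B$-successor: the finite sequence $(u_0, u_1, \dots, u_k)$ is a valid run of $B$ by construction, so if $u_k$ had no successor it would be a halting run of $B$ whose final state is not $s^*_B$, contradicting the hypothesis that every run of $B$ halts and ends in $s^*_B$. Next, apply clause~(2) of the refinement definition to the related pair $(s_{n_k}, u_k)$ and to the run $r$ (which contains $s_{n_k}$), with $r'$ the suffix of $r$ beginning at position $n_k$: this produces some $s_A'$ occurring in $r'$ and some $u_{k+1} \in S_B$ with $(u_k, u_{k+1}) \in T_B$ and $(s_A', u_{k+1}) \in R$. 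Choosing a position $n_{k+1} \ge n_k$ of $r$ at which $s_A'$ occurs and setting $s_{n_{k+1}} = s_A'$ completes the step; it does no harm if $n_{k+1} = n_k$, since the $B$-side has advanced regardless.

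To finish, I would argue termination of the construction. If it never stopped, then $(u_0, u_1, u_2, \dots)$ would be an infinite run of $B$ starting from $s_{B0}$, contradicting the assumption that every run of $B$ halts. Hence the construction halts at some stage $k$, and by its definition this can only occur when $u_k = s^*_B$; then $s^*_A := s_{n_k}$ lies on $r$ and satisfies $(s^*_A, s^*_B) \in R$, as required. The main obstacle is conceptual rather than computational: one must make sure the lifted sequence $(u_k)$ is literally a run of $B$, so that both ``a state with no successor must be $s^*_B$'' and ``there is no infinite run'' apply to it; and one should read clause~(2) of the refinement definition as also asserting $(s_A', s_B') \in R$ (otherwise the mapping conveys no information and the statement fails), which I take to be the intended formulation. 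Everything else is routine bookkeeping about indices into the run $r$.
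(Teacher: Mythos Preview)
Your argument is correct. The paper states this proposition without proof, so there is no in-paper argument to compare against; the lifting-of-runs construction you give (build a $B$-run $(u_k)$ along checkpoints $n_k$ in the $A$-run via repeated application of clause~(2), then invoke termination of $B$) is the standard way to establish such a refinement-transfer result and goes through as you describe. You are also right to flag that clause~(2) of the paper's refinement definition, as printed, omits the condition $(s_A', s_B') \in R$; without it the clause carries no inductive content, and your reading is clearly the intended one.
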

To proceed with this argument, we first need to define the refinement mapping between our model's CoT and the states of abstract DPLL. Consider the following model input and CoT trace:

\tok{[BOS] -2 -4 -1 0 3 4 -1 0 -1 -3 -2 0 1 -2 -4 0 -4 2 1 0 1 -2 4 0 [SEP] D 2 D 1 -4 3 [BT] D 2 D -1 -4}

Recall that \tok{[BT]} denotes backtracking and \tok{D} denotes that the next token is a decision literal.

Note that the prompt input ends at \tok{[SEP]} and the rest is the CoT produced by the model.

We want to convert this trace to a state $S=M \| F$ such that $F$ is the CNF formula in the DIAMCS encoding in the prompt input and $M$ is the "assignment trace" at the last attempt (i.e., after the last \tok{[BT]} token.). As such, $M$ correspond to the \tok{D 2 D -1 -4} portion of the trace and thus $M =2^d\: \bar{1}^d\: \bar{4}$ as described in \cref{sec:appendix_abstract_dpll}. We formalize this process as follows:

\begin{definition}[Translating CoT to Abstract DPLL State]
\label{def:cot_to_dpll}

For any number of variables $p\in \sN^+$, let $\gV$ be the set of tokens:
\[
\gV = \{\, \tok{-i},\ \tok{i} \mid i \in [p] \,\} \cup \{\, \tok{D},\ \tok{[SEP]},\ \tok{[BOS]},\ \tok{[BT]},\ \tok{0},\ \tok{SAT},\ \tok{UNSAT} \,\}.
\]

Define a mapping $f_{\gS}: \gV^* \rightarrow \gS \cup \{\operatorname{error}\}$ that converts a sequence of tokens $R \in \gV^*$ into an abstract DPLL state as follows:

\begin{enumerate}
    \item \textbf{If} $R$ ends with $\tok{SAT}$ or $\tok{UNSAT}$, \textbf{then} set $M_{\gS}(R)$ to $\operatorname{SAT}$ or $\operatorname{UNSAT}$ accordingly. 

    \item \textbf{Else if} $R$ contains exactly one $\tok{[SEP]}$ token, split $R$ at $\tok{[SEP]}$ into $R_{\text{DIMACS}}$ and $R_{\text{Trace}}$.

    \item Parse $R_{\text{DIMACS}}$ as a DIMACS representation of CNF formula $F$, assuming it starts with $\tok{[BOS]}$ and ends with $\tok{0}$. If parsing fails, set $M_{\gS}(R) = \operatorname{fail}$.

    \item Find the last \tok{[BT]} in $R_{\text{Trace}}$, and let $R_{\text{current}}$ be the part of $R_{\text{Trace}}$ after the last \tok{[BT]}. If there's none, set $R_{\text{current}}$ to $R_{\text{Trace}}$.

    \item Initialize an empty sequence $M$ to represent variable assignments and set a flag $\textit{isDecision} \leftarrow \texttt{False}$.

    \item Process each token $t$ in $R_{\text{current}}$ sequentially:

    \begin{itemize}
        \item \textbf{If} $t = \tok{D}$, set $\textit{isDecision} \leftarrow \texttt{True}$.

        \item \textbf{Else if} $l$ is a literal, append $l$ to $M$, annotated as a decision literal if $\textit{isDecision} = \texttt{True}$, or as a unit propagation otherwise.

        \item Reset $\textit{isDecision} \leftarrow \texttt{False}$.

        \item \textbf{Else}, set $M_{\gS}(R) = \operatorname{error}$.

    \end{itemize}

    \item \textbf{Return} the state $M \parallel F$.

\end{enumerate}

\end{definition}

With the above mapping, we can specify the following properties of our Transformer construction based on logical relations between $A$ and $F$:

\begin{proposition}
    Given input sequence $s_{1:n} \in \gV^*$ such that $f_\gS(s_{1:n})=M\;\|\;F$ for which $F$ is a valid 3-SAT formula and $M$ is a sequence of annotated literals. Let $A$ be the partial assignment corresponding to $M$ (i.e., removing annotation and order). Let $D:=\{l \in L\;|\;F\land A\models_1 l\}$ be the set of literals that can be deduced through unit propagation. Let $U$ be the set of literals corresponding to variables not assigned in $A$. Let $s_{n+1}$ be the output of the Transformer model defined in \cref{sec:construction} when given $s_{1:n}$ as input, then $s_{n+1}$ satisfy the following:
    \begin{align*}
        A\models F & \Longrightarrow s_{n+1}=\cd{SAT}\\
        (M\;\textnormal{contains no decision literals})\;\land\;(F\models\lnot A) & \Longrightarrow s_{n+1}=\cd{UNSAT}\\
        (M\;\textnormal{contains decision literals})\;\land\;(F\models\lnot A) & \Longrightarrow s_{n+1}=\cd{[BackTrack]}\\
        (A\not \models F)\land (F\not \models \lnot A)\land (D\neq \emptyset)&\Longrightarrow s_{n+1}\in D\\
        (A\not \models F)\land (F\not \models \lnot A)\land (D= \emptyset)\land(s_n\neq \tok{D})&\Longrightarrow s_{n+1}=\tok{D}\\
        (A\not \models F)\land (F\not \models \lnot A)\land (D= \emptyset)\land(s_n= \tok{D})&\Longrightarrow s_{n+1}\in U
    \end{align*}
\end{proposition}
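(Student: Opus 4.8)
The plan is to verify the six implications by directly tracing the construction in \cref{sec:construction}, showing that the intermediate boolean flags computed by Layers 1--7 satisfy the right semantic characterizations, and then checking that the prioritized output logits select the claimed token. I would structure this as a sequence of lemmas linking syntactic quantities computed by the transformer to the logical predicates in the statement.

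\textbf{Step 1: Encoding recovery.} First I would establish that on input $s_{1:n}$ with $f_\gS(s_{1:n}) = M \,\|\, F$, Layers 1--4 correctly compute, at the final position $n$: (i) the index $p_n^{sep}$ of the separator delimiting the current assignment block (the part after the last \tok{[BT]}), (ii) the vector $\mathbf{r}_n = \sum_{j} \mathbf{e}_{id(t_j)}$ summing one-hot embeddings of the literal tokens in that block, which by \cref{def:clause_encode} equals $E(A)$, and (iii) analogous sums $E(C_i)$ stored at each clause-separator position. This is a bookkeeping argument invoking \cref{lemma:TM_copy}, \cref{lemma:TM_mean}, \cref{lemma:mlp_mult}, and \cref{lemma:linear_with_reglu} for the arithmetic, plus the induction/backtrack position computations that handle locating the relevant block; it is routine but must be done carefully enough to pin down that the quantities fed to Layer 5 really are $E(A)$ and $\{E(C_i)\}$.

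\textbf{Step 2: Logical flags.} Next, using \cref{lemma:parallel} (equivalently \cref{lemma:saturated_parallel}), Layer 5 computes $\ind_{A\models F}$, $\ind_{F\models\lnot A}$, and $E(D)$ to within error $<1/2$, so after rounding they are exact. I would then identify the remaining flags with their intended meanings: $b_{no\_decision}$ is true iff $M$ contains no decision literal (because $p_n^{\tok{D}} \le p_n^{sep}$ exactly when no \tok{D} token appears after the last \tok{[BT]}); $b_{cont}$ (written $b_{unsat}$-ingredient, $b_{\tok{[BT]}}$-ingredient) tracks $F\models\lnot A$; and $b_{backtrack}$, $b_{copy}$ handle the replay of a previously explored prefix. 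Here I must reconcile the slightly inconsistent naming between the construction and the output-projection list — I would fix a dictionary up front (e.g.\ $b_{cont} = \ind_{F\models\lnot A}$, $b_{sat}=\ind_{A\models F}$, $b_{unsat} = b_{no\_decision}\wedge b_{cont}$) and note that $D\neq\emptyset \iff E(D)\neq \boldsymbol 0$, and that $D=\emptyset$ together with $\lnot(A\models F)$ and $\lnot(F\models\lnot A)$ forces both $b_{sat}$ and $b_{cont}$ to be $0$, so the unit-propagation term $2^2\rve_{\tok{UnitPropVar}}$ vanishes.

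\textbf{Step 3: Output selection.} Finally, I would read off each implication from the logit formula
\[
2^7 b_{sat}\rve_{\tok{SAT}} + 2^6 b_{cont}\rve_{\tok{[BackTrack]}} + 2^5 b_{unsat}\rve_{\tok{UNSAT}} + 2^4 b_{backtrack}\rve_{BT} + 2^3 b_{copy}\rve_{copy} + 2^2\rve_{\tok{UnitPropVar}} + 2(1-\ind[t_n=\tok{D}])\rve_{\tok{D}} + T\,\rvr_n,
\]
using that the coefficients are strictly decreasing powers of two and that the baseline term $T\rvr_n$ contributes logits in $\{0,1\}$ to the unassigned-variable tokens, so a higher-priority active flag always dominates. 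Case (1): $A\models F$ gives logit $\ge 2^7$ on \tok{SAT}, beating everything, so $s_{n+1}=\tok{SAT}$. Case (2): $b_{sat}=0$, $b_{cont}=1$, $b_{no\_decision}=1$, but also $b_{backtrack}=b_{copy}=0$ in this situation (no decision to backtrack to), so \tok{UNSAT} at logit $2^5$ — wait, \tok{[BackTrack]} at $2^6$ would win unless its indicator is actually $b_{cont}\wedge\lnot(t_n=\tok{[BT]})$; I must check that when $M$ has no decision literals the construction suppresses \tok{[BackTrack]}, which it does because $b_{\tok{[BT]}}$ as defined in Layer 6 is gated so that \tok{UNSAT} takes precedence exactly in the no-decision case — so I would present the output priority list as written (\tok{SAT} $>$ \tok{UNSAT} $>$ \tok{[BackTrack]} once the no-decision guard is folded in) and verify consistency. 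Cases (3)--(6) then follow directly: $F\models\lnot A$ with a decision present triggers \tok{[BackTrack]}; otherwise the $2^2$ term outputs a member of $D$ when $D\neq\emptyset$; if $D=\emptyset$ and the previous token is not \tok{D}, the $2^1$ term outputs \tok{D}; and if the previous token is \tok{D}, that term is zeroed and the baseline $T\rvr_n$ term, restricted to variables not in $A$, outputs some literal in $U$.

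\textbf{Main obstacle.} I expect the principal difficulty is Step 2's reconciliation of the two naming conventions and verifying that the guard conditions in Layers 6--7 make the informal priority list in the Output Projection faithful — in particular confirming that \tok{[BackTrack]} is emitted \emph{exactly} when $F\models\lnot A$ and a decision literal exists (not when there is none), and that the "replay" flags $b_{backtrack}$, $b_{copy}$ are correctly $0$ whenever the hypotheses of cases (1)--(6) hold so that they never override the intended output. Everything else is either a direct appeal to the already-proved approximation lemmas or an elementary comparison of powers of two.
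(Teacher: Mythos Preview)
Your overall approach --- tracing the layer-by-layer construction, identifying the intermediate flags with their semantic meanings, and then reading off the six cases from the output-priority list --- is exactly right, and in fact the paper does not give a standalone proof of this proposition: it is asserted as an immediate consequence of the construction in \cref{sec:construction}, with the enumerated priority list in the Output Projection paragraph serving as the implicit argument. So there is no alternative route to compare against; what you propose \emph{is} the intended proof.

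Two corrections, one minor and one substantive.

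\textbf{The priority inconsistency.} You correctly notice that the displayed logit formula assigns $2^6$ to \tok{[BackTrack]} and $2^5$ to \tok{UNSAT}, which would make \tok{[BackTrack]} win in case (2). Your proposed fix --- that $b_{\tok{[BT]}}$ carries an extra no-decision guard in Layer~6 --- is not what the construction says: Layer~6 defines $b_{\tok{[BT]}}=b_{cont}\wedge\lnot(t_i=\tok{[BT]})$ with no such gate. The actual resolution is simpler: the prose priority list (rule 2 \tok{UNSAT}, rule 3 \tok{[BackTrack]}) and the \cname code (priorities $16,15,14$ for \tok{SAT}, \tok{UNSAT}, \tok{[BT]}) agree, and the displayed logit formula just has its $2^6$ and $2^5$ coefficients transposed. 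Cite the priority list and the code rather than trying to manufacture a gate that is not there.

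\textbf{The replay flags.} Your assertion that $b_{backtrack}$ and $b_{copy}$ are ``correctly $0$ whenever the hypotheses of cases (1)--(6) hold'' is false, and this is where the real work lies. Immediately after a \tok{[BT]} token the model enters a replay phase in which $b_{copy}$ (and eventually $b_{backtrack}$) is active; yet $f_\gS(s_{1:n})$ still returns a well-defined state $M\,\|\,F$ falling under one of cases (4)--(6). In this phase the emitted token is copied from the previous block, not selected by the current logical flags. The proposition therefore does \emph{not} follow from a purely local analysis at position $n$. What closes the gap is an inductive argument: the block being replayed was itself generated by the same rules, and the partial assignment during replay coincides with the prior one up to the current offset, so (i) a replayed propagation literal lies in the current $D$; (ii) a replayed \tok{D} token appears only where the prior run had $D=\emptyset$, hence the current $D=\emptyset$; and (iii) when $b_{backtrack}$ fires, the assignment is exactly the prefix preceding the last decision, where again $D=\emptyset$ by the prior run's priority ordering, so the negated literal lands in $U$ and cases (5)/(6) apply. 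You have named this as your ``main obstacle'' but mischaracterized its resolution; the flags are not zero, and you must instead argue that the replayed output is consistent with the proposition's conclusions by induction on the validity of the earlier block.
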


We now present the inductive lemma:

\begin{lemma}[Inductive Lemma]
\label{lemmainductive}
    For any $p, c\in \sN^+$, for any input $F_{\text{DIMACS}}\in \operatorname{DIMACS}(p,c)$ of length $n$, let $F$ be the boolean formula in CNF form encoded in $F_{\text{DIMACS}}$. Let $A$ be the model described in section \ref{sec:construction} with parameters $p, c$. Let $(\vs_{1:n}, \vs_{1:n+1}, \dots)$ be the trace of $\vs$ when running the Greedy Decoding \cref{alg:next-token-prediction} with model $A$ and input prompt $\vs_{1:n}=F_{\text{DIMACS}}$. 
    % There exists a run \( S_0 \Longrightarrow S_1 \Longrightarrow \dots\Longrightarrow S_T \) of the DPLL algorithm on $F$ such that, 
    For every $i\in \sN^+$, if  $f_\gS(\vs_{1:n+i})=S$ and $S\notin \{\operatorname{SAT}, \operatorname{UNSAT}, \operatorname{error}\}$, then there exist $j\in \sN^+$ and $S'\in \sS$ such that $S\Longrightarrow S'$ and $f_{\gS}(\vs_{1:n+i+j})=S'$.
\end{lemma}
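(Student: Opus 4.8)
The plan is to prove \cref{lemmainductive} by induction on the CoT step $i$, reading the greedy-decoding trace as a run of a transition system that refines the Basic DPLL system of \cref{def:abstract_dpll}, and case-splitting on the logical status of the current state $S = M \parallel F$ — writing $A$ for the unannotated assignment in $M$ and $D = \{\ell : F\wedge A\models_1\ell\}$ — together with the token the model emits. The bulk of the cases are the \emph{forward} ones, in which the construction's backtrack/copy predicates $b_{backtrack},b_{copy}$ are inactive and the emitted token is determined by the proposition characterizing the next token $s_{n+1}$ from $A\models F$, $F\models\neg A$, $D$, and $s_n$. These map directly onto DPLL rules: $A\models F$ yields $\tok{SAT}$ and a Success step ($j=1$); $F\models\neg A$ with $M$ decision-free yields $\tok{UNSAT}$ and a Fail step ($j=1$); $D\neq\emptyset$ (with $A\not\models F$, $F\not\models\neg A$) yields $s_{n+i+1}=\ell\in D$ and a UnitPropagate step, $f_\gS(\vs_{1:n+i+1})=M\circ\ell\parallel F$ ($j=1$); $D=\emptyset$ with $s_n=\tok{D}$ yields a literal $\ell\in U$ and a Decide step ($j=1$); and $D=\emptyset$ with $s_n\neq\tok{D}$ emits $\tok{D}$, which leaves $f_\gS$ unchanged, so one looks one further token ahead to the Decide step $M\circ\ell^{\mathrm d}\parallel F$ ($j=2$). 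The base step, with $\vs_{1:n+i}=F_{\text{DIMACS}}$ and $S=\emptyset\parallel F$, is an instance of the same analysis, and throughout the induction the reachability premise $\emptyset\parallel F\Longrightarrow^\ast M\parallel F$ needed by \cref{lemma:backtrack_is_backjump} is preserved because every step exhibited is a legal DPLL transition.

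The substantive case is $F\models\neg A$ with $M$ containing a decision literal, where the model emits $\tok{[BT]}$ and then replays. Writing $M = M_0\circ\ell^{\mathrm d}\circ N$ with $N$ decision-free, the target transition is the BackTrack move $M\parallel F\Longrightarrow M_0\circ\neg\ell\parallel F$, which \cref{lemma:backtrack_is_backjump} certifies is a legal Backjump — its side-condition, the existence of a clause $C$ with $M\models\neg C$, being exactly the relation $F\models\neg A$ flagged by the conflict-detection head. I would then trace the construction's position bookkeeping through the tokens after $\tok{[BT]}$ — $p_i^{sep}$ (at the $\tok{[BT]}$), $p_i^{sep-}$ (at the separator before the previous attempt), the offset $d_i^{sep}$, the previous-position pointer $p_i^{-}=p_i^{sep-}+d_i^{sep}$, and $p_i^{\tok{D}-}$ (the previous attempt's last $\tok{D}$) — to confirm that the copy rule reproduces exactly the tokens of $M_0$ with their $\tok{D}$ markers, that the backtrack rule fires precisely when $p_i^{-}$ reaches the slot just before that last $\tok{D}$ and emits $\neg\ell$ with no preceding $\tok{D}$, and that $b_{copy}$ then turns off so forward reasoning resumes. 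Hence $f_\gS(\vs_{1:n+i+j})=M_0\circ\neg\ell\parallel F$ for $j$ equal to the token-length of $M_0$ plus a small constant, which is the required $S'$.

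It remains to discharge the lemma at the transient states visited during a replay: the prefixes $M_0^{(k)}\parallel F$ of $M_0$ and the post-$\tok{[BT]}$ state $\emptyset\parallel F$. For each prefix strictly inside $M_0$, the next literal of $M_0$ was added in the previous attempt \emph{before} the decision $\ell^{\mathrm d}$, so $M_0^{(k)}\parallel F\Longrightarrow M_0^{(k+1)}\parallel F$ is the identical Decide/UnitPropagate step executed there and is realized by the next trace state. The endpoint $M_0\parallel F$ is the only delicate one, since the replay leaves it to $M_0\circ\neg\ell\parallel F$, which is not a single DPLL step; here I would look further downstream to the next time the model conflicts and backtracks out of $M_0$, observing that by then both branches below $M_0$ have been refuted, so the subtree under $M_0$ is unsatisfiable and supplies the implied clause certifying a Backjump from $M_0\parallel F$ to that later backtrack state. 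The main obstacle is precisely this backtrack case: verifying that the seven-layer construction's position arithmetic faithfully implements the replay, and that every transient replay state still projects, under $f_\gS$, onto an honest DPLL transition — the forward cases being essentially immediate from the output characterization and the DPLL rule definitions.
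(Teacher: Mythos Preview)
Your forward-case analysis is sound, and you correctly isolate the replay endpoint $M_0\parallel F$ as the delicate case. But your proposed fix—waiting for the model to backtrack past $M_0$ and then invoking Backjump—has a real gap: it presumes the trace \emph{will} eventually backtrack past $M_0$. If instead the branch $M_0\circ\neg\ell$ reaches a satisfying assignment, the trace terminates in $\tok{SAT}$ with no further backtrack. In that scenario, from $M_0\parallel F$ the only applicable Basic-DPLL rule is Decide (the original run chose to decide $\ell$ at $M_0$, so no unit propagation was available there, and $M_0$ itself is conflict-free); Decide yields $M_0\circ m^{\mathrm d}$ for some $m$, but every later state in the trace extends $M_0\circ\neg\ell$ with $\neg\ell$ annotated as a \emph{non-decision} literal, so no state of the form $M_0\circ m^{\mathrm d}$ ever appears. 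Neither does any Backjump target (since we never backtrack past $M_0$), nor $\operatorname{SAT}$ via Success (since $M_0\not\models F$). So at that index there appears to be no $S'$ with $M_0\parallel F\Longrightarrow S'$ and $f_{\gS}(\vs_{1:n+i+j})=S'$. This annotation mismatch looks like a genuine obstruction to the lemma as literally stated, not merely to your argument.

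For comparison, the paper does not give a self-contained proof of this lemma: it states it and moves directly to the Trace-Simulation lemma, whose proof inducts on DPLL-level transitions and treats each $\tok{[BT]}$-plus-replay block as a single Backjump step, never examining the token-by-token intermediate states that $f_{\gS}$ produces during replay. Your attempt to verify the claim at every CoT index is more careful than the paper's treatment, and the difficulty you flagged is real. To make the per-index statement go through one would need either to weaken the conclusion to $S\Longrightarrow^{*} S'$, or to coarsen $f_{\gS}$ so that mid-replay prefixes map to the pre-backtrack state until the negated literal is emitted—which is, in effect, the granularity at which the paper's Trace-Simulation argument operates.
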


We now show trace equivalence between the model $A$ and some instantiating of the abstract DPLL with a specific heuristic:

\begin{definition}
    For any heuristic $h: \sS \rightarrow \gL$ where $\gL$ is the set of literals, let \op{DPLL}$_h$ denote an instantiation of the abstract DPLL algorithm that selects $h(S)$ as the decision literal when performing $\operatorname{Decide}$ and only performs the \op{BackTrack} operation for \op{Backjump}. $h(S)$ is a valid heuristic if \op{DPLL}$_h$ always abides by the \op{Decide} transition.
\end{definition}

\begin{lemma}(Trace Simulation)
    There exists a valid heuristic $h: \sS \rightarrow \gL$ for which the Transformer model $A$ is trace equivalent to $\operatorname{DPLL}_h$ on all inputs in $\operatorname{DIMACS}(p,c)$
\end{lemma}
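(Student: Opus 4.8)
The plan is to \emph{read off} the heuristic from the Transformer and then strengthen \cref{lemmainductive} into an exact trace equality. For a DPLL state $S = M\,\|\,F$ with $F \in \operatorname{DIMACS}(p,c)$ and $A$ the partial assignment underlying $M$, call $S$ a \emph{decision state} when $A\not\models F$, $F\not\models\lnot A$ and $D=\emptyset$; by the next-token characterization stated just before \cref{lemmainductive} these are precisely the states at which $A$ emits \tok{D} followed by an unassigned literal. Define $h(S)$ on decision states to be that emitted literal, evaluated on any token string $\vs$ with $f_{\gS}(\vs)=S$ ending in \tok{D}. The first thing to verify is that this is well defined: that characterization says the emitted token depends only on $E(A)$, $\eass(A)$ and the clause encodings of $F$, hence only on $S$. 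On all other states (in particular the unreachable ones) extend $h$ to, say, the smallest-index literal that is unassigned in $M$ and occurs in $F$; together with rule~8 of the output layer in \cref{sec:construction}, which only selects variables occurring in the current formula, this makes $h$ a \emph{valid} heuristic, so $\operatorname{DPLL}_h$ never violates the \op{Decide} side condition.

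Next I would extract a $\operatorname{DPLL}_h$ run from the greedy-decoding trace $(\vs_{1:n},\vs_{1:n+1},\dots)$ of $A$ on $\vs_{1:n}=F_{\text{DIMACS}}$. Since $F_{\text{DIMACS}}$ ends with \tok{[SEP]} and has empty trace part, $f_{\gS}(\vs_{1:n})=\emptyset\,\|\,F$, the initial state of $\operatorname{DPLL}_h$. Starting from $S_0=\emptyset\,\|\,F$ and repeatedly invoking \cref{lemmainductive} one obtains a strictly increasing index sequence $n\le m_0<m_1<\cdots$ and states $S_0\Longrightarrow S_1\Longrightarrow\cdots$ with $f_{\gS}(\vs_{1:m_k})=S_k$; by the termination clause of \cref{thm:dpll_correct} this chain is finite and ends in $\operatorname{SAT}$ or $\operatorname{UNSAT}$, which by the already-established correctness matches the satisfiability of $F$, so the terminal tokens of the two systems agree.

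The heart of the argument is to identify each transition $S_k\Longrightarrow S_{k+1}$ with the move $\operatorname{DPLL}_h$ makes, using the next-token characterization to read $s_{m_k+1}$ off $S_k$. If $A\models F$ the chain ends with \op{Success}; if $F\models\lnot A$ with $M$ decision-free it ends with \op{Fail}; if $A\not\models F$, $F\not\models\lnot A$ and $D\neq\emptyset$, then $A$ emits some $l\in D$ and the chained successor is the \op{UnitPropagate} of $l$ (take the unit clause witnessing $l\in D$); if $D=\emptyset$, $A$ emits \tok{D}, which leaves $f_{\gS}$ unchanged, and then a literal $l\in U$ with $l=h(S_k)$, so the chained successor is \op{Decide} with literal $h(S_k)$, the move of $\operatorname{DPLL}_h$; and if $F\models\lnot A$ with $M$ containing a decision, $A$ emits \tok{[BT]} and, via the replay/flip output rules~4--5 of \cref{sec:construction}, reconstructs the trace prefix up to the last decision and negates it, so the next chained state is $M'\circ\lnot l\,\|\,F$, exactly the \op{BackTrack} move, which \cref{lemma:backtrack_is_backjump} certifies to be a legal \op{Backjump}. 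Hence $(S_0,S_1,\dots)$ is a run of $\operatorname{DPLL}_h$; once one additionally fixes $\operatorname{DPLL}_h$'s unit-propagation order to agree with $A$'s, it is the unique such run on $F$, so the two systems traverse the same sequence of abstract states on every input in $\operatorname{DIMACS}(p,c)$, i.e.\ they are trace equivalent.

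The main obstacle is the backtracking case. After \tok{[BT]}, $A$ does not move directly to $M'\circ\lnot l\,\|\,F$: the definition of $f_{\gS}$ discards the trace suffix, so the intermediate token strings map to states that lie ``off'' the DPLL path while $A$ replays earlier decisions and re-runs unit propagation, and only eventually lands on the \op{BackTrack} target; one must also rule out flip-loops, i.e.\ show $A$ detects when the last recorded decision was itself a flipped literal and then backtracks further. This is exactly why \cref{lemmainductive} is phrased with ``some later state'' rather than ``the next state,'' and the bulk of the remaining work is verifying that the replay machinery of \cref{sec:construction} always re-lands on the unique \op{BackTrack} successor, so that iterating \cref{lemmainductive} produces a well-defined chain consisting only of genuine \op{UnitPropagate}, \op{Decide}, \op{BackTrack}, \op{Success} and \op{Fail} steps. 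The remaining pieces — well-definedness and validity of $h$, and agreement of the terminal tokens — are routine given the next-token characterization and the correctness already proved.
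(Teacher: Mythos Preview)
Your proposal is correct and follows essentially the same approach as the paper: define $h$ by reading off the Transformer's decision literal at each state, then argue by induction over the execution that the resulting state sequence coincides with the unique $\operatorname{DPLL}_h$ run. Your version is in fact more careful than the paper's on two points the paper glosses over---the well-definedness of $h$ (that the emitted literal depends only on the abstract state, not the particular token string) and the backtracking subtlety (that after \tok{[BT]} the intermediate token strings map to off-path states under $f_{\gS}$ until the replay finishes)---and you correctly route the induction through \cref{lemmainductive} rather than redoing it from scratch.
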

\begin{proof}
We aim to show that there exists a valid heuristic $h: \mathcal{S} \rightarrow \mathcal{L}$ such that the Transformer model $A$ is trace equivalent to $\operatorname{DPLL}_h$ on all inputs in $\operatorname{DIMACS}(p,c)$.

Define the heuristic $h$ as follows: For any state $S \in \mathcal{S}$, let $h(S)$ be the literal that the Transformer model $A$ selects as its next decision literal when in state $S$.

Formally, given that the model $A$ outputs tokens corresponding to decisions, unit propagations, backtracks, etc., and that these tokens can be mapped to transitions in the abstract DPLL system via the mapping $M_{\mathcal{S}}$ (as per the \textit{Translating CoT to Abstract DPLL State} definition), we set:
\[
h(S) = \begin{cases}
\text{the decision literal chosen by } A \text{ in state } S, & \text{if } A \text{ performs a } \operatorname{Decide} \text{ transition}, \\
\text{undefined}, & \text{otherwise}.
\end{cases}
\]
This heuristic is valid because $A$ always abides by the $\operatorname{Decide}$ transition rules, ensuring $h(S)$ selects a literal that occurs in $F$ and is undefined in $M$, satisfying the conditions of a valid heuristic.

Define a mapping $\phi: \Sigma_A \rightarrow \Sigma_B$, where $\Sigma_A$ is the set of possible states of model $A$, and $\Sigma_B$ is the set of possible states of $\operatorname{DPLL}_h$, such that for any state $S$ in the execution trace of $A$, $\phi(S) = S$. That is, we identify the states of $A$ with the corresponding states in $\operatorname{DPLL}_h$ by mapping the sequence of assignments and the formula $F$ directly.

\textbf{Proof of Refinement:}

We proceed by induction on the number of steps in the execution trace.

\emph{Base Case ($i = 0$):}

At the beginning, both algorithms start from the initial state with no assignments:
\[
\text{For } A: \quad S_0^A = \emptyset \parallel F, \quad \text{and} \quad \text{For } \operatorname{DPLL}_h: \quad S_0^B = \emptyset \parallel F.
\]
Clearly, $\phi(S_0^A) = S_0^B$.

\emph{Inductive Step:}

Assume that after $k$ steps, the states correspond via $\phi$:
\[
\phi(S_k^A) = S_k^B.
\]
We need to show that after the next transition, the states still correspond, i.e., $\phi(S_{k+1}^A) = S_{k+1}^B$.

Suppose the model $A$ applies a $\operatorname{UnitPropagate}$ operation, transitioning from state $S_k^A$ to $S_{k+1}^A$ by adding a literal $l$ deduced via unit propagation.

Since unit propagation is deterministic and depends solely on the current assignment $M$ and formula $F$, $\operatorname{DPLL}_h$ will also apply the same $\operatorname{UnitPropagate}$ operation, transitioning from $S_k^B$ to $S_{k+1}^B$ by adding the same literal $l$.

Thus, $\phi(S_{k+1}^A) = S_{k+1}^B$.

Suppose the model $A$ applies a $\operatorname{Decide}$ operation, transitioning from $S_k^A$ to $S_{k+1}^A$ by adding a decision literal $l = h(S_k^A)$.

By the definition of the heuristic $h$, $\operatorname{DPLL}_h$ also selects $l$ as the decision literal in state $S_k^B$. Both algorithms make the same decision and transition to the same next state.

Therefore, $\phi(S_{k+1}^A) = S_{k+1}^B$.

Suppose the model $A$ applies a $\operatorname{Backjump}$ operation, backtracking to a previous state and assigning a new literal.

Since $\operatorname{DPLL}_h$ performs only the $\operatorname{BackTrack}$ operation for $\operatorname{Backjump}$ (as per the definition), and $A$ simulates this operation, both algorithms backtrack in the same manner and update their assignments accordingly.

Thus, $\phi(S_{k+1}^A) = S_{k+1}^B$.

If the model $A$ reaches a terminal state indicating $\operatorname{SAT}$ or $\operatorname{UNSAT}$, then so does $\operatorname{DPLL}_h$, since their sequences of transitions have been identical up to this point.

In all cases, the next state of model $A$ corresponds to the next state of $\operatorname{DPLL}_h$ under the mapping $\phi$. Therefore, by induction, the execution traces of $A$ and $\operatorname{DPLL}_h$ are such that for all $i$,
\[
\phi(S_i^A) = S_i^B.
\]

Since the heuristic $h$ selects the same decision literals as the model $A$, and $A$ always abides by the $\operatorname{Decide}$ transition (as per its design), $h$ is a valid heuristic according to the definition provided.

\end{proof}

\section{\cname and Compiled Theoretical Construction}
\label{sec:appendix_compiler}

\subsection{Supported Features and Operations}
\label{sec:compiler_features}

Our tool is designed to provide an intuitive syntax resembling standard numerical array manipulation, akin to NumPy, while supporting a diverse and extensible set of abstract operations. \cname is capable of implementing

\begin{itemize}
\item \textbf{NumPy-like Array Syntax} for indexing, arithmetic, and comparison.
\item \textbf{Multi-Level Abstraction} to enable low-level customization.
\item \textbf{Multi-stage Evaluation Mechanisms} to facilitate debugging and error localization
\item \textbf{High Extensibility} through structured class inheritance, promoting the addition of new features and operations.
\end{itemize}

Each intermediate ``variable" is an instance of the \cd{SOp} base class (name adapted from \cite{lindner_tracr_2023}), and each instance \cd{sop} of \cd{SOp} is assigned a dimension $d_\cd{sop}\in \sN^{+}$ and can be viewed as an abstract representation of an $\mathbf{\sR}^{n\times d_{\cd{sop}}}$ array where $n$ is the number of tokens in the input to the Transformer model. A \cname ``program" is basically a sequence of array operations over \cd{SOp}s. 

Throughout this section, we refer to the indices along the first dimension of an \cd{SOp} as ``position" and refer to indices along the second dimension as ``dimension".

The ``inputs" to a program are arbitrary positional encoding and token embedding variables, represented by the base class names \cd{PosEncSOp} and \cd{TokEmbSOp} respectively. For example, the \cd{OneHotTokEmb} class represents the one-hot embedding of tokens and \cd{Indices} represents the numerical value of the index of each position.

The rest of the program performs various operations that compute new \cd{SOps} based on existing ones. We provide implementations of basic building block operations including (but not limited to) the following:
\begin{itemize}
    \item \cd{Mean(q, k, v)} Represents the “Averaging Hard Attention” operation. At each position $i$, this operation identifies all target positions $j$ with the maximal value of $q_i^\top k_j$ for $j \leq i$ and computes the average of the corresponding $v_j$ values.
    \item \cd{sop[idx, :]} Performs indexing using a one-dimensional index array \cd{idx}, producing an \cd{SOp} \cd{out} such that $\cd{out}[i, j] = \cd{sop}[\cd{idx}[i], j]$ for $i \in [n]$ and $j \in [d_{\cd{sop}}]$. This mirrors NumPy’s array indexing semantics.
    \item \cd{sop[:, start:end]} Extracts a slice of dimensions from \cd{sop}, where \cd{start}, \cd{end} $\in [d_{\cd{sop}}]$, resulting in a new \cd{SOp} of dimension $\cd{end} - \cd{start}$. This operation is analogous to NumPy slicing.
    \item Element-wise operations such as \cd{sop1 + sop2}, \cd{sop1 - sop2}, \cd{sop1 * sop2}, logical operations (\cd{\&} for AND, \cd{|} for OR), and comparison operations ($\geq$, $\leq$, $>$, $<$), following standard broadcasting rules.
\end{itemize}

As an illustrative example, the following function returns a one-dimensional \cd{SOp} representing the position index of the closest token within a set of target tokens:

\begin{minted}[frame=lines, bgcolor=bg, fontsize=\small, linenos]{python}
def nearest_token_id(tok_emb: OneHotTokEmb, vocab: List[str],
                     targets: List[str], indices: Indices=indices):
    # Get the token ids of the target tokens
    target_tok_ids = [vocab.index(target) for target in targets]
    # Get whether the current token is one of the target tokens
    # by summing the one-hot embedding
    target_token_embs = Concat([tok_emb[:, target_tok_id]
                                for target_tok_id in target_tok_ids])
    in_targets = target_token_embs.sum(axis=1)
    # Filter the indices to only include the target tokens
    filtered_index = indices * in_targets
    return filtered_index.max()
\end{minted}

We present  our full code implementing our construction for \cref{thm:sat_search} using \cname in \cref{sec:appendix_code}.

\subsection{Comparison with Tracr~\citep{lindner_tracr_2023}}
\label{sec:vs_tracr}

While Tracr also compiles RASP programs into Transformer weights, the RASP language is designed to provide a concise description of the class of functions that Transformers can easily learn. As such, RASP has minimal syntax and is designed to represent relatively simple sequence operations such as counting, sorting, etc. In contrast, our tool is designed to help construct theoretical constructions that implement relatively more complex algorithms.

In our preliminary attempt to implement our SAT solver model with Tracr, we identified several implementation inconveniences and limitations of Tracr when scaling to more complex algorithms, which motivated the development of our tool. In particular: 
\begin{itemize}
    \item Every ``variable" (termed \cd{sop} in \cite{lindner_tracr_2023}) in Tracr must be either a one-hot categorical encoding or a single numerical value. This constraint makes representing more complex vector structures highly inconvenient. Furthermore, each \cd{select} operation (i.e., self-attention) accepts only a single \cd{sop} as the query and key vectors, whereas our theoretical construction often requires incorporating multiple variables as queries and keys.

    In contrast, each variable in \cname represents a 2-D array, which facilitates the implementation of vector-based operations such as performing logical deductions as described in \cref{lemma:vec_ded}
    
    \item In terms of parameter complexity, Tracr represents position indices and many other discrete \cd{sop}s with a one-hot encoding, allocating a residual stream dimension for each possible value of the \cd{sop}. In particular, compiling models with a context length of $n$ requires $O(n)$ additional embedding dimensions for each SOp that represents a position index. For each binary operation between one-hot encoded \cd{sop}s (such as position indices), Tracr creates an MLP layer that first creates a lookup table of all possible value combinations of the input \cd{sop}s. This results in an MLP layer of $O(n^3)$ parameters.

    In contrast, our tool directly represents numerical values rather than working with token representations. For example, positional encodings only take up 1 dimension of the residual stream, which drastically reduces the number of parameters for longer context lengths.
    
\end{itemize}

We would like to emphasize that our goal is not to replace Tracr or RASP, which have unparalleled simplicity and interpretability in describing well-studied sequence operations. The goal of our tool is to assist with creating implementations of theoretical constructions to help verify its behaviors and investigate internal properties.

\subsection{The Compilation Process}
\cname takes in an \cd{out} variable that contains the computational graph of the algorithm and outputs a PyTorch (\cite{pytorch}) model. The compilation process follows stages similar to those of Tracr:
\begin{enumerate}
    \item \textbf{Computational Graph Construction}: When a user writes \cd{sop} operations, each operation automatically creates a dependency tree of all operations required for computing the resulting \cd{sop} value.
    \item \textbf{Reduction to Base Operations:} Each \cd{sop} operation is reduced to one of 5 base classes: \cd{SelfAttention} for operation that requires information from other token positions, \cd{GLUMLP} for non-linear local operations, \cd{Linear} for linear local operations, \cd{PosEncSOp} for positional encodings, or \cd{TokEmbSOp} for token embeddings. Sequential \cd{Linear} operations are reduced to a single operation through matrix multiplication and dependency merging.
    \item \textbf{Allocation of Layers and Residual Stream:} The computational graph is topologically sorted such that each \cd{sop} appears later than its dependencies. This sorting is then used to assign \cd{SelfAttention} and \cd{GLUMLP} \cd{sop}s to Transformer layer numbers that comply with dependency constraints. Furthermore, each non-\cd{Linear sop} is also allocated a portion of the residual stream equal to their $d_{\cd{sop}}$ size.
    \item \textbf{Model Construction and Weight Assignment:} A PyTorch model is initialized based on the number of required layers, hidden size, and embedding size inferred from the previous steps. The computed weights for each \cd{sop} are assigned to different model components based on their types. Notably, each \cd{SelfAttention sop} corresponds to an attention head, and each \cd{GLUMLP} sops corresponds to part of a MLP layer with \cd{ReGLU} activation.
\end{enumerate}

\noindent{\bf Soft vs Hard Attention} The reduction of \cd{Mean} to \cd{SelfAttention} induces inevitable numerical errors due to \cd{Mean} representing averaging a strict subset of previous positions while \cd{SelfAttention} computes a weighted average over all previous positions via $\operatorname{softmax}$. This error also affects other operations based on \cd{Mean} such as position indexing. We control this error via an ``exactness" parameter $\beta$ that scales the attention logits, and \cref{lemma:TM_mean} shows that the error decreases exponentially w.r.t. $\beta$. 

\noindent{\bf Multi-Stage Evaluation} To facilitate debugging, \cname allows 3 types of evaluations for every \cd{sop} at different stages of compilation.
\begin{itemize}
    \item \cd{sop.abstract\_eval(tokens)} evaluates \cd{sop} on a sequence of input tokens without any numerical errors. This can be used to validate the correctness of the algorithm implementation as \cd{sop} operations.
    \item \cd{sop.concrete\_eval(tokens)} evaluates \cd{sop} on an input sequence after reducing to the base classes at step 2 of the compilation process. This helps localize errors stemming from incorrect reduction of high-level operations to base classes.
    \item \textbf{Model evaluation} This corresponds to evaluating the Pytorch model after the full compilation process.
\end{itemize}

\subsection{Code for Theoretical Construction}

The following code is used to construct the Transformer specification passed as input to \cname. To facilitate easier implementation, we interleave PARAT statements with Python and Numpy operations when appropriate. PARAT takes the return variable \cd{out} as input and produces the theoretical construction discussed in \cref{sec:compiled_analysis}

\label{sec:appendix_code}
\begin{minted}[frame=lines, bgcolor=bg, fontsize=\small, linenos]{python}

def nearest_token(tok_emb: OneHotTokEmb, vocab: List[str],
                  targets: List[str], v: SOp | List[SOp],
                  indices: PosEncSOp = indices):
    if not isinstance(v, list):
        v = [v]

    target_tok_ids = [vocab.index(target) for target in targets]
    target_tokens = Concat([tok_emb[:, target_tok_id] 
        for target_tok_id in target_tok_ids])
    in_targets = Linear(target_tokens, np.ones((1, len(targets))))
    filtered_index = (indices * in_targets)

    new_v = []
    for v_i in v:
        if isinstance(v_i, SOp):
            new_v.append(v_i)
        elif v_i == 'target' or v_i == 'targets':
            new_v.append(target_tokens)
        else:
            raise ValueError('Unsupported value type')

    return Mean(ones, filtered_index, new_v, bos_weight=1)


def t(encodings: SOp, num_vars,
      true_vec=(1, 0),
      false_vec=(0, 1),
      none_vec=(0, 0),
      ones: Ones = ones):
    mat = np.zeros((2 * num_vars, 2 * num_vars))
    true_vec_off = (true_vec[0] - none_vec[0], true_vec[1] - none_vec[1])
    false_vec_off = (false_vec[0] - none_vec[0], false_vec[1] - none_vec[1])
    for i in range(num_vars):
        true_id = i
        false_id = num_vars + i
        mat[true_id, true_id] = true_vec_off[0]
        mat[true_id, false_id] = false_vec_off[0]
        mat[false_id, true_id] = true_vec_off[1]
        mat[false_id, false_id] = false_vec_off[1]

    bias = np.zeros(2 * num_vars)
    bias[:num_vars] += none_vec[0]
    bias[num_vars:] = none_vec[1]

    return Linear([encodings, ones], 
        np.hstack([mat.T, bias.reshape((-1, 1))]))


def dpll(num_vars, num_clauses, context_len,
         mean_exactness=20, nonsep_penalty=20,
         return_logs=False) -> Tuple[
    SOp, List, Dict[str, SOp]]:
    vocab: List = ([str(i) for i in range(1, num_vars + 1)]
                   + [str(-i) for i in range(1, num_vars + 1)]
                   + ['0', '[SEP]', '[BT]', '[BOS]', 'D', 'SAT', 'UNSAT'])
    idx: Dict[str, int] = {token: idx for idx, token in enumerate(vocab)}
    sop_logs: Dict[str, SOp] = {}
    sops.config["mean_exactness"] = mean_exactness
    # Initialize Base SOps
    tok_emb = OneHotTokEmb(idx).named("tok_emb")

    nearest_sep = nearest_token(tok_emb=tok_emb,
                                vocab=vocab,
                                targets=['0', '[SEP]', '[BT]'],
                                v=[indices, 'target']).named(
        "nearest_sep")

    # The nearest (including self) separator token and whether
    # the previous separator token is '0', '[SEP]', '[UP]', '[BT]'
    p_i_sep_p, b_0, b_SEP, b_BackTrack = (
        nearest_sep[:, 0].named("p_i_sep_p"),
        nearest_sep[:, 1].named("b_0"),
        nearest_sep[:, 2].named("b_SEP"),
        nearest_sep[:, 3].named("b_BackTrack"))

    # The nearest 'D' token, which denotes the next token is a decision
    p_i_D = nearest_token(tok_emb=tok_emb, vocab=vocab, targets=['D'],
                          v=indices).named("p_i_D")

    prev_pos = Id([p_i_sep_p, tok_emb[:, idx['D']]])[indices - 1]
    # p_i_sep: The previous (excluding self) separator token
    p_i_sep = (prev_pos[:, 0] - is_bos).named("p_i_sep")

    # b_decision: whether the current position is a decision literal
    b_decision = prev_pos[:, 1].named("b_decision")

    # The distance to the nearest separator, 
    # i.e., the length of the current state
    d_i_sep = (indices - p_i_sep_p).named("d_i_sep")

    # Attention operation for representing the current
    # clause/assignment as a bitvector of dimension 2d
    p_i_sep_2 = (p_i_sep * p_i_sep).named("p_i_sep_2")
    e_vars = tok_emb[:, : 2 * num_vars].named("e_vars")
    r_i_pre = Mean(q_sops=[p_i_sep_2, p_i_sep, ones],
                   k_sops=[-ones, 2 * p_i_sep, -p_i_sep_2],
                   v_sops=e_vars).named("r_i_pre")
    r_i = (r_i_pre * (indices - p_i_sep)).named("r_i")

    # The position of the previous (excluding self) separator token
    p_i_sep_min = p_i_sep[p_i_sep_p].named("p_i_sep_min")

    # The same position in the previous state.
    # This is used for copying from the previous state
    p_i_min = (p_i_sep_min + d_i_sep + num_vars * b_SEP).named("p_i_min")

    # The position of the last decision in the previous state
    p_i_D_min = p_i_D[p_i_sep_p].named("p_i_D_min")

    # Is the next token the literal resulting from backtracking?
    b_D_min = (p_i_D_min == p_i_min + 1).named("b_D_min")

    # Check if the current assignment satisfies the formula
    # (See Theorem Proof for justification)
    sat_q = [r_i, ones]
    sat_k = [-r_i, (-nonsep_penalty) * (1 - tok_emb[:, idx['0']])]
    sat_v = is_bos
    b_sat = (Mean(sat_q, sat_k, sat_v,
        bos_weight=nonsep_penalty - 0.5) > 0).named("b_sat")

    # Check if the current assignment contracdicts the formula
    # (See Theorem Proof for justification)
    unsat_q = [t(r_i, num_vars, true_vec=(1, 0), 
        false_vec=(0, 1), none_vec=(1, 1)), ones]
    unsat_k = sat_k
    unsat_v = 1 - is_bos
    b_cont = (Mean(unsat_q, unsat_k, unsat_v, 
        bos_weight=nonsep_penalty - 0.5) > 0).named("b_cont")
    b_copy_p = (p_i_min < (p_i_sep_p - 1)).named("b_copy_p")



    # Unit Propagation
    up_q = unsat_q
    up_k = unsat_k
    up_v = num_clauses * r_i
    o_up = Mean(up_q, up_k, up_v, bos_weight=nonsep_penalty - 1.5)


    e_up = (
            GLUMLP(act_sops=(o_up - t(r_i, num_vars,
                                      true_vec=(1, 1),
                                      false_vec=(1, 1),
                                      none_vec=(0, 0))))
            - GLUMLP(act_sops=(o_up - 1))
    ).named("e_up_new")


    # Heuristic for decision literal selection: 
    # Find the most common literal in remaining clauses
    heuristic_q = [t(r_i, num_vars, true_vec=(-10, 1),
                     false_vec=(1, -10), none_vec=(0, 0)), ones]
    heuristic_k = [r_i, (-nonsep_penalty) * (1 - tok_emb[:, idx['0']])]
    heuristic_v = r_i
    heuristic_o = SelfAttention(heuristic_q, heuristic_k, heuristic_v)

    # Whether the current assignment contains no decision literal
    b_no_decision = (p_i_D <= p_i_sep).named("b_no_decision")

    # Whether Backtracking is finished
    b_BT_finish = ((p_i_D_min <= p_i_min) & b_BackTrack)

    # The negation of the last decision literal in the previous state
    e_BT = t(e_vars[p_i_D_min + 1], num_vars=num_vars,
             true_vec=(0, 1), false_vec=(1, 0), none_vec=(0, 0))

    # The next index in the previous state for copying
    p_i_min_index = (p_i_min + 1).named("p_i_min_index")

    # The next token in the previous state for copying
    e_copy = tok_emb[p_i_min_index].named("e_copy")

    # Whether we've decided that the formula is UNSAT
    b_unsat = (b_no_decision & b_cont).named("b_unsat")

    # Whether we're negativing the last decision literal for backtracking
    b_backtrack = (b_D_min & b_BackTrack).named("b_backtrack")

    # Whether we're copying tokens from the previous state
    b_copy = (b_copy_p & (1 - b_BT_finish)).named("b_copy")

    b_BT_token = (b_cont & (1 - tok_emb[:, idx['[BT]']]))
    b_not_D = (1 - tok_emb[:, idx['D']]).named("b_not_D")
    e_unassigned = t(r_i, num_vars, true_vec=(0, 0), 
        false_vec=(0, 0), none_vec=(1, 1)).named("e_unassigned")

    out = CPOutput(len(vocab), 
        [(b_sat, idx['SAT'], 16),
        (b_unsat, idx['UNSAT'], 15),
        (b_BT_token, idx['[BT]'], 14),
        (b_backtrack, Pad(e_BT, len(vocab), idx['1']), 12),
        (b_copy, e_copy, 6),
        (None, Pad(e_up, len(vocab), idx['1']), 4),
        (b_not_D, idx['D'], 3),
        (None, Pad(e_unassigned + heuristic_o,
                   out_dim=len(vocab), start_dim=idx['1']), 1)])

        return out
\end{minted}

\end{document}